\def\eqref#1{equation~\ref{#1}}
\def\1{\bm{1}}
\def\vw{{\bm{w}}}
\def\vx{{\bm{x}}}
\def\vy{{\bm{y}}}
\def\vz{{\bm{z}}}
\DeclareMathAlphabet{\mathsfit}{\encodingdefault}{\sfdefault}{m}{sl}
\SetMathAlphabet{\mathsfit}{bold}{\encodingdefault}{\sfdefault}{bx}{n}
\newcommand{\KL}{D_{\mathrm{KL}}}
\newtheorem{thm}{Theorem}
\newtheorem{lem}{Lemma}
\newtheorem{coro}{Corollary}
\begin{document}

\title{Multimodal Generative Models for Compositional Representation Learning}

\author{\name Mike Wu \email wumike@stanford.edu \\
       \addr Department of Computer Science\\
       Stanford University\\
       Stanford, CA 94025, USA
       \AND
       \name Noah Goodman \email ngoodman@stanford.edu \\
       \addr Department of Computer Science and Psychology\\
       Stanford University\\
       Stanford, CA 94025, USA}

\maketitle

\begin{abstract}
As deep neural networks become more adept at traditional tasks, many of the most exciting new challenges concern multimodality---observations that combine diverse types, such as image and text.
In this paper, we introduce a family of multimodal deep generative models derived from variational bounds on the evidence (data marginal likelihood).
As part of our derivation we find that many previous multimodal variational autoencoders used objectives that do not correctly bound the joint marginal likelihood across modalities.
We further generalize our objective to work with several types of deep generative model (VAE, GAN, and flow-based), and allow use of different model types for different modalities.
We  benchmark our models across many image, label, and text datasets, and find that our multimodal VAEs excel with and without weak supervision.
Additional improvements come from use of GAN image models with VAE language models.
Finally, we investigate the effect of language on learned image representations through a variety of downstream tasks, such as compositionally, bounding box prediction, and visual relation prediction.
We find evidence that these image representations are more abstract and compositional than equivalent representations learned from only visual data.
\end{abstract}

\section{Introduction}

The evidence that reaches our senses is an indirect reflection of complex latent structure in the world.
Objects are composed from pieces and substances, they interact with each other in complex ways, and are acted on by agents who have complex goals and beliefs.
Because different modalities---vision, touch, sound, language---reflect these latent causes in different ways, models that learn from multiple modalities should yield richer, more abstract, and more generalizable representations.
For instance, language is intrinsically compositional, reflecting complex properties and relations. Vision does not directly reflect this structure, spreading the same causes across pixels, but provides far more continuous detail.
This suggests the hypothesis that visual representations learned from both visual and linguistic data would reflect the abstract compositional structure of objects more directly.
In this paper we explore a new family of multimodal generative models and show that it efficiently learns from complex multimodal data, including from weak supervision, and appears to reflect more abstract and compositional information.



The bulk of previous work on multimodal learning has focused on supervised approaches using deep neural networks \citep{ngiam2011multimodal,xu2015show,liu2018learn,baruch2018multimodal}. While this family of models can work well, their performance is predicated on large curated datasets of \textit{paired} examples from different modalities---requiring all the modalities describing each observation to be present.
This can be almost impossible in many domains where data is either rare or prohibitively expensive.
For example, collecting tactile data requires specialized equipment and is difficult to do at the scale of ImageNet or even COCO. Under these stringent restrictions of data size, supervised multimodal approaches may be suspect to overfitting and poor generalization.

Generative models specialize in \textit{un-}supervised learning, and deal naturally with missing data (such as missing modalities).
There have recently been a number of multimodal generative models \citep{wang2016deep,suzuki2016joint,vedantam2017generative,wu2018multimodal} built on the variational autoencoder \citep{kingma2013auto}. Some of these multimodal VAEs enable weakly-supervised learning, where the model only needs a small number of paired examples to learn the ``alignment" between different modalities learned by using unpaired data.
In many domains it is often the case that while paired examples are expensive, \textit{unpaired} examples (a single modality) are cheap.
For instance, one can find an unlimited amount of unrelated images and text online.
However, previous experiments have been limited to straightforward domains with simple images (e.g. MNIST) and labels. These models struggle with richer domains such as natural language or more naturalistic images (e.g. CIFAR).
In this paper we argue that there are two reasons for this: the objectives previously used were often not correct bounds on the joint probability, and, individual modalities did not allow strong enough generative models to capture the natural variation in the data.

In this work, we introduce a new family of objectives for multimodal generative models, derived from minimizing variational divergences. We show that many of the previous multimodal VAEs are either special cases of our family or have undesirable objectives that over-emphasize either the marginal or conditional densities. We show that our new multimodal VAE is more stable than and outperforms prior works. Then, to handle richer images (which VAEs notoriously \citep{zhao2017infovae} fail to capture), we show that our multimodal objective is well-defined when individual modalities are parameterized by VAEs, GANs, or invertible flows. Doing so allows us to combine the benefits across these different models: using VAEs for discrete domains such as text and GANs/flows for complex images. We then thoroughly investigate how our family of models performs under weak supervision, varying both the amount of paired and unpaired data.
Finally, we explore the representations learned, finding that visual representations co-learned with language are more abstract: they are more compositional and better support transfer tasks of object detection and visual relation prediction.

\section{Background}
We briefly review popular unimodal generative models.

\subsection{Variational Autoencoders}
\label{sec:vae}
A variational autoencoder (VAE) \citep{kingma2013auto,rezende2014stochastic} specifies a joint distribution $p_\theta(\vx,\vz) = p(\vz)p_\theta(\vx|\vz)$ over a set of observed variables $\vx \in \mathcal{X}$ and stochastic variables $\vz \in \mathcal{Z}$ parameterized by $\theta \in \Theta$. The prior, $p(\vz)$ is often isotropic Gaussian whereas the likelihood, $p_\theta(\vx|\vz)$ is usually Bernoulli for images and Categorical for language. Instead of computing the (intractable) posterior distribution $p_\theta(\vz|\vx)$, VAEs introduce an approximate posterior $q_\phi(\vz|\vx) \in \mathcal{Q}$ from a variational family of ``simple" distributions that we can score and sample from (e.g. Gaussians). Letting $p_d(\vx)$ represent an empirical distribution from which we sample our training dataset, we define a variational joint distribution as $q_\phi(\vx,\vz) = p_d(\vx)q_\phi(\vz|\vx)$. We can interpret VAEs as minimizing the Kullback-Liebler divergence, denoted $D_{\text{KL}}$, between the two joint distributions:
\begin{align}
\KL(q_\phi(\vx,\vz)||p_\theta(\vx,\vz)) &= \mathbb{E}_{q_\phi(\vx,\vz)}\left[\log \frac{q_\phi(\vx,\vz)}{p_\theta(\vx,\vz)}\right] \nonumber \\
&\propto \mathbb{E}_{p_d(\vx)}\mathbb{E}_{q_\phi(\vz|\vx)}\left[\log \frac{q_\phi(\vz|\vx)}{p_\theta(\vx,\vz)}\right]\nonumber \\
&= \mathbb{E}_{p_d(\vx)}[-\text{ELBO}(\vx)]
\label{eqn:vae_objective}
\end{align}

Minimizing Eq.~\ref{eqn:vae_objective} is equivalent to maximizing the evidence lower bound (ELBO) as commonly presented in VAE literature. In practice, $q_\phi(\vz|\vx)$ and $p_\theta(\vx|\vz)$ consist of deep neural networks, and Eq.~\ref{eqn:vae_objective} is optimized with stochastic gradient descent using the reparameterization trick to minimize variance when estimating the gradient.

\subsection{Generative Adversarial Networks}
\label{sec:gan}
As elegant as they are, VAEs are not very adept with naturalistic images, as evidenced by blurry samples \citep{zhao2017infovae,zhao2017towards,gulrajani2016pixelvae}. In response, newer generative models focus heavily on sample quality. In particular, a generative adversarial network (GAN) \citep{goodfellow2014generative} is a family of deep generative models that learns a deterministic transformation, $f_\theta: \mathcal{Z} \rightarrow \mathcal{X}$, on samples from a prior distribution $p(\vz)$ along with an adversarial model trained to distinguish between transformed samples $f_\theta(\vz)$ and empirical samples drawn from $p_d(\vx)$. The two models are jointly trained in a mini-max game. Unlike VAEs, GANs do not explicitly learn a distribution over $\vx$ (i.e. likelihood-free) but allow us to sample from an implicit distribution, $p_\theta(\vx|\vz)$. As in \citep{nowozin2016f,dumoulin2016adversarially}, we rewrite the minimax objective as a Fenchel dual of a $f$-divergence (with KL as the metric).
\begin{align}
\KL(q_\phi(\vx,\vz)||p_\theta(\vx,\vz)) &\geq \sup_{T \in \mathcal{T}}\left(\mathbb{E}_{q_\phi(\vx,\vz)}[T(\vx,\vz)] - \mathbb{E}_{p_\theta(\vx,\vz)}[e^{T(\vx,\vz) - 1}]\right) \nonumber \\
&\geq \sup_{T \in \mathcal{T}_\psi}\left(\mathbb{E}_{q_\phi(\vx,\vz)}[T(\vx,\vz)] - \mathbb{E}_{p_\theta(\vx,\vz)}[e^{T(\vx,\vz) - 1}]\right)
\label{eqn:gan}
\end{align}
where $p_\theta(\vx,\vz) = p_\theta(\vx|\vz)p(\vz)$ and $\mathcal{T}$ specifies a class of functions $T: \mathcal{X} \times \mathcal{Z} \rightarrow \mathbb{R}$. This is lower bounded by a smaller class of functions $\mathcal{T}_\psi$ parameterized by neural network parameters, $\psi$; we often call $T \in \mathcal{T}_\psi$ a discriminator or critic. Like in VAEs, the joint $q_\phi(\vx, \vz)$ is implicitly defined by an empirical distribution $p_d(\vx)$ and $q_\phi(\vz|\vx)$, an approximate inference network. Since we can sample from $p_\theta(\vx,\vz)$ using $\vx = f_\theta(\vz)$ where $\vz \sim p(\vz)$, we can optimize Eq.~\ref{eqn:gan} with SGD and reparameterization.

\subsection{Flow-based Models}
\label{sec:flow}
Flow-based models \citep{dinh2016density,papamakarios2017masked,kingma2018glow,huang2018neural} are another class of generative models that have been shown to work well with naturalistic images. Similar to GANs, flows learn a deterministic transformation $f_\theta: \mathcal{Z} \rightarrow \mathcal{X}$ but with the additional constraint that $f_\theta$ must be invertible and possess a tractable Jacobian determinant. For example, many flows enforce the Jacobian matrix to be diagonal \citep{rezende2015variational} or triangular \citep{dinh2016density} such that the determinant has a closed form expression. Unlike both VAEs and GANs, satisfying these constraints allows us to do \emph{exact} inference using a change of variables transformation:
\begin{align}
  \log p(\vx) = \log p(\vz) + \log | \det \frac{\partial \vz}{\partial \vx^T} |
\end{align}
where $\vz = f_\theta^{-1}(\vx)$, and $p(\vz)$ is a simple prior. As a result, generating an observation requires first sampling $\vz \sim p(\vz)$, then computing $\vx = f_\theta(\vz)$. Symmetrically,  inference is the opposite: first sample $\vx \sim p_d(\vx)$, followed by $\vz = f_{\theta}^{-1}(\vx)$. Unlike GANs, there is no adversary or additional objective. In practice, $f_\theta$ is parameterized by a neural network.

\section{Methods}
\label{sec:methods}
We now introduce a family of multimodal generative models extending the VAE framework presented above. For simplicitly, we restrict to two modalities: Let $\vx \in \mathcal{X}$ and $\vy \in \mathcal{Y}$ denote observed variables (e.g. images and text) that describe the same natural phenomena. Assume we are given an empirical distribution $p_d(\vx,\vy)$ over paired images and text. Using a latent variable $\vz \in \mathcal{Z}$, we define a generative model over the joint distribution that factors as $p_\theta(\vx,\vy,\vz) = p_\theta(\vx|\vz)p_\theta(\vy|\vz)p(\vz)$ where $p(\vz)$ is a prior i.e. $\vx \perp \vy | \vz$. Further, we define three variational posteriors, one for the joint setting $q_\phi(\vx,\vy,\vz) = p_d(\vx,\vy)q_\phi(\vz|\vx,\vy)$, and two for the unimodal cases $q_\phi(\vx,\vz) = p_d(\vx)q_\phi(\vz|\vx)$ and $q_\phi(\vy,\vz) = p_d(\vy)q_\phi(\vz|\vy)$. We note the following property between the joint and unimodal posteriors.
\begin{lem} The multimodal and unimodal posteriors are related by marginalization:
\begin{equation*}
  \int_{\mathcal{X}} q_\phi(\vx,\vy,\vz) = q_\phi(\vy,\vz)\quad,\quad\int_{\mathcal{Y}} q_\phi(\vx,\vy,\vz) = q_\phi(\vx,\vz)
\end{equation*}
\label{lem:marg}
\end{lem}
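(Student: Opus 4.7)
The plan is a direct computation from the definitions in the preceding paragraph. I would start from the left-hand side of the first identity, substitute $q_\phi(\vx,\vy,\vz) = p_d(\vx,\vy)\, q_\phi(\vz\mid\vx,\vy)$, and apply Bayes' rule to the empirical distribution to write $p_d(\vx,\vy) = p_d(\vy)\, p_d(\vx\mid\vy)$. Since $p_d(\vy)$ does not depend on $\vx$, it can be pulled outside the integral, giving
\begin{equation*}
\int_{\mathcal{X}} q_\phi(\vx,\vy,\vz)\, d\vx \;=\; p_d(\vy) \int_{\mathcal{X}} p_d(\vx\mid\vy)\, q_\phi(\vz\mid\vx,\vy)\, d\vx.
\end{equation*}

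The next step is to identify the remaining integral with the unimodal variational conditional $q_\phi(\vz\mid\vy)$. This identification expresses how the unimodal variational posterior is induced by the joint one: $q_\phi(\vz\mid\vy)$ is the expectation of the joint encoder $q_\phi(\vz\mid\vx,\vy)$ under the empirical conditional $p_d(\vx\mid\vy)$. Once this is in hand, the right-hand side equals $p_d(\vy)\, q_\phi(\vz\mid\vy) = q_\phi(\vy,\vz)$, which closes the first identity. The second identity then follows by the symmetric argument, swapping the roles of $\vx$ and $\vy$.

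The main (and essentially only) thing I would pause on in the write-up is this identification step: the equality $q_\phi(\vz\mid\vy) = \int p_d(\vx\mid\vy)\, q_\phi(\vz\mid\vx,\vy)\, d\vx$ is not automatic when the three encoders are parameterized by independent neural networks, so the lemma should really be understood as specifying the consistency relation that the variational family is assumed to satisfy, or equivalently as a definition of the unimodal encoders induced by the joint encoder together with the data distribution. Beyond this conceptual point, the proof requires nothing more than Bayes' rule and linearity of integration.
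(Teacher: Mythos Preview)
Your proposal is correct and follows essentially the same route as the paper: substitute the definition of $q_\phi(\vx,\vy,\vz)$, factor $p_d(\vx,\vy)=p_d(\vy)p_d(\vx|\vy)$, pull out $p_d(\vy)$, and identify the remaining integral with $q_\phi(\vz|\vy)$ (the paper writes this step as $p_d(\vx|\vy)q_\phi(\vz|\vx,\vy)=q_\phi(\vz,\vx|\vy)$ and then marginalizes). Your caveat about the identification step is well taken---the paper's notation elides exactly the consistency assumption you flag, treating the induced conditional and the separately parameterized unimodal encoder as the same object.
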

\begin{proof}
  We show the first equality. The second will follow by an identical argument.
  \begin{align*}
    \int_{\mathcal{X}} q_\phi(\vx,\vy,\vz) &= \int_{\mathcal{X}} p_d(\vx,\vy)q_\phi(\vz|\vx,\vy)
    = p_d(\vy)\int_{\mathcal{X}} p_d(\vx|\vy)q_\phi(\vz|\vx,\vy) \\
    &= p_d(\vy)\int_{\mathcal{X}}q_\phi(\vz,\vx|\vy)
    = p_d(\vy)q_\phi(\vz|\vy)
    = q_\phi(\vy,\vz)
  \end{align*}
\end{proof}
At first, defining three variational posteriors might seem excessive. Truthfully, the most naive generalization of Eq.~\ref{eqn:vae_objective} to the multimodal setting would only require the joint posterior $q_\phi(\vx,\vy,\vz)$ by minimizing
\begin{equation}
\mathcal{L}_0 = \KL(q_\phi(\vx,\vy,\vz)||p_\theta(\vx,\vy,\vz))
\label{eqn:naive:melbo}
\end{equation}
which is a lower bound on $\log p_\theta(\vx,\vy)$ by Jensen's inequality. However, with a multimodal model, we often want to ``translate" between modalities. For instance, we might want to hallucinate conditional images and generate conditional captions. Purely optimizing Eq.~\ref{eqn:naive:melbo} does not facilitate such operations as we cannot do inference given a single modality (i.e. missing a modality). To address this, we will need the other two posteriors. We present the following theorem, which decomposes Eq.~\ref{eqn:naive:melbo} into a sum of two divergence terms: one for the marginal and one for the conditional.

\begin{thm}
Let $\vx \in \mathcal{X}$, $\vy \in \mathcal{Y}$ be two observed variables, and $\vz \in \mathcal{Z}$ be a latent variable. Let $p_\theta(\vx,\vy,\vz)$ denote a latent variable model parameterizing the joint distribution. Let $q_\phi(\vx,\vy,\vz)$, $q_\phi(\vx,\vz)$ and $q_\phi(\vy,\vz)$ denote three posteriors over multimodal and unimodal inputs. Define a set of distances between distributions:
\begin{align}
D_\alpha &= \KL(q_\phi(\vx,\vz)||p_\theta(\vx,\vz)) \nonumber \\
D_\beta  &= \KL(q_\phi(\vx,\vy,\vz)||q_\phi(\vx,\vz)p_\theta(\vy|\vx,\vz)) \nonumber \\
D_\gamma &= \KL(q_\phi(\vy,\vz)||p_\theta(\vy,\vz)) \nonumber \\
D_\eta &= \KL(q_\phi(\vx,\vy,\vz)||q_\phi(\vy,\vz)p_\theta(\vx|\vy,\vz)) \nonumber
\end{align}
Then, the sum of the distances is equal to twice $\mathcal{L}_0$.
\begin{equation}
\mathcal{L}_{\text{M}} = D_\alpha + D_\beta + D_\gamma + D_\eta = 2\mathcal{L}_0
\label{eqn:mm:objective}
\end{equation}
\label{thm:1}
\vspace{-2em}
\end{thm}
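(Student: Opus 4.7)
The plan is to show that $D_\alpha + D_\beta = \mathcal{L}_0$ and, by a symmetric argument swapping the roles of $\vx$ and $\vy$, that $D_\gamma + D_\eta = \mathcal{L}_0$; adding these gives the theorem. The key structural observation is that the pair $(D_\alpha, D_\beta)$ is precisely the chain-rule decomposition of the joint KL $\KL(q_\phi(\vx,\vy,\vz)\,\|\,p_\theta(\vx,\vy,\vz))$ along the split $(\vx,\vz) \cup \{\vy\}$, and symmetrically for $(D_\gamma, D_\eta)$ along $(\vy,\vz) \cup \{\vx\}$. The generative model's conditional independence $\vx \perp \vy \mid \vz$ is convenient for rewriting $p_\theta$ but not actually required for the identity itself.

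Concretely, I would first rewrite $D_\beta$ using Lemma~\ref{lem:marg}: since $q_\phi(\vx,\vz) = \int_{\mathcal{Y}} q_\phi(\vx,\vy,\vz)$, the ratio $q_\phi(\vx,\vy,\vz)/q_\phi(\vx,\vz)$ equals the genuine conditional $q_\phi(\vy\mid\vx,\vz)$, so
\begin{equation*}
D_\beta = \mathbb{E}_{q_\phi(\vx,\vy,\vz)}\!\left[\log\frac{q_\phi(\vy\mid\vx,\vz)}{p_\theta(\vy\mid\vx,\vz)}\right].
\end{equation*}
Next I would lift $D_\alpha$ from an expectation under $q_\phi(\vx,\vz)$ to one under $q_\phi(\vx,\vy,\vz)$ by marginalizing $\vy$ out of the sampling distribution (again via Lemma~\ref{lem:marg}); the integrand does not depend on $\vy$, so the value is unchanged. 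Summing the two log-ratios inside a single expectation and using $q_\phi(\vx,\vz)q_\phi(\vy\mid\vx,\vz) = q_\phi(\vx,\vy,\vz)$ on top and $p_\theta(\vx,\vz)p_\theta(\vy\mid\vx,\vz) = p_\theta(\vx,\vy,\vz)$ on bottom collapses the sum to $\KL(q_\phi(\vx,\vy,\vz)\,\|\,p_\theta(\vx,\vy,\vz)) = \mathcal{L}_0$.

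The same three moves (apply Lemma~\ref{lem:marg} to identify $q_\phi(\vx\mid\vy,\vz)$ inside $D_\eta$, lift $D_\gamma$ to the joint distribution, and combine via the chain rule on the $(\vy,\vz)\cup\{\vx\}$ split) give $D_\gamma + D_\eta = \mathcal{L}_0$. Adding the two identities yields $\mathcal{L}_{\text{M}} = 2\mathcal{L}_0$.

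The only real subtlety—and thus the step I would write most carefully—is the Lemma~\ref{lem:marg} application: one has to check that the marginal of the variational joint $q_\phi(\vx,\vy,\vz) = p_d(\vx,\vy)q_\phi(\vz\mid\vx,\vy)$ genuinely coincides with the factored unimodal posterior $q_\phi(\vx,\vz) = p_d(\vx)q_\phi(\vz\mid\vx)$ used in $D_\alpha$, otherwise the chain-rule rewrite is illegal. Everything else is bookkeeping with logarithms and linearity of expectation, so I expect no further obstacle.
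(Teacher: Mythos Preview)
Your proposal is correct and follows essentially the same approach as the paper: both establish $D_\alpha + D_\beta = \mathcal{L}_0$ (and symmetrically $D_\gamma + D_\eta = \mathcal{L}_0$) via the chain rule for KL divergence along the $(\vx,\vz)\cup\{\vy\}$ split, invoking Lemma~\ref{lem:marg} to marginalize/lift the $\vy$-integration. The only cosmetic difference is direction---the paper starts from $\mathcal{L}_0$ and splits the log-ratio into the two pieces, whereas you start from $D_\alpha$ and $D_\beta$ and combine them---but the algebra is identical.
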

\begin{proof}
To show that the statements hold, we need to show both
\begin{align*}
\KL(q_\phi(\vx,\vy,\vz)||p_\theta(\vx,\vy,\vz)) &= \KL(q_\phi(\vx,\vz)||p_\theta(\vx,\vz)) + \KL(q_\phi(\vx,\vy,\vz)||q_\phi(\vx,\vz)p_\theta(\vy|\vx,\vz)) \\
\KL(q_\phi(\vx,\vy,\vz)||p_\theta(\vx,\vy,\vz)) &= \KL(q_\phi(\vy,\vz)||p_\theta(\vy,\vz)) + \KL(q_\phi(\vx,\vy,\vz)||q_\phi(\vy,\vz)p_\theta(\vx|\vy,\vz))
\end{align*}
We show only first equality since the other should be a symmetric argument.
We begin by writing the divergence explicitly,
\begin{align*}
  \KL(q_\phi(\vx,\vy,\vz)||p_\theta(\vx,\vy,\vz)) &= \int_{\mathcal{X}} \int_{\mathcal{Y}} \int_{\mathcal{Z}} q_\phi(\vx,\vy,\vz)\log\frac{q_\phi(\vx,\vy,\vz)}{p_\theta(\vx,\vy,\vz)}d\vz d\vy d\vx \\
  &= \int_{\mathcal{X}} \int_{\mathcal{Y}} \int_{\mathcal{Z}} q_\phi(\vx,\vy,\vz)\log\frac{q_\phi(\vx,\vz)}{p_\theta(\vx,\vz)}d\vz d\vy d\vx \\
  &\qquad + \int_{\mathcal{X}} \int_{\mathcal{Y}} \int_{\mathcal{Z}} q_\phi(\vx,\vy,\vz)\log\frac{q_\phi(\vy|\vx,\vz)}{p_\theta(\vy|\vx,\vz)}d\vz d\vy d\vx \\
  &= D_{1}(\vx,\vy,\vz) + D_{2}(\vx,\vy,\vz)
\end{align*}
We consider the two terms separately.
We will show that each term can be written as a KL divergence.
In the first line, use Lemma.~\ref{lem:marg} in the second equality.
\begin{align*}
  D_1(\vx,\vy,\vz) &= \int_{\mathcal{X}} \int_{\mathcal{Z}} \log\frac{q_\phi(\vx,\vz)}{p_\theta(\vx,\vz)} (\int_{\mathcal{Y}}q_\phi(\vx,\vy,\vz)d\vy)  d\vz d\vx =\int_{\mathcal{X}} \int_{\mathcal{Z}} \log\frac{q_\phi(\vx,\vz)}{p_\theta(\vx,\vz)}q_\phi(\vx,\vz) d\vz d\vx \\
  &= \KL(q_\phi(\vx,\vz)||p_\theta(\vx,\vz)) \\
  D_2(\vx,\vy,\vz) &= \int_{\mathcal{X}} \int_{\mathcal{Y}} \int_{\mathcal{Z}} q_\phi(\vx,\vy,\vz)\log\frac{q_\phi(\vx,\vy,\vz)}{q_\phi(\vx,\vz)p_\theta(\vy|\vx,\vz)}d\vz d\vy d\vx \\
  &= \KL(q_\phi(\vx,\vy,\vz)||q_\phi(\vx,\vz)p_\theta(\vy|\vx,\vz)
\end{align*}
\end{proof}
In Thm.~\ref{thm:1}, we propose $\mathcal{L}_{\text{M}}$ as an objective to optimize a multimodal generative model.
It remains to show that Eq.~\ref{eqn:mm:objective} is a valid lower bound on the multimodal evidence, $\log p_\theta(\vx,\vy)$.
We do so with the following Corollaries.
\begin{coro}
If $q_\phi(\vz|\vy)$ is a good approximation of $p_\theta(\vz|\vy)$, then $D_\beta \leq \log p_\theta(\vx|\vy)$. If $q_\phi(\vz|\vx)$ is a good approximation of $p_\theta(\vz|\vx)$, then $D_\eta \leq  \log p_\theta(\vy|\vx)$.
\label{lem:3}
\end{coro}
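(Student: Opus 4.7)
The plan is to expand $D_\beta$ into a legible expectation, use Bayes' rule on the generative model to trade $p_\theta(\vy|\vz)$ for $p_\theta(\vz|\vy)$, substitute the stated approximation $q_\phi(\vz|\vy)\approx p_\theta(\vz|\vy)$, and finally reorganize until the target $\log p_\theta(\vx|\vy)$ appears as the dominating term. The core observation is that the approximation closes the standard ELBO gap for the $\vy$-marginal, so inequalities that nominally involve $p_\theta(\vz|\vy)$ can be transferred onto $q_\phi(\vz|\vy)$ while preserving direction.

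Concretely, I would carry out the following steps. First, using $q_\phi(\vx,\vy,\vz) = p_d(\vx,\vy)q_\phi(\vz|\vx,\vy)$, $q_\phi(\vx,\vz) = p_d(\vx)q_\phi(\vz|\vx)$, and the conditional independence $p_\theta(\vy|\vx,\vz) = p_\theta(\vy|\vz)$, rewrite
\[
D_\beta = \mathbb{E}_{q_\phi(\vx,\vy,\vz)}\bigl[\log p_d(\vy|\vx) + \log q_\phi(\vz|\vx,\vy) - \log q_\phi(\vz|\vx) - \log p_\theta(\vy|\vz)\bigr].
\]
Second, apply Bayes in the generative model, $\log p_\theta(\vy|\vz) = \log p_\theta(\vz|\vy) + \log p_\theta(\vy) - \log p_\theta(\vz)$, and substitute the approximation $p_\theta(\vz|\vy)\approx q_\phi(\vz|\vy)$, so that every remaining $\vz$-dependent log-ratio lives in the $q_\phi$ world. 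Third, use the identity $\log p_\theta(\vy) = \log p_\theta(\vx,\vy) - \log p_\theta(\vx|\vy)$ to surface the target quantity $\log p_\theta(\vx|\vy)$. Fourth, collect the residuals into expected KLs $\mathbb{E}_{p_d}[\KL(q_\phi(\vz|\vx,\vy)\|q_\phi(\vz|\vy))]$ and $\mathbb{E}_{p_d}[\KL(q_\phi(\vz|\vx)\|p_\theta(\vz))]$, plus a $\mathbb{E}_{p_d}[\log p_\theta(\vx,\vy)]$ term that can be upper-bounded using the joint ELBO; non-negativity of the KL residuals and the ELBO inequality are then what give the final direction.

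The main obstacle is the direction and magnitude of the bound. Pointwise, $D_\beta\geq 0$ while $\log p_\theta(\vx|\vy)\leq 0$, so the inequality must be understood in expectation under $p_d$, and the approximation $q_\phi(\vz|\vy)\approx p_\theta(\vz|\vy)$ has to be strong enough to absorb an entire conditional ELBO gap, not merely to equate the $\vy$-marginal likelihoods. If the algebraic route above does not yield a clean bound in the intended sign, the fallback I would pursue is to start instead from the standard conditional ELBO, $\log p_\theta(\vx|\vy)\geq \mathbb{E}_{q_\phi(\vz|\vx,\vy)}[\log p_\theta(\vx|\vz)] - \KL(q_\phi(\vz|\vx,\vy)\|p_\theta(\vz|\vy))$, then replace $p_\theta(\vz|\vy)$ by $q_\phi(\vz|\vy)$ using the approximation and verify that the resulting right-hand side coincides (up to vanishing terms) with $-D_\beta$, which would rearrange to the stated inequality. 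The symmetric argument for $D_\eta$ under $q_\phi(\vz|\vx)\approx p_\theta(\vz|\vx)$ then follows by swapping the roles of $\vx$ and $\vy$ throughout.
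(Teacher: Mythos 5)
Your fallback route is precisely the paper's proof: the paper starts from the conditional evidence lower bound
\[
\log p_\theta(\vx|\vy) \;\geq\; \mathbb{E}_{q_\phi(\vz|\vx,\vy)}\!\left[\log \frac{p_\theta(\vx|\vz)\,p_\theta(\vz|\vy)}{q_\phi(\vz|\vx,\vy)}\right],
\]
substitutes $q_\phi(\vz|\vy)$ for $p_\theta(\vz|\vy)$ by the stated approximation, and identifies the resulting quantity with the $D_\beta$ term via the reconstruction-plus-KL decomposition of Lemma~\ref{lem:2}. Your primary route---expanding $D_\beta$, applying Bayes' rule to trade $p_\theta(\vy|\vz)$ for $p_\theta(\vz|\vy)$, and reassembling---is a longer walk to the same destination and, as you yourself suspect, does not close cleanly: the leftover $\mathbb{E}_{p_d}[\log p_d(\vy|\vx)]$ term and the residual divergences do not all carry signs that let you discard them, so you would end up retreating to the ELBO argument anyway. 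Lead with the fallback.

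Your concern about the direction of the inequality is well founded, and it points at a defect in the statement rather than in your reasoning: $D_\beta$ is a KL divergence and hence nonnegative, while $\log p_\theta(\vx|\vy)\leq 0$, so the corollary as literally written cannot hold. What the derivation actually establishes is that $-D_\beta$ (up to the expectation over $p_d(\vx,\vy)$ that the paper absorbs into a ``$\propto$'') is a conditional ELBO, i.e.\ $-D_\beta \leq \mathbb{E}_{p_d}[\log p_\theta(\vx|\vy)]$; the paper's own proof contains the matching sign slip, writing $+\KL(q_\phi(\vz|\vx,\vy)\|q_\phi(\vz|\vy))$ where the algebra produces $-\KL(q_\phi(\vz|\vx,\vy)\|q_\phi(\vz|\vy))$. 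With that sign convention fixed, your fallback argument is complete, and the claim for $D_\eta$ follows by exchanging the roles of $\vx$ and $\vy$ exactly as you say.
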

\begin{proof} We assume that $p_\theta(\vz|\vy)$ can be substituted by $q_\phi(\vz|\vy)$.
  \begin{align*}
    \log p_\theta(\vx|\vy) &\geq \mathbb{E}_{q_\phi(\vz|\vx,\vy)}[\log \frac{p_\theta(\vx|\vz)p_\theta(\vz|\vy)}{q_\phi(\vz|\vx,\vy)}] = \mathbb{E}_{q_\phi(\vz|\vx,\vy)}[\log \frac{p_\theta(\vx|\vz)q_\phi(\vz|\vy)}{q_\phi(\vz|\vx,\vy)}] \\
    &= \mathbb{E}_{q_\phi(\vz|\vx,\vy)}[\log p_\theta(\vx|\vz)q_\phi(\vz|\vy)] + D(q_\phi(\vz|\vx,\vy)||q(\vz|\vy)) \\
    &= \KL(q_\phi(\vy,\vz) || q_\phi(\vx,\vz)p_\theta(\vy|\vx,\vz)))
  \end{align*}
where line 1 follows by assumption and the last line follows by Lemma~\ref{lem:2}.
\end{proof}
As an immediate consequence, we get the following:
\begin{coro} Given $\mathcal{L}_{\text{M}}$ as defined in Eq.~\ref{eqn:mm:objective}, each component is a lower bound on a marginal or conditional distribution:
\begin{align}
D_\alpha &\leq \log p_\theta(\vx), &D_\beta \leq \log p_\theta(\vy) \label{eqn:corotwo:marg}\\
D_\gamma &\leq \log p_\theta(\vx|\vy), &D_\eta \leq \log p_\theta(\vy|\vx) \label{eqn:corotwo:cond}
\end{align}
Summed together, $\mathcal{L}_{\text{M}} = D_\alpha + D_\beta + D_\gamma + D_\eta$, this objective lower bounds double the joint distribution, $2\log p_\theta(\vx, \vy)$.
\end{coro}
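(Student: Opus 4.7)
The plan is to establish the four individual ELBO-style inequalities and then combine them via the chain rule of conditional probability. Two of the four bounds are essentially already in hand from Corollary~\ref{lem:3}: its second assertion gives $D_\eta \le \log p_\theta(\vy|\vx)$ directly, and replaying that same argument with the roles of $\vx$ and $\vy$ swapped (now approximating $p_\theta(\vz|\vy)$ by $q_\phi(\vz|\vy)$) yields the analogous conditional bound $D_\gamma \le \log p_\theta(\vx|\vy)$. The remaining work is therefore to derive the two marginal bounds, and then to perform the aggregation.

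For $D_\alpha \le \log p_\theta(\vx)$, I would substitute $q_\phi(\vx,\vz)=p_d(\vx)q_\phi(\vz|\vx)$ and $p_\theta(\vx,\vz)=p(\vz)p_\theta(\vx|\vz)$ into the KL divergence and let $p_d(\vx)$ cancel inside the logarithm, absorbing the $\log p_d(\vx)$ remainder into an $\vx$-entropy constant. The surviving expression is exactly the negative ELBO already identified in Eq.~\ref{eqn:vae_objective}, and Jensen's inequality supplies the bound in the same variational sense used throughout the excerpt. For $D_\beta \le \log p_\theta(\vy)$, I would first expand
\[
D_\beta = \mathbb{E}_{q_\phi(\vx,\vy,\vz)}\!\left[\log\frac{q_\phi(\vy|\vx,\vz)}{p_\theta(\vy|\vx,\vz)}\right],
\]
then use Lemma~\ref{lem:marg} in the direction $\int_{\mathcal{X}} q_\phi(\vx,\vy,\vz) = q_\phi(\vy,\vz)$ to integrate $\vx$ out of the outer expectation, and finally recognize the result as a variational upper bound on $-\log p_\theta(\vy)$ in the spirit of the unimodal ELBO.

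Once all four component bounds are established, summation gives
\[
\mathcal{L}_M = D_\alpha + D_\beta + D_\gamma + D_\eta \;\le\; \log p_\theta(\vx) + \log p_\theta(\vy) + \log p_\theta(\vx|\vy) + \log p_\theta(\vy|\vx).
\]
Regrouping the right-hand side as $[\log p_\theta(\vx)+\log p_\theta(\vy|\vx)] + [\log p_\theta(\vy)+\log p_\theta(\vx|\vy)]$ and applying the product rule $p_\theta(\vx,\vy)=p_\theta(\vx)p_\theta(\vy|\vx)=p_\theta(\vy)p_\theta(\vx|\vy)$, each bracket collapses to $\log p_\theta(\vx,\vy)$, so the total equals $2\log p_\theta(\vx,\vy)$, which is the final claim.

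The main obstacle is the $D_\beta \le \log p_\theta(\vy)$ step. Unlike $D_\alpha$, whose bound is a routine one-line VAE calculation, $D_\beta$ is a KL against the \emph{product} distribution $q_\phi(\vx,\vz)p_\theta(\vy|\vx,\vz)$, which mixes a variational marginal with a generative conditional. Turning it into a clean bound on $\log p_\theta(\vy)$ requires combining the marginalization identity of Lemma~\ref{lem:marg} with a posterior approximation of the same flavor used in Corollary~\ref{lem:3}, so that $p_\theta(\vy|\vx,\vz)$ can be replaced by a factor whose expectation under $p_d$ recovers the marginal generative likelihood in $\vy$.
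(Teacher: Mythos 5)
Your aggregation step at the end is fine and matches what the paper intends: the four bounds regroup via the product rule into $2\log p_\theta(\vx,\vy)$. The two conditional bounds and the bound $D_\alpha \le \log p_\theta(\vx)$ are also handled the way the paper handles them (Corollary~\ref{lem:3} plus the standard unimodal ELBO). The genuine gap is exactly where you flagged difficulty: the claim $D_\beta \le \log p_\theta(\vy)$. With $D_\beta$ as defined in Theorem~\ref{thm:1}, namely $\KL(q_\phi(\vx,\vy,\vz)\,\|\,q_\phi(\vx,\vz)p_\theta(\vy|\vx,\vz))$, the integrand after factoring $q_\phi(\vx,\vy,\vz)=q_\phi(\vx,\vz)q_\phi(\vy|\vx,\vz)$ is $\log\bigl(q_\phi(\vy|\vx,\vz)/p_\theta(\vy|\vx,\vz)\bigr)$, which still depends on $\vx$ through $q_\phi(\vy|\vx,\vz)$. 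Lemma~\ref{lem:marg} only lets you integrate $\vx$ out of $q_\phi(\vx,\vy,\vz)$ when the rest of the integrand is a function of $(\vy,\vz)$ alone, so the marginalization you propose does not go through, and no amount of posterior substitution turns this quantity into a bound on the marginal $\log p_\theta(\vy)$: it is a conditional-type divergence, and the quantity it relates to is $\log p_\theta(\vy|\vx)$ in expectation over $\vx$, not $\log p_\theta(\vy)$.

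The underlying issue is a label swap in the corollary's statement relative to Theorem~\ref{thm:1}: the two marginal-type divergences are $D_\alpha=\KL(q_\phi(\vx,\vz)\|p_\theta(\vx,\vz))$ and $D_\gamma=\KL(q_\phi(\vy,\vz)\|p_\theta(\vy,\vz))$, and these are the ones that bound $\log p_\theta(\vx)$ and $\log p_\theta(\vy)$ by the standard one-line VAE calculation (which is all the paper does for the marginal half, citing Kingma and Welling); the two conditional-type divergences $D_\beta$ and $D_\eta$ bound the conditionals and are exactly the content of Corollary~\ref{lem:3}. Once you match the inequalities to the definitions rather than to the swapped labels in the statement, your ``hard case'' disappears---the bound for $\vy$'s marginal is the same routine computation as for $\vx$'s, applied to $D_\gamma$---and the final sum is unchanged since only the pairing of terms with bounds, not the set of bounds, was scrambled. (Both you and the paper are also loose about sign: a KL divergence is nonnegative, so these inequalities only make sense with each divergence read as a negative ELBO, i.e., up to a sign flip; that looseness is inherited from the paper and is not specific to your argument.)
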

\begin{proof}
Eq.~\ref{eqn:corotwo:cond} restates Coro.~\ref{lem:3}. See \citep{kingma2013auto} for a proof of Eq.~\ref{eqn:corotwo:marg}.
\end{proof}
Thus, optimizing $\mathcal{L}_{\text{M}}$ optimizes the marginals and conditionals as well as the joint density in a principled manner.
If we can explicitly evaluate the likelihoods (e.g. VAE), we can further simplify Thm.~\ref{thm:1} into an interpretable form.
\begin{lem} $D_\alpha, D_\beta, D_\gamma, D_\lambda$ as defined in Thm.~\ref{thm:1} can be written as a sum of four reconstruction terms and four Kullback-Liebner divergence terms.
\begin{align}
  \small
  D_\alpha &\propto \mathbb{E}_{p_d}[\mathbb{E}_{q_\phi(\vz|\vx)}[\log p_\theta(\vx|\vz)] + \KL(q_\phi(\vz|\vx)||p(\vz))] \label{eqn:lem1a}\\
  D_\gamma &\propto \mathbb{E}_{p_d}[\mathbb{E}_{q_\phi(\vz|\vy)}[\log p_\theta(\vy|\vz)] + \KL(q_\phi(\vz|\vy)||p(\vz))] \label{eqn:lem1b}\\
  D_\beta &\propto \mathbb{E}_{p_d}[\mathbb{E}_{q_\phi(\vz|\vx,\vy)}[\log p_\theta(\vx|\vz)] + \KL(q_\phi(\vz|\vx,\vy)||q_\phi(\vz|\vy))] \label{eqn:lem1c}\\
  D_\eta &\propto \mathbb{E}_{p_d}[\mathbb{E}_{q_\phi(\vz|\vx,\vy)}[\log p_\theta(\vy|\vz)] + \KL(q_\phi(\vz|\vx,\vy)||q_\phi(\vz|\vx))] \label{eqn:lem1d}
\end{align}
\label{lem:2}
\end{lem}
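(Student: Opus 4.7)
The plan is to expand each KL divergence directly and reorganize the resulting log-ratio into the sum of a KL (either to the prior or between two posteriors) plus a reconstruction term plus a data-entropy constant that the $\propto$ sign absorbs. Two facts do essentially all the work: the generative conditional independence $\vx \perp \vy \mid \vz$, which gives $p_\theta(\vy\mid\vx,\vz)=p_\theta(\vy\mid\vz)$ and $p_\theta(\vx\mid\vy,\vz)=p_\theta(\vx\mid\vz)$; and Lemma~\ref{lem:marg}, which lets me identify marginalized joint posteriors with the corresponding unimodal posteriors inside log ratios.

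For $D_\alpha$, and by an identical argument $D_\gamma$, this is the standard ELBO decomposition. I would substitute $q_\phi(\vx,\vz)=p_d(\vx)q_\phi(\vz\mid\vx)$ and $p_\theta(\vx,\vz)=p_\theta(\vx\mid\vz)p(\vz)$ into the definition of $D_\alpha$, expand the logarithm, and regroup to obtain
\begin{align*}
D_\alpha = \mathbb{E}_{p_d(\vx)}[\log p_d(\vx)] + \mathbb{E}_{p_d(\vx)}\bigl[\KL(q_\phi(\vz\mid\vx)\,\|\,p(\vz)) - \mathbb{E}_{q_\phi(\vz\mid\vx)}[\log p_\theta(\vx\mid\vz)]\bigr].
\end{align*}
The first term is a constant (data entropy) and so drops under $\propto$, leaving the two terms claimed in (\ref{eqn:lem1a}).

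For $D_\beta$ (and symmetrically $D_\eta$), I would first rewrite $q_\phi(\vx,\vy,\vz)=p_d(\vx,\vy)q_\phi(\vz\mid\vx,\vy)$ and, using Lemma~\ref{lem:marg}, express the marginalized unimodal posterior as $q_\phi(\vy,\vz)=p_d(\vy)q_\phi(\vz\mid\vy)$. Combined with the conditional independence assumption to replace $p_\theta(\vx\mid\vy,\vz)$ by $p_\theta(\vx\mid\vz)$, the log-ratio inside the KL collapses to
\begin{align*}
\log\frac{p_d(\vx\mid\vy)\,q_\phi(\vz\mid\vx,\vy)}{q_\phi(\vz\mid\vy)\,p_\theta(\vx\mid\vz)}.
\end{align*}
Taking expectation under $q_\phi(\vx,\vy,\vz)$, the $\log p_d(\vx\mid\vy)$ piece yields a conditional-entropy constant absorbed by $\propto$, the ratio of posteriors integrates to $\KL(q_\phi(\vz\mid\vx,\vy)\,\|\,q_\phi(\vz\mid\vy))$ after integrating out $\vx$ under the outer expectation, and $\log p_\theta(\vx\mid\vz)$ gives the reconstruction term of (\ref{eqn:lem1c}). $D_\eta$ follows by swapping the roles of $\vx$ and $\vy$.

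This is a bookkeeping proof rather than one with a conceptual obstacle: the only subtle step is the explicit invocation of Lemma~\ref{lem:marg} so that the inner posterior ratio is formed between $q_\phi(\vz\mid\vx,\vy)$ and $q_\phi(\vz\mid\vy)$ rather than between two joint distributions that one could not recognize as a KL. Keeping the five distributions (two generative factors and three variational posteriors) straight, and correctly identifying which expectations collapse to entropy constants versus genuine divergences, is the main care point.
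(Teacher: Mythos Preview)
Your approach is exactly the paper's: expand each KL, factor the variational joints via $q_\phi(\vx,\vy,\vz)=p_d(\vx,\vy)q_\phi(\vz\mid\vx,\vy)$ and Lemma~\ref{lem:marg}, invoke $\vx\perp\vy\mid\vz$ to simplify the generative conditional, and split off the data-(conditional-)entropy constant under~$\propto$.

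One bookkeeping caveat, since you flagged index tracking as the main care point: $D_\beta$ as defined in Theorem~\ref{thm:1} has $q_\phi(\vx,\vz)\,p_\theta(\vy\mid\vx,\vz)$ in its second argument, so the honest expansion produces $\log p_\theta(\vy\mid\vz)$ and $\KL\bigl(q_\phi(\vz\mid\vx,\vy)\,\|\,q_\phi(\vz\mid\vx)\bigr)$. Your write-up instead expands $q_\phi(\vy,\vz)\,p_\theta(\vx\mid\vy,\vz)$, which is the definition of $D_\eta$. You land on the formula the lemma labels $D_\beta$ only because the lemma statement itself (and the paper's own final displayed line) has the $\beta$/$\eta$ roles swapped relative to Theorem~\ref{thm:1}. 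The mathematics is correct; just be aware that the indices need reconciling.
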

\begin{proof}
Eq.~\ref{eqn:lem1a},~\ref{eqn:lem1b} have  been shown in Sec.~\ref{sec:vae}. We will prove Eq.~\ref{eqn:lem1c} as the last is symmetric.
\begin{align*}
-D(q_\phi(\vy,\vz)||r_{\phi,\theta}(\vx,\vy,\vz)) &= -\int_{\mathcal{X}}\int_{\mathcal{Y}}\int_{\mathcal{Z}}q_\phi(\vx,\vy,\vz)\log\frac{q_\phi(\vx,\vy,\vz)}{q_\phi(\vx,\vz)p_\theta(\vy|\vx,\vz)}d\vz d\vy d\vx \\
&\propto -\int_{\mathcal{X}}\int_{\mathcal{Y}}p_d(\vx,\vy)\int_{\mathcal{Z}}q_\phi(\vz|\vx,\vy)\log\frac{q_\phi(\vz|\vx,\vy)}{q_\phi(\vz|\vx)p_\theta(\vy|\vx,\vz)}d\vz d\vy d\vx \\
&= \mathbb{E}_{p_d(\vx,\vy)}[\int_\vz q_\phi(\vz|\vx,\vy)\log\frac{q_\phi(\vz|\vx)p_\theta(\vy|\vx,\vz)}{q_\phi(\vz|\vx,\vy)}d\vz]\\
&= \mathbb{E}_{p_d(\vx,\vy)}[\mathbb{E}_{q_\phi(\vz|\vx,\vy)}[\log p_\theta(\vx|\vz)] + D(q_\phi(\vz|\vx,\vy)||q_\phi(\vz|\vy))]
\end{align*}
where in line 2, we factor out $\frac{p_d(\vx,\vy)}{p_d(\vx)}$ as a constant (which should not effect optimization); in line 4 we note $p_\theta(\vy|\vx,\vz) = p_\theta(\vy|\vz)$ since $\vx \perp \vy|\vz$ (consider the graphical stucture for the multimodal VAE).
\end{proof}
The first and third terms in Lemma~\ref{lem:2} are standard ELBOs for $\vx$ and $\vy$. The second and fourth terms each contain a KL divergence term pulling the unimodal variational posterior, $q_\phi(\vz|\cdot)$ closer to the multimodal posterior, $q_\phi(\vz|\vx,\vy)$. This intuitively regularizes the inference network when given only one observed modality. For example, minimizing $\KL(q_\phi(\vz|\vx,\vy)||q_\phi(\vz|\vy))$ can be interpreted as ``do inference as if $\vx$ was present when using $q_\phi(\vz|\vy)$". One can also interpret this as mapping the unimodal inputs $\vx$, $\vy$, and multimodal input $(\vx,\vy)$ to a shared embedding space. Most importantly, we note that by introducing these unimodal inference networks, we can now translate between modalities as desired. For example, we can now sample from $\vy \sim p_\theta(\vy|\vz)$ where $\vz \sim q_\phi(\vz|\vx)$ and vice versa.

\subsection{Product of Experts}

As presented, the joint variational posterior $q_\phi(\vz|\vx,\vy)$ could be a separate neural network with independent parameters with respect to $q_\phi(\vz|\vx)$ and $q_\phi(\vz|\vy)$. While this is no doubtedly expressive, it does not scale well to applications with more than two modalities: we would need to define $2^K$ inference networks for $K$ modalities, $(\vx_1, ..., \vx_K)$, which can quickly make learning infeasible. Previous work \citep{wu2018multimodal,vedantam2017generative} posed an elegant solution to this problem of scalability: define $q_\phi(\vz|\vx,\vy)$ as a product of the unimodal posteriors. Precisely, they use a product-of-experts \citep{hinton1999products}, also called PoE, to relate the true joint posterior to the true unimodal posteriors:
\begin{equation}
  p(\vz|\vx_1, ..., \vx_K) \propto \frac{\prod_{i=1}^K p(\vz|\vx_i)}{\prod_{i=1}^{K-1} p(\vz)} \approx \frac{\prod_{i=1}^K q(\vz|\vx_i)p(\vz)}{\prod_{i=1}^{K-1} p(\vz)} = p(\vz)\prod_{i=1}^K q(\vz|\vx_i)
  \label{eqn:poe:def}
\end{equation}
where each true posterior $p(\vz|\vx_i)$ is approximated by the product distribution $q(\vz|\vx_i)p(\vz)$, resulting in the product of $K$ experts and a ``prior expert" $p(\vz)$.

Given our new multimodal objective function, $\mathcal{L}_{\text{M}}$, we would like to preserve using the PoE for efficient inference. However, it remains to show that under certain assumptions, using the PoE in defining a joint posteriors will converge to the true distribution $p_\theta(\vz|\vx,\vy)$.

\begin{thm}
Let $p$ denote the true generative model and $q_\phi$ denote a variational inference network. If we assume a product-of-experts rule such that $q_\phi(\vz|\vx,\vy) = p(\vz)q_\phi(\vz|\vx)q_\phi(\vz|\vy)$ and if $q_\phi(\vz|\vx,\vy) = p(\vz|\vx,\vy)$ i.e. the true posterior is properly contained in the variational family, then the loss $\mathcal{L}_{\text{M}} = 0$.
\label{thm:4}
\end{thm}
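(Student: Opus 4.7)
The natural plan is to leverage Theorem~\ref{thm:1}, which gives $\mathcal{L}_{\text{M}} = 2\mathcal{L}_0 = 2\KL(q_\phi(\vx,\vy,\vz)\,\|\,p_\theta(\vx,\vy,\vz))$. Since this is a KL divergence, and hence non-negative, it vanishes exactly when the two joints coincide. Therefore the entire task collapses to verifying the single distributional identity $q_\phi(\vx,\vy,\vz) = p_\theta(\vx,\vy,\vz)$, after which the non-negativity of the individual $D_\alpha, D_\beta, D_\gamma, D_\eta$ forces each of them (not just their sum) to be zero.

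First I would unpack both joints using the factorizations introduced in Section~\ref{sec:methods}: $q_\phi(\vx,\vy,\vz) = p_d(\vx,\vy)\,q_\phi(\vz\mid\vx,\vy)$ and $p_\theta(\vx,\vy,\vz) = p(\vx,\vy)\,p(\vz\mid\vx,\vy)$. The phrase ``$p$ denotes the true generative model'' is meant to be read as: the model is correctly specified, so $p(\vx,\vy) = p_d(\vx,\vy)$. The second hypothesis is that the variational family contains the true posterior, i.e.\ $q_\phi(\vz\mid\vx,\vy) = p(\vz\mid\vx,\vy)$. Multiplying these two identities together yields $q_\phi(\vx,\vy,\vz) = p_\theta(\vx,\vy,\vz)$, so $\mathcal{L}_0 = 0$ and hence $\mathcal{L}_{\text{M}} = 0$.

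At this point the product-of-experts hypothesis has not done any mathematical work, and indeed the conclusion follows without it. Its role in the statement is to make the assumption $q_\phi(\vz\mid\vx,\vy) = p(\vz\mid\vx,\vy)$ non-trivial: one must check that the restricted PoE family is expressive enough to represent the true posterior at all. I would justify this by applying Bayes' rule to $p(\vz\mid\vx,\vy)$ under the modeling assumption $\vx \perp \vy \mid \vz$, obtaining $p(\vz\mid\vx,\vy) \propto p(\vz\mid\vx)\,p(\vz\mid\vy)/p(\vz)$, which is a valid realization of the PoE form $p(\vz)\,q_\phi(\vz\mid\vx)\,q_\phi(\vz\mid\vy)$ once the experts are matched (up to normalization) to $p(\vz\mid\cdot)/p(\vz)$. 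I would also note, via Lemma~\ref{lem:marg}, that under $p_d = p$ the marginalized unimodal posteriors automatically satisfy $q_\phi(\vz\mid\vx) = p(\vz\mid\vx)$ and $q_\phi(\vz\mid\vy) = p(\vz\mid\vy)$, which is exactly what is needed to make each $D$-term in Lemma~\ref{lem:2} vanish on its own.

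The proof is therefore essentially a one-line substitution once the reductions are in place. The only real subtlety, rather than an obstacle, is making the interpretation of ``true generative model'' precise enough to equate $p(\vx,\vy)$ with $p_d(\vx,\vy)$, and keeping a consistent reading of the symbol $q_\phi(\vz\mid\vx)$ in the PoE parametrization versus its role as the marginal appearing in $D_\alpha$ and $D_\beta$. With those clarifications, the statement $\mathcal{L}_{\text{M}} = 0$ follows immediately from Theorem~\ref{thm:1} together with the two hypotheses.
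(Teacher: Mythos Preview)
Your proposal is correct and considerably cleaner than the paper's own argument, but it proceeds by a different route. The paper does not invoke Theorem~\ref{thm:1} at all; instead it writes out one of the conditional KL terms explicitly (the $D_\beta$-type integral), substitutes the PoE factorization $p(\vz\mid\vx,\vy)\propto p(\vz\mid\vx)p(\vz\mid\vy)/p(\vz)$ directly into the integrand, and uses Bayes' rule to watch everything cancel to $\log 1$. Your reduction $\mathcal{L}_{\text{M}}=2\mathcal{L}_0$ bypasses that algebra entirely and makes the vanishing immediate from $q_\phi(\vx,\vy,\vz)=p(\vx,\vy,\vz)$. What the paper's computation buys is that the PoE identity is visibly exercised inside the integral; what your route buys is the sharper observation that the PoE hypothesis is not mathematically load-bearing once $q_\phi(\vz\mid\vx,\vy)=p(\vz\mid\vx,\vy)$ and $p_d=p$ are granted, and that its only role is to certify realizability of the true posterior within the restricted variational family. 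Your remark about the dual reading of $q_\phi(\vz\mid\vx)$ (PoE expert versus Lemma~\ref{lem:marg} marginal) is also a genuine subtlety that the paper's proof glosses over; you handle it correctly by noting that under $p_d=p$ the marginalized posterior is $p(\vz\mid\vx)$, which is what $D_\alpha$ and $D_\beta$ actually need.
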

\begin{proof}
To be precise, we make the following assumpmtions:
\begin{align}
    q(\vx,\vy,\vz) &= p_d(\vx,\vy)q_\phi(\vz|\vx,\vy)\label{ass:1} \\
    q(\vz|\vx,\vy) &= p(\vz|\vx,\vy) = \frac{p(\vz|\vx)p(\vz|\vy)}{p(\vz)}\cdot\frac{p(\vx)p(\vy)}{p(\vx,\vy)} \propto \frac{p(\vz|\vx)p(\vz|\vy)}{p(\vz)}\label{ass:2} \\
    p(\vy|\vz,\vx) &= p(\vy|\vz)\label{ass:3} \\
\end{align}
where $p_d(\vx,\vy)$ is an empirical distribution. We first observe that $p_d(\cdot) = p(\cdot)$ since if $p$ is the true generative model, then any empirical dataset would be sampled from $p$. Then,
\begin{align}
    \mathcal{L}_{\text{M}} &= \int_{\mathcal{X}} \int_{\mathcal{Y}} \int_{\mathcal{Z}} q(\vx,\vy,\vz) \log \frac{q(\vx,\vz)p(\vy|\vx,\vz)}{q(\vx,\vy,\vz)} d\vz d\vy d\vx\label{proofline:1} \\
    &= \int_{\mathcal{X}} \int_{\mathcal{Y}} p_d(\vx,\vy) \int_{\mathcal{Z}} q(\vz|\vx,\vy) \log \{ \frac{q(\vz|\vx)p(\vy|\vz)}{q(\vz|\vx,\vy)} \cdot \frac{p_d(\vx)}{p_d(\vx,\vy)} \} d\vz d\vy d\vx\label{proofline:2} \\
    &= \int_{\mathcal{X}} \int_{\mathcal{Y}} p_d(\vx,\vy) \int_{\mathcal{Z}} p(\vz|\vx,\vy) \log \{ \frac{p(\vz|\vx)p(\vy|\vz)}{p(\vz|\vx,\vy)} \cdot \frac{p_d(\vx)}{p_d(\vx,\vy)} \} d\vz d\vy d\vx\label{proofline:3} \\
    &= \int_{\mathcal{X}} \int_{\mathcal{Y}} p_d(\vx,\vy) \int_{\mathcal{Z}} p(\vz|\vx,\vy) \log \{ \frac{p(\vz|\vx)p(\vy|\vz)}{\frac{p(\vz|\vx)p(\vz|\vy)}{p(\vz)}\cdot\frac{p(\vx)p(\vy)}{p(\vx,\vy)}} \cdot \frac{p_d(\vx)}{p_d(\vx,\vy)} \} d\vz d\vy d\vx\label{proofline:4} \\
    &= \int_{\mathcal{X}} \int_{\mathcal{Y}} p_d(\vx,\vy) \int_{\mathcal{Z}} p(\vz|\vx,\vy) \log \{ \frac{p(\vy|\vz)p(\vz)p(\vx,\vy)}{p(\vz|\vy)p(\vx)p(\vy)} \cdot \frac{p_d(\vx)}{p_d(\vx,\vy)} \} d\vz d\vy d\vx\label{proofline:5} \\
    &= \int_{\mathcal{X}} \int_{\mathcal{Y}} p_d(\vx,\vy) \int_{\mathcal{Z}} p(\vz|\vx,\vy) \log \{ \frac{p(\vx,\vy)}{p(\vx)} \cdot \frac{p_d(\vx)}{p_d(\vx,\vy)} \} d\vz d\vy d\vx\label{proofline:6} \\
    &= \int_{\mathcal{X}} \int_{\mathcal{Y}} \int_{\mathcal{Z}} p(\vx,\vy,\vz) \log\{1\} d\vz d\vy d\vx\label{proofline:7} \\
    &= 0
\end{align}
We apply Assumption~\ref{ass:1},\ref{ass:3} in line~\ref{proofline:2}. We apply Assumption~\ref{ass:2} in line~\ref{proofline:3}. In line~\ref{proofline:6}, we use the fact $p(\vy|\vz)p(\vz) = p(\vz|\vy)p(\vy)$. In line~\ref{proofline:7}, we use that $p_d(\cdot) = p(\cdot)$.
\end{proof}

Being able to use PoE in our new family of VAEs can be especially important in cases of missing data. We will compare the performance of PoE against specifying a new neural network for $q_\phi(\vz|\vx,\vy)$ in our experiments.

\section{VAE Experiments}
\label{sec:vae_experiments}
To benchmark our proposed family of models, we first compare against existing multimodal VAEs and supervised baselines on a variety of bimodal datasets involving images and labels. Then, we tackle the challenge of images and language. See supplement for a description of each dataset, the model architectures, and hyperparameters.

\subsection{Images and Labels}
We transform unimodal image datasets into multimodal ones by treating labels as a second modality ($\vx$ represents images, $\vy$ labels): MNIST, FashionMNIST, CelebA, and CIFAR10. All images are resized to 32 by 32 pixels and all models have identical architectures. For evaluation, we compute the marginal and joint log likelihoods on an unseen test set estimated using 1000 samples (see supplement).
In lieu for the conditional probability $p_\theta(\vy|\vx)$, we compute the classification error of predicting the correct label from an image. We do so by taking $\mu$, the mean of the posterior $q_\phi(\vz|\vx)$, and decoding to the MAP label from $p_\theta(\vy|\vz=\mathbf{\mu})$.
Table~\ref{table:image_label} reports evaluation metrics averaged over 5 runs.

\begin{table*}[t!]
\centering
\footnotesize
\begin{subtable}[t]{\textwidth}
\centering
\begin{tabular}{l|ccc|ccc}
    \toprule
    & \multicolumn{3}{c}{MNIST} & \multicolumn{3}{c}{FashionMNIST}\\
    Model & $\log p(\vx)$ & $\log p(\vx,\vy)$ & Cls. Err. & $\log p(\vx)$ & $\log p(\vx,\vy)$ & Cls. Err. \\
    \midrule
    VAE & $-113.26$ & - & - & $-299.70$ & - & - \\
    Classifier & - & - & $0.984$ & - & - & $0.908$ \\
    BiVCCA & $-114.15$ & - & $\mathbf{0.989}$ & $-303.93$ & - & $\mathbf{0.915}$ \\
    MVAE & $-112.61$ & $-113.62$ & $0.884$ & $\mathbf{-299.19}$ & $-299.91$ & $0.787$ \\
    JMVAE & $-113.81$ & $-113.48$ &  $0.983$ & $-301.47$ & $-299.91$ & $0.900$ \\
    VAEVAE & $\mathbf{-112.44}$ & $\mathbf{-112.65}$ & $0.985$ & $-299.21$ & $\mathbf{-299.34}$ & $0.910$ \\
    VAEVAE$^\ddagger$ & $-112.78$ & $-113.33$ & $0.983$ & $-299.89$ & $-299.64$ & $0.906$ \\
    \bottomrule
\end{tabular}
\end{subtable}
\newline\vspace{2em}\newline
\begin{subtable}[t]{\textwidth}
\centering
\begin{tabular}{l|ccc|ccc}
    \toprule
    & \multicolumn{3}{c}{CIFAR10} & \multicolumn{3}{c}{CelebA}\\
    Model & $\log p(\vx)$ & $\log p(\vx,\vy)$ & Cls. Err. & $\log p(\vx)$ & $\log p(\vx,\vy)$ & Cls. Err. \\
    \midrule
    VAE $\mathbf{-1774.86}$ & - & - & $\mathbf{-1582.69}$ & - & - \\
    Classifier & - & - & $0.637$ & - & - & $\mathbf{0.923}$ \\
    BiVCCA & $-1785.33$ & - & $\mathbf{0.654}$ & $-1596.08$ & - & $0.922$ \\
    MVAE & $-1775.33$ & $-1778.80$ & $0.511$ & $-1584.25$ & $-1588.77$ & $0.841$ \\
    JMVAE & $-1780.82$ & $-1778.27$ & $0.541$ & $-1595.13$ & $-1588.76$ & $0.914$ \\
    VAEVAE & $-1775.63$ & $\mathbf{-1777.25}$ & $0.607$ & $-1584.31$ & $\mathbf{-1586.92}$ & $0.920$ \\
    VAEVAE$^\ddagger$ & $-1776.71$ & $-1778.45$ & $0.601$ & $-1586.95$ & $-1587.20$ & $0.918$ \\
    \bottomrule
\end{tabular}
\end{subtable}
\caption{Evaluation of Multimodal VAEs on Image and Label Datasets. We also include a supervised baseline (Classifier) and an image-only generative model (VAE). The model VAEVAE$^\ddagger$ uses Product-of-Experts in the variational posterior. All models share the same architecture and hyperparameters. To evaluate each of the models, we compare log probabilities of test data and approximate conditional distributions by classification error (Cls. Err.) on test data.}
\label{table:image_label}
\end{table*}
\begin{table*}[t!]
\centering
\footnotesize
\begin{subtable}[t]{\textwidth}
\centering
\begin{tabular}{l|cccc|cccc}
    \toprule
    & \multicolumn{4}{c}{MNIST Math} & \multicolumn{4}{c}{Chairs in Context}\\
    Model & $\log p(\vx)$ & $\log p(\vy)$ & $\log p(\vx,\vy)$ & Perp. & $\log p(\vx)$ & $\log p(\vy)$ & $\log p(\vx,\vy)$ & Perp. \\
    \midrule
    VAE ($\vx$) & $-113.21$ & - & - & - & $\mathbf{-592.6}$ & - & - & - \\
    VAE ($\vy$) & - & $-0.143$ & - & - & - & $-10.8$ & - & - \\
    Captioner & - & - & - & $2.231$ & - & - & - & $\mathbf{1.7e3}$ \\
    BiVCCA & $-114.81$ & $-0.193$ & - & $\mathbf{2.196}$ & $-601.2$ & $-10.9$ & - & $2.5e4$ \\
    MVAE & $\mathbf{-112.62}$ & $-0.141$ & $-113.72$ & $1.6e4$ & $-607.9$ & $-10.8$ & $-636.2$ & $1.5e6$ \\
    JMVAE & $-115.73$ & $-0.410$ & $-114.00$ & $9.5e3$ & $-638.0$ & $-13.3$ & $-619.7$ & $4.8e6$ \\
    VAEVAE & $-112.88$ & $\mathbf{-0.133}$ & $\mathbf{-113.03}$ & $745.2$ & $-602.3$ & $-10.5$ & $\mathbf{-619.5}$ & $1.7e5$ \\
    VAEVAE$^\ddagger$ & $-113.41$ & $-0.133$ & $-113.69$ & $274.0$ & $-602.3$ & $\mathbf{-10.4}$ & $-620.6$ & $8.2e5$\\
    \bottomrule
\end{tabular}
\end{subtable}
\newline\vspace{2em}\newline
\begin{subtable}[t]{\textwidth}
\centering
\begin{tabular}{l|cccc|cccc}
    \toprule
    & \multicolumn{4}{c}{Flickr30k} & \multicolumn{4}{c}{COCO Captions}\\
    Model & $\log p(\vx)$ & $\log p(\vy)$ & $\log p(\vx,\vy)$ & Perp. & $\log p(\vx)$ & $\log p(\vy)$ & $\log p(\vx,\vy)$ & Perp. \\
    \midrule
    VAE ($\vx$) & $-1772.8$ & - & - & - & $\mathbf{-1769.5}$ & - & - & - \\
    VAE ($\vy$) & - & $-65.1$ & - & - & - & $-14.3$ & - & - \\
    Captioner & - & - & - & $170.4$ & - & - & - & $\mathbf{121.2}$ \\
    BiVCCA & $-1883.8$ & $-65.4$ & - & $\mathbf{153.9}$ & $-1778.4$ & $-14.9$ & - & $133.8$ \\
    MVAE & $\mathbf{-1781.0}$ & $-65.8$ & $-1943.3$ & $4.7e5$ & $-1778.4$ & $-14.5$ & $-1875.9$ & $3.6e6$ \\
    JMVAE & $-1967.2$ & $-75.9$ & $-1872.9$ & $1.6e4$ & $-2028.8$ & $-21.7$ & $-1823.3$ & $4.3e9$ \\
    VAEVAE & $-1804.8$ & $\mathbf{-64.4}$ & $\mathbf{-1825.1}$ & $7.7e3$ & $-1786.8$ & $\mathbf{-14.1}$ & $\mathbf{-1818.7}$ & $3.2e4$ \\
    VAEVAE$^\ddagger$ & $-1814.2$ & $-65.2$ & $-1850.2$ & $7.6e3$ & $-1806.4$ & $-14.1$ & $-1834.5$ & $6.5e4$\\
    \bottomrule
\end{tabular}
\end{subtable}
\caption{Evaluation of Multimodal VAEs on Image and Text Datasets. We include a supervised captioning model (Captioner) and two unimodal generative models (VAE ($\mathbf{x}$), VAE ($\mathbf{y}$)). The model VAEVAE$^\ddagger$ uses Product-of-Experts in the variational posterior. All models share the same architecture and hyperparameters. To evaluate each of the models, we compare log probabilities of test data and approximate conditional distributions by the  perplexity of captions conditioned on the image (perp.) using test data.}
\label{table:image_text}
\end{table*}

\paragraph{Results} Previous multimodal models either bias towards capturing marginal or conditional distributions. In fact, many of their objective functions fail to lower bound one or the other (see Sec.~\ref{sec:related_work}). In Table~\ref{table:image_label}, we find that BiVCCA and JMVAE have lower (more negative) marginal and joint log-likelihoods than other models, only doing well in classification error (conditionals). On the other hand, MVAE is the opposite, resulting in poor classification and overfitting to the marginals. However, our class of generative models (VAEVAE), which has a principled lower bound, finds a good balance between learning marginals versus conditionals. In all four datasets, VAEVAE does better than BiVCCA and JMVAE in capturing $\log p(\vx)$ and $\log p(\vx,\vy)$, almost matching the performance of a unimodal VAE. Furthermore, VAEVAE outperforms MVAE in classification, nearing supervised models. Finally, we find that VAEVAE with PoE, denoted by VAEVAE$^{\ddagger}$, closely approximates the standard VAEVAE, suggesting that we can trade a little performance for the efficiency that comes with PoE.

\subsection{Images and Text}
Many prior work in multimodal generative models \citep{wang2016deep,suzuki2016joint,vedantam2017generative,wu2018multimodal} only testing their methods against simple images and labels (as above). However, we would expect many more benefits to multimodal learning given two rich modalities. So in these next set of experiments, we apply our method to four datasets with images ($\vx$) and language ($\vy$). The first is a toy setting based on MNIST where we procedurally generate math equations that evaluate to a digit in the dataset. The second is the ``Chairs In Context`` (CIC) dataset \citep{achlioptas2018learning}, which contains 80k human utterances paired with images of chairs from ShapeNet \citep{chang2015shapenet}. The final two are Flickr30k \citep{plummer2015flickr30k} and COCO captions \citep{chen2015microsoft}, which contain richer images and text. Like above, we evaluate models with estimates of marginal $\log p_\theta(\vx)$, $\log p_\theta(\vy)$ and joint $\log p_\theta(\vx,\vy)$ log probabilities with respect to a test set. Instead of classification, we measure the (conditional) perplexity of $\vy$ given $\vx$ as a surrogate for $\log p_\theta(\vy|\vx)$, which is difficult to directly estimate. A better generative model should have lower perplexity, or less surprisal for every word in a sentence.
To encode and decode utterances, we use RNNVAEs \citep{bowman2015generating}. Table~\ref{table:image_text} compare model performances. For baselines, we include a supervised captioning model and two VAEs, one exclusively for images and one exclusively for text. Once again, all models share  architectures and hyperparameters, meaning that any differences in metrics are due to the objectives alone.

\paragraph{Results} We find similar patterns as described for image and label datasets, although the magnitude of the effect is higher for image and text. As before, JMVAE has poor performance in marginal and joint probabilities whereas MVAE has poor (conditional) perplexity. Our VAEVAE models often matches the MVAE in $\log p_\theta(\vx)$, $\log p_\theta(\vy)$, and $\log p_\theta(\vx,\vy)$ and surpasses JMVAE in perplexity. From Table~\ref{table:image_text}, we can see that supervised approaches like a captioner or BiVCCA tend to do far better in perplexity due to heavily overfitting to learning only the mapping from image to text, a much lower dimensional transformation than learning a full joint representation. But out of the joint generative models, VAEVAE outperforms others by orders of magnitude in perplexity. If a practitioner only cared about image captioning, he/she should use a supervised model. But as we will explore in the next section, in cases of missing data or low supervision, discriminative approaches quickly deteriorate. We will show how VAEVAE is able to leverage unpaired marginal data to still learn a good joint representation under data scarcity.

\section{Weak Supervision}
\label{sec:weak_sup}
In many contexts, it is infeasible to curate a large dataset of paired modalities (e.g. matching image and text), especially as the number of modalities increase. Often, we only have the resources to build a very small set of paired examples. However, unimodal (or unpaired) data is easily accessible both from existing corpora like ImageNet \citep{russakovsky2015imagenet} or the Google Books corpus \citep{michel2011quantitative} and from scraping online content from Wikipedia or Google Images. One of the strengths of some generative models (in comparison to a supervised one) is the ability to do inference and make predictions despite missing data. As such, we might wish to use unpaired examples to thoroughly capture marginal distributions whereas existing paired examples can be used to align unimodal representations into a joint space.
For the VAEVAE, since $\mathcal{L}_{M}$ is composed of two marginal and two conditional terms, we must use paired examples to evaluate the latter but are free to use unpaired examples to evaluate the former. \cite{wu2018multimodal} began to investigate using unpaired examples to training the MVAE. Here, we continue with more thorough experiments where we reserve a fraction of a dataset to be paired and randomly scramble the remaining examples to remove alignment. We vary the fraction of paired examples to gauge how increased amounts of ``supervision" effects training. Then, using a fixed amount of supervision, we vary the amount of unpaired examples to test if learning marginal distributions better can make joint representation learning easier.

\subsection{Varying Amount of Paired Data}
First we vary the amount of paired examples from 0.1\% to 100\% of the training dataset size. For each setting, we track classification accuracy for labelled datasets and conditional perplexity for image and text datasets. Only VAEVAE and MVAE can make use of unpaired data. All other baseline models only train on the supervised fraction.

Fig.~\ref{fig:weaksup1}(a-d) plot the classification accuracy against log of the supervision percentage (higher is better). We find VAEVAE to consistently do better in the low supervision regime (-2 to 0 log supervision). However as we approach full supervision, the supervised baseline and BiVCCA tend to outperform VAEVAE. Given enough paired data, this is expected as it is much easier to learn $p(\vy|\vx)$ alone rather the marginals, conditionals, and joint altogether. Fig.~\ref{fig:weaksup1}(e-h) show the marginal (log) densities over images in each dataset. Again we find VAEVAE to outperform most models, occasionally being beat by the MVAE.
Fig.~\ref{fig:weaksup2} show similar findings to Fig.~\ref{fig:weaksup1} but for image and text datasets. Fig.~\ref{fig:weaksup2}(a-d) plot the log perplexity of observed text conditioned on the image (lower is better) against log supervision percentage; Fig.~\ref{fig:weaksup2}(e-h) each contain two plots (the left representing the marginal log densities for image and the right for text). The patterns are the same: VAEVAE does the best at low supervision levels and captures the marginal distributions nicely. These patterns lay bare evidence that (1) MVAE over-prioritizes the marginals as it consistently achieves the lowest accuracy under low supervision, (2) JMVAE over-prioritizes the conditionals as it consistently gets the lowest marginal log likelihood, and (3) VAEVAE achieves a good balance between the two.

\subsection{Varying Amount of Unpaired Data}
Using more unpaired data should help models better capture marginal distributions. We hypothesize that this could make learning the joint and conditional distributions easier as well: Imagine that the joint representation is a ``simple" transformation from the unimodal representations; if we learn the latter well, any existing supervision can be used to learn that transformation. To test this intuition, we fix the amount of supervision to 1\% and vary the amount of unpaired data (for each modality) from 0.1\% to 100\%.  Again, only VAEVAE and MVAE can make use of unpaired data.

Fig.~\ref{fig:unsup1} show the effects on classification accuracy (a), conditional perplexity (b), and marginal log-likelihoods (c,d) as more unpaired data is given to the model. With little unpaired data, performance suffers greatly as generative models struggle to learn the underlying structure in the data distribution, leading to worse performance than a supervised baseline. But as the amount of unpaired data increases, classification error and conditional perplexity steadily decrease, beating the supervised baseline. Since unpaired data comes from the same distribution as paired data, more of it consistently leads to increased performance across datasets.


\section{Generalization to Other Model Families}

Thm.~\ref{thm:1} as stated, is not restricted to VAEs. In this section, we generalize the multimodal objective to GANs and flows.

\begin{figure*}[t]
  \centering
  \begin{subfigure}[h]{\textwidth}
      \centering
      \includegraphics[width=0.75\linewidth]{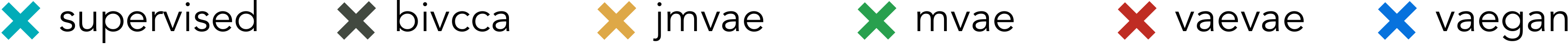}
  \end{subfigure}
  \begin{subfigure}[h]{0.24\textwidth}
      \centering
      \includegraphics[width=\linewidth]{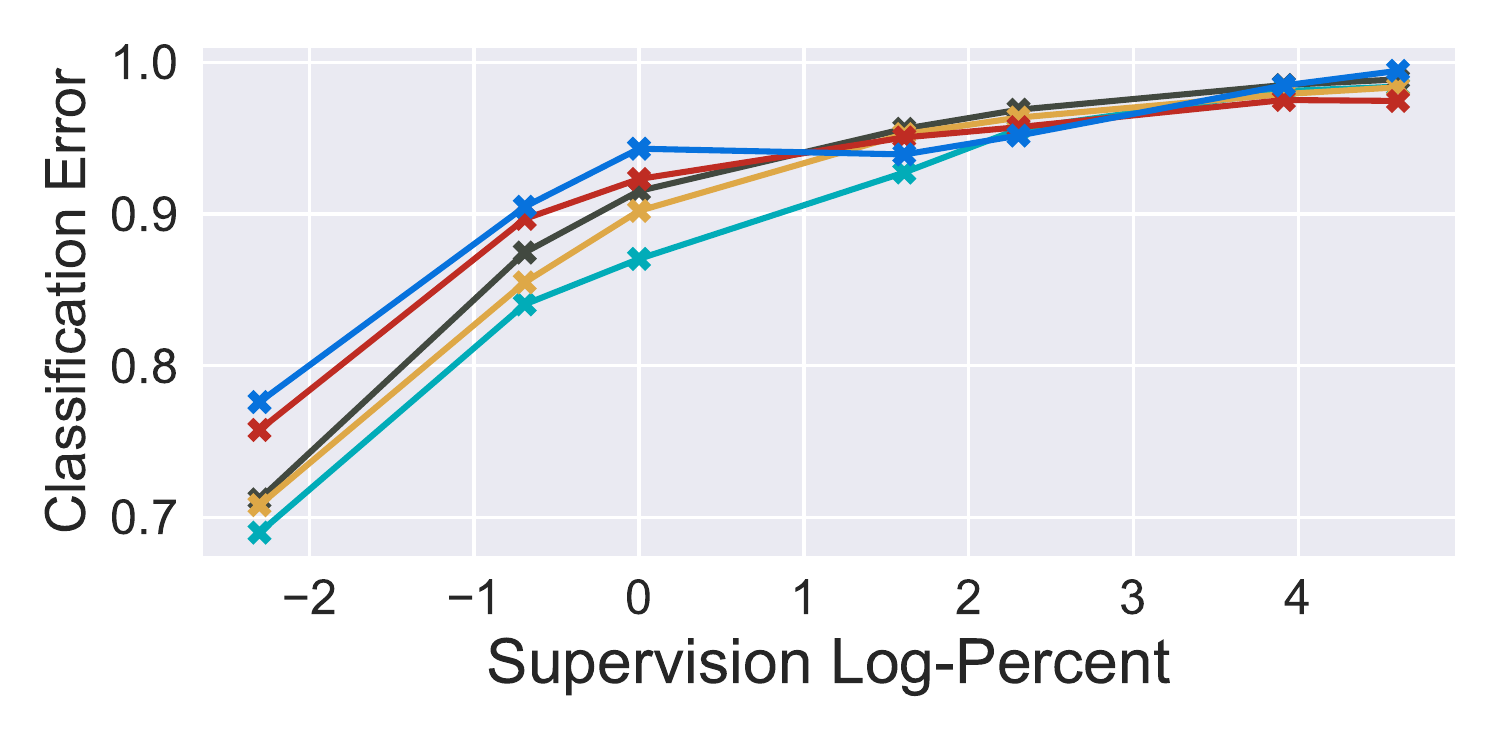}
      \caption{MNIST}
  \end{subfigure}
  \begin{subfigure}[h]{0.24\textwidth}
      \centering
      \includegraphics[width=\linewidth]{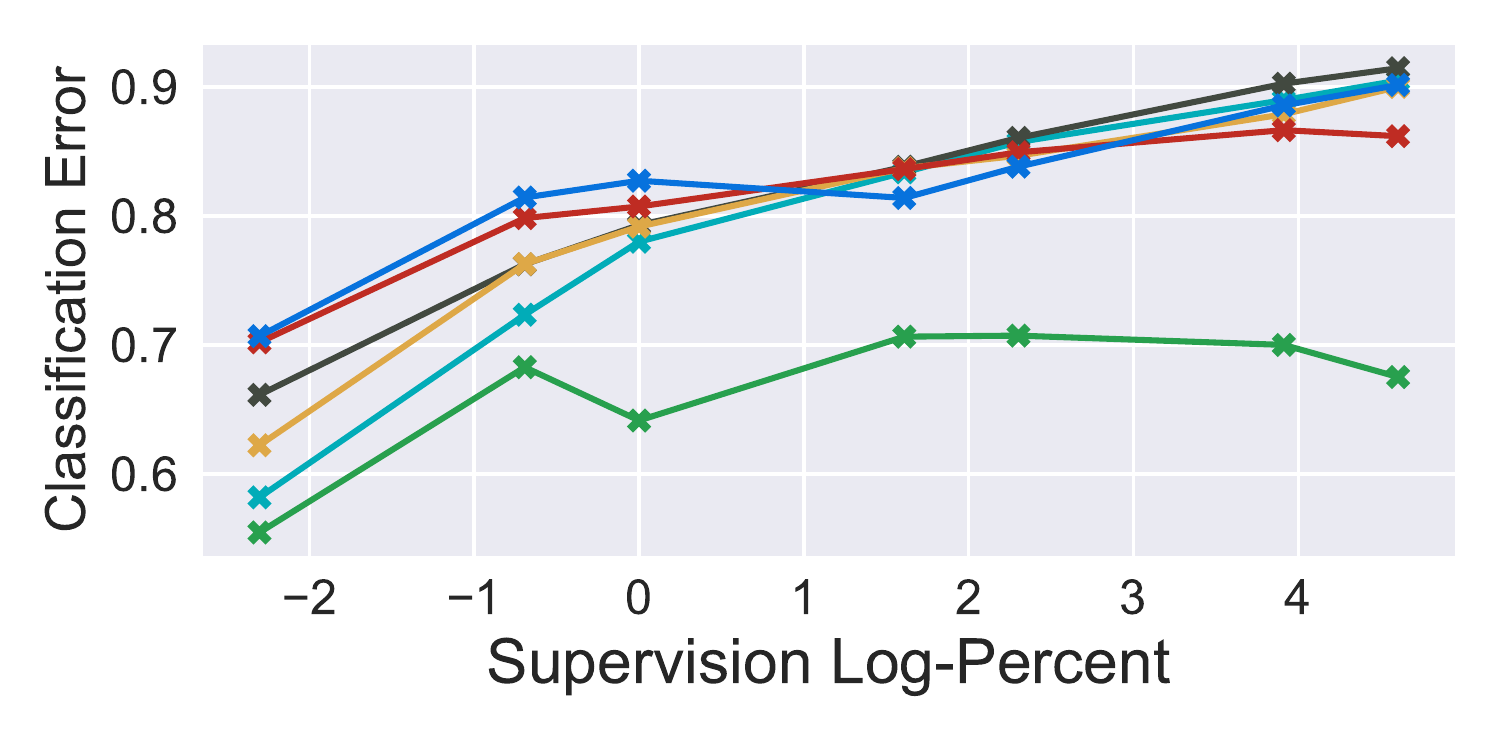}
      \caption{FashionMNIST}
  \end{subfigure}
  \begin{subfigure}[h]{0.24\textwidth}
      \centering
      \includegraphics[width=\linewidth]{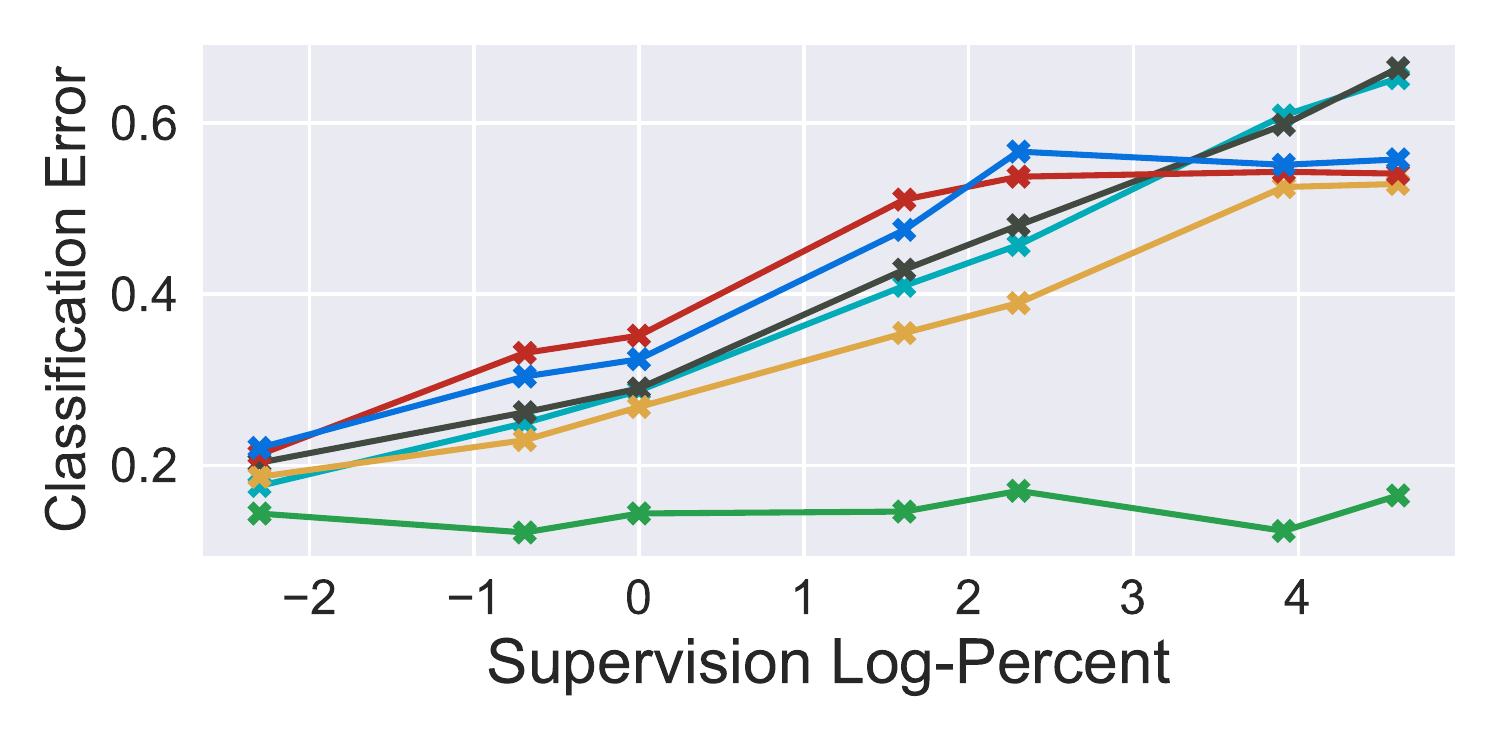}
      \caption{CIFAR10}
  \end{subfigure}
  \begin{subfigure}[h]{0.24\textwidth}
      \centering
      \includegraphics[width=\linewidth]{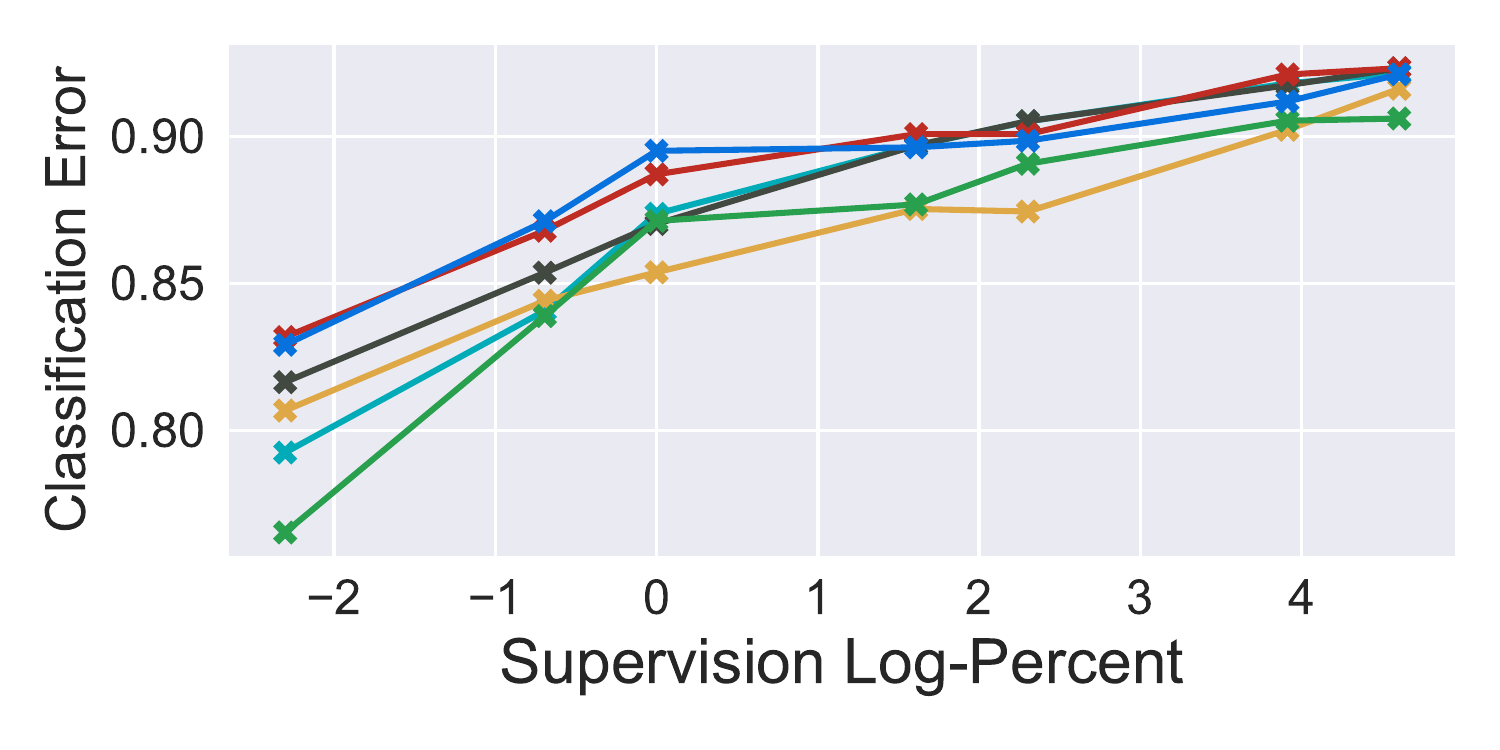}
      \caption{CelebA}
  \end{subfigure}
  \begin{subfigure}[h]{0.24\textwidth}
      \centering
      \includegraphics[width=\linewidth]{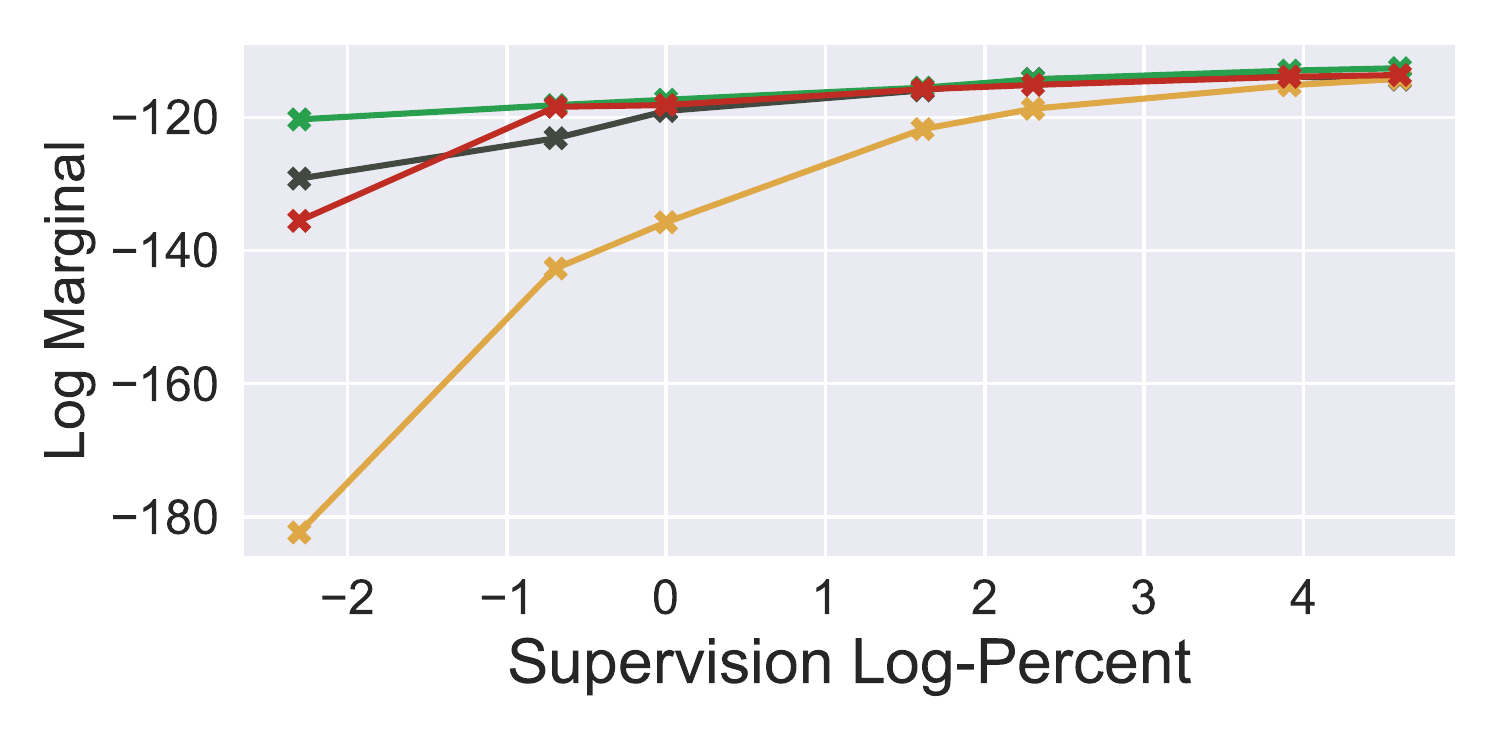}
      \caption{MNIST}
  \end{subfigure}
  \begin{subfigure}[h]{0.24\textwidth}
      \centering
      \includegraphics[width=\linewidth]{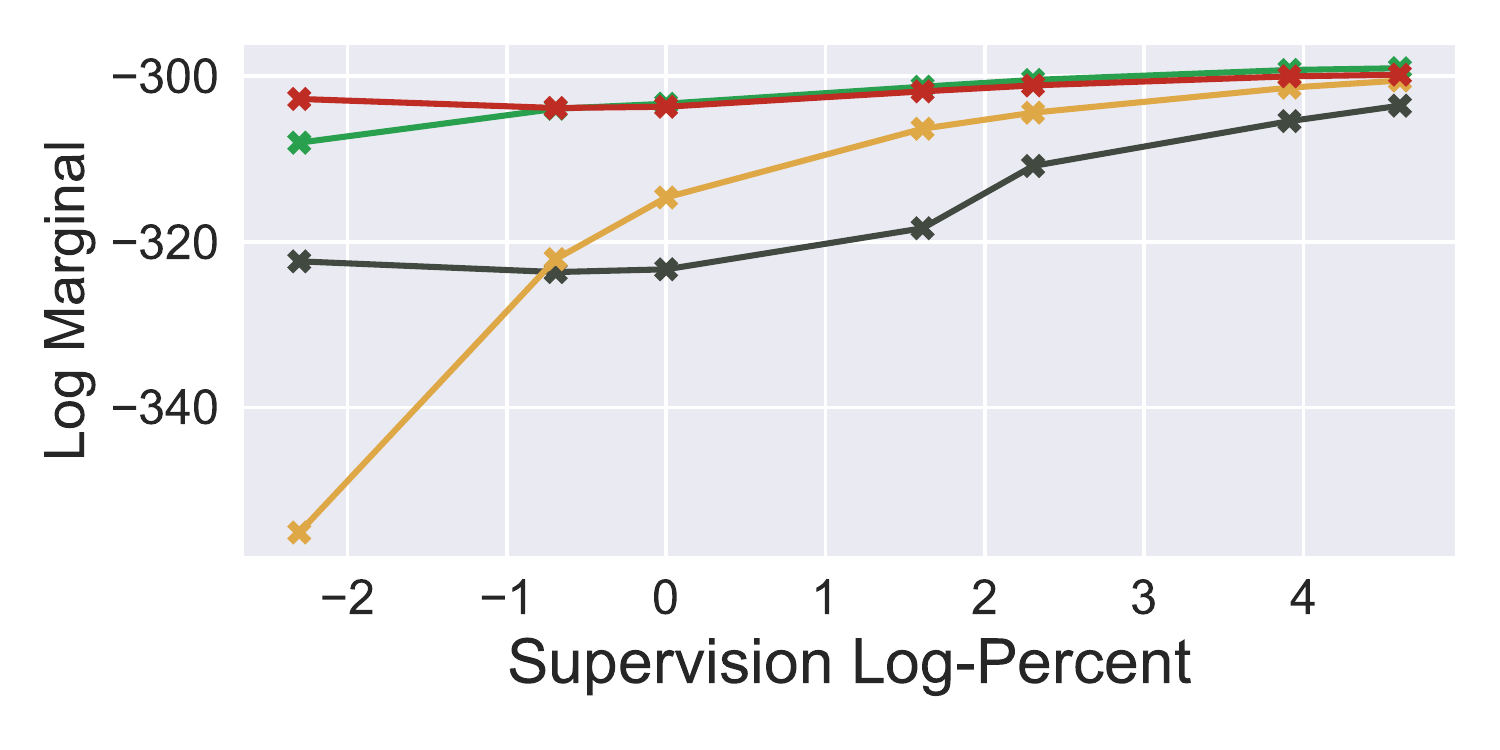}
      \caption{FashionMNIST}
  \end{subfigure}
  \begin{subfigure}[h]{0.24\textwidth}
      \centering
      \includegraphics[width=\linewidth]{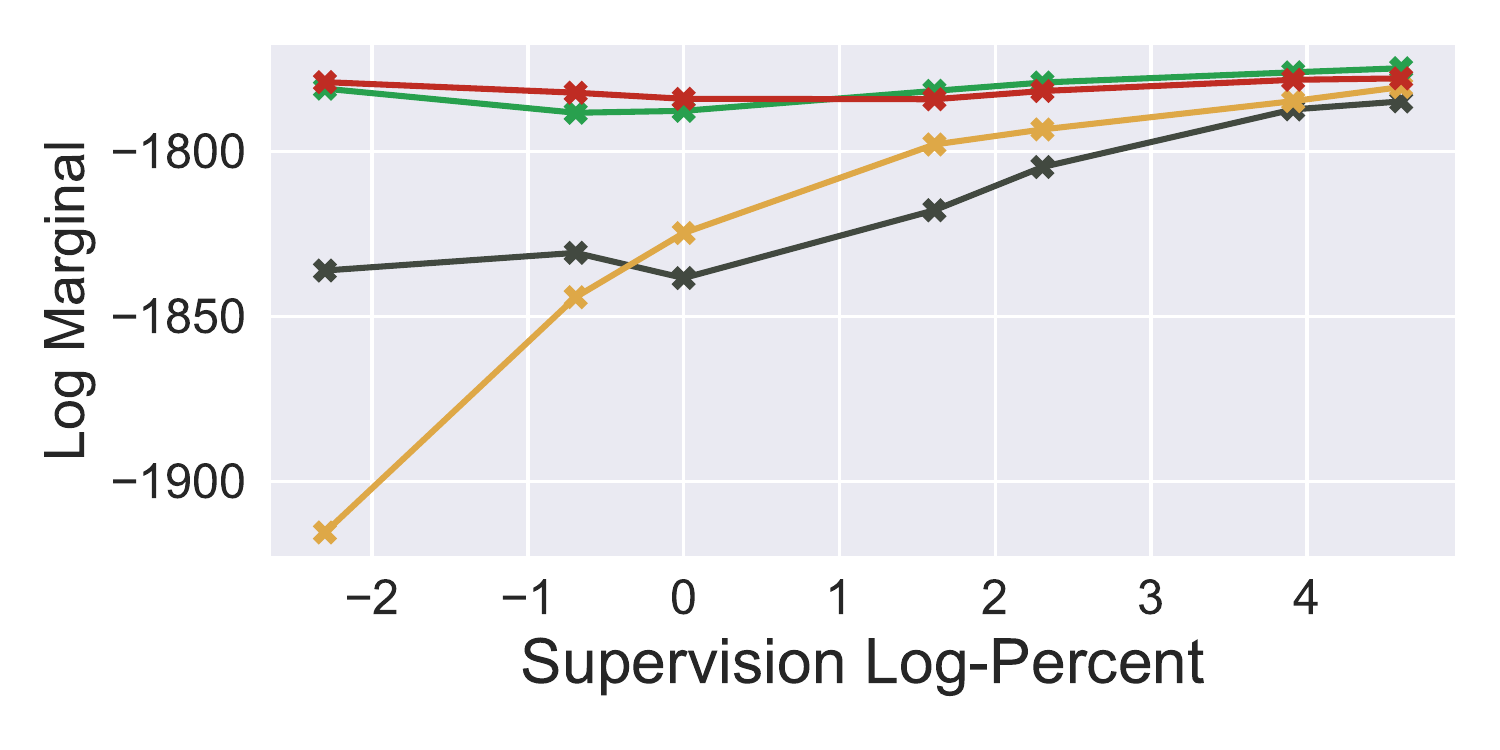}
      \caption{CIFAR10}
  \end{subfigure}
  \begin{subfigure}[h]{0.24\textwidth}
      \centering
      \includegraphics[width=\linewidth]{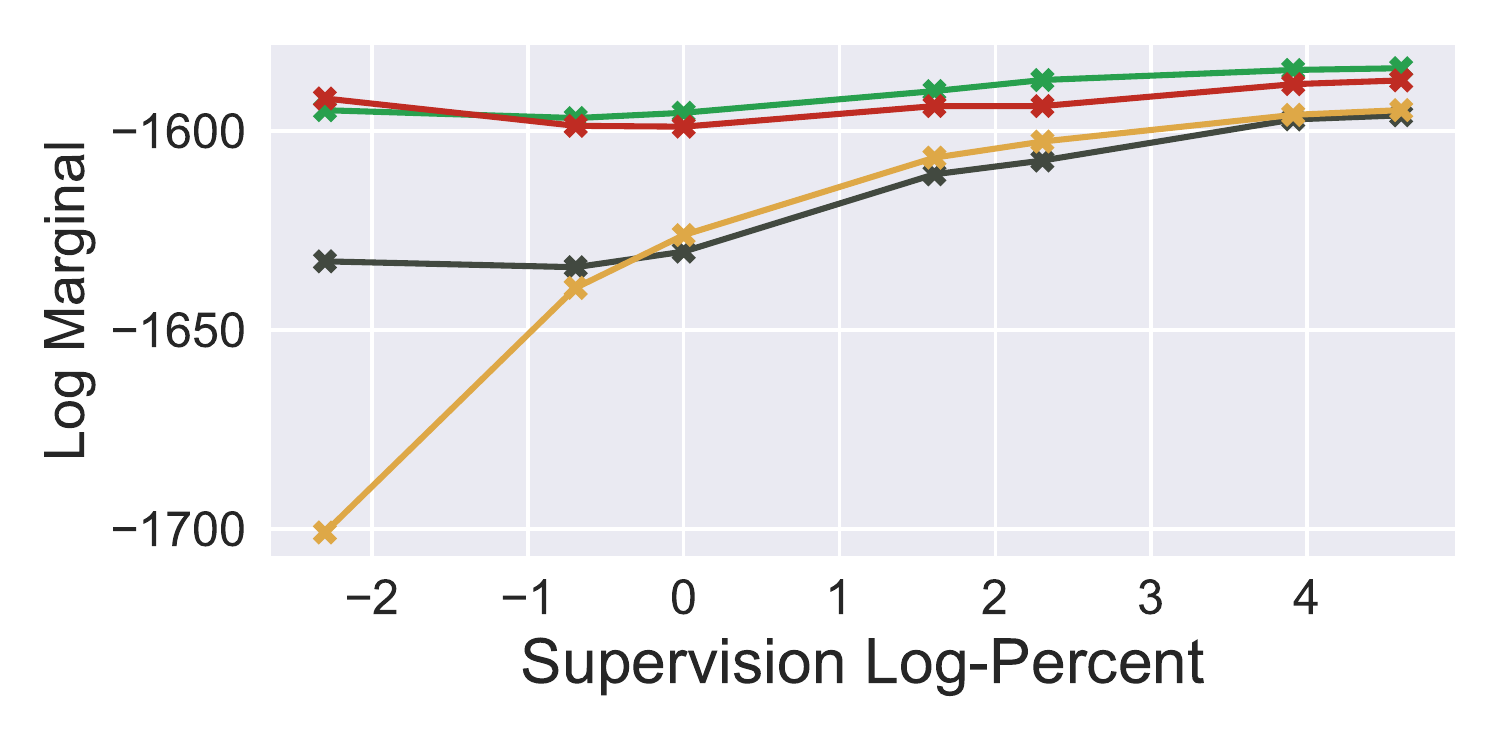}
      \caption{CelebA}
  \end{subfigure}
  \caption{Effect of weak supervision on classification (a-d) and marginal density (e-h). Our proposed models have higher accuracy in low to medium supervision. Being able to use unpaired examples ensures much better density estimation. All results are averaged over 3 runs.}
  \label{fig:weaksup1}
\end{figure*}

\begin{figure*}[t]
\centering
  \begin{subfigure}[h]{\textwidth}
      \centering
      \includegraphics[width=0.75\linewidth]{weaksup/legend.pdf}
  \end{subfigure}
  \begin{subfigure}[h]{0.24\textwidth}
      \centering
      \includegraphics[width=\linewidth]{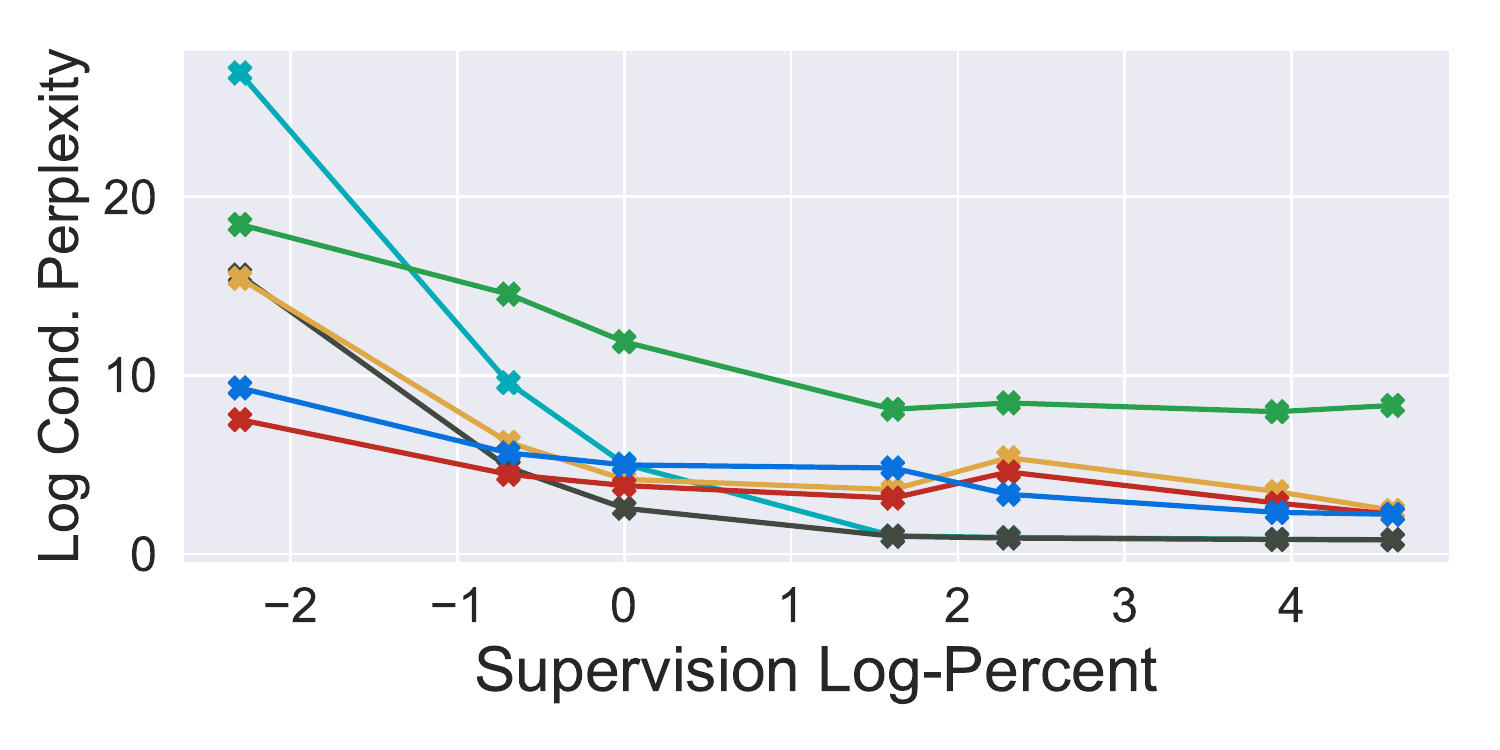}
      \caption{MNIST-Math}
  \end{subfigure}
  \begin{subfigure}[h]{0.24\textwidth}
      \centering
      \includegraphics[width=\linewidth]{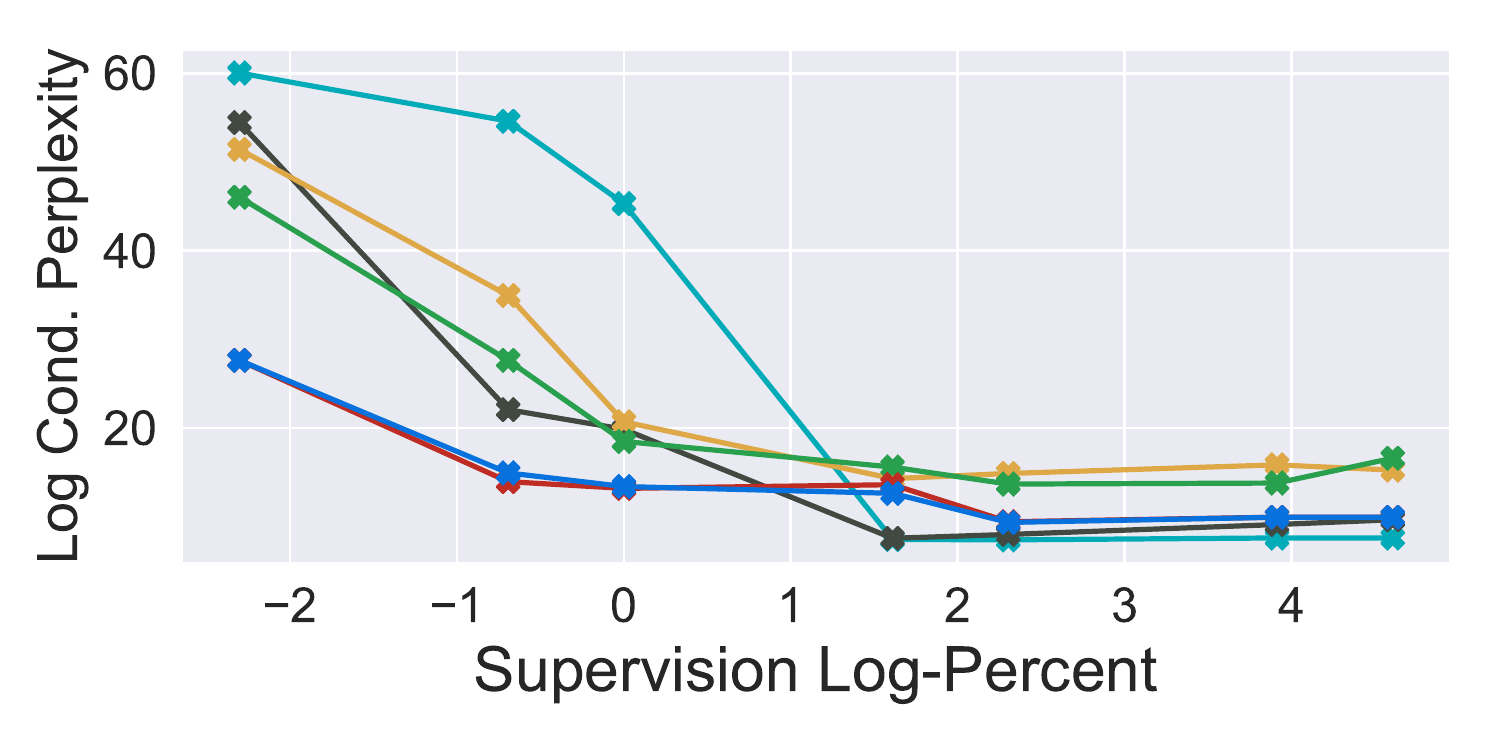}
      \caption{CIC}
  \end{subfigure}
  \begin{subfigure}[h]{0.24\textwidth}
      \centering
      \includegraphics[width=\linewidth]{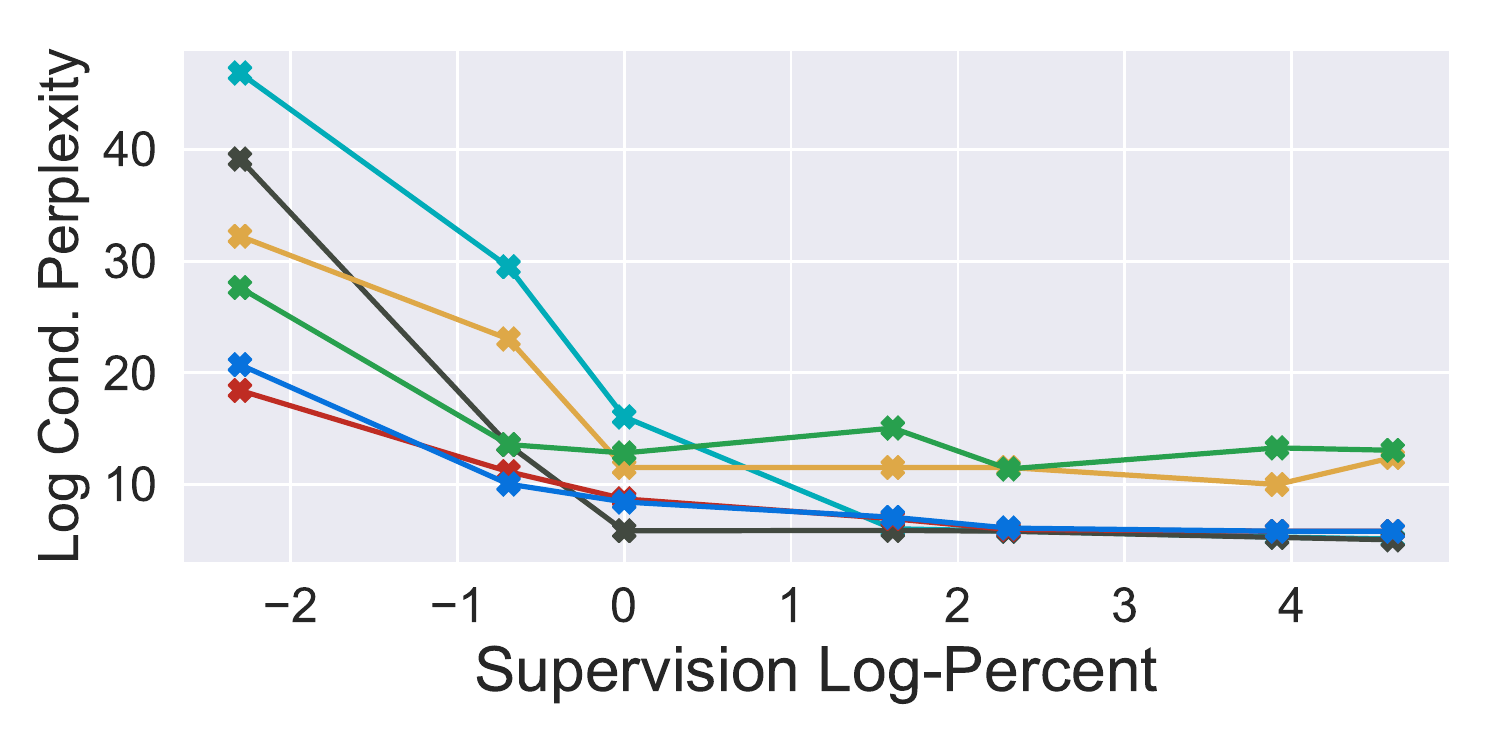}
      \caption{Flickr30k}
  \end{subfigure}
  \begin{subfigure}[h]{0.24\textwidth}
      \centering
      \includegraphics[width=\linewidth]{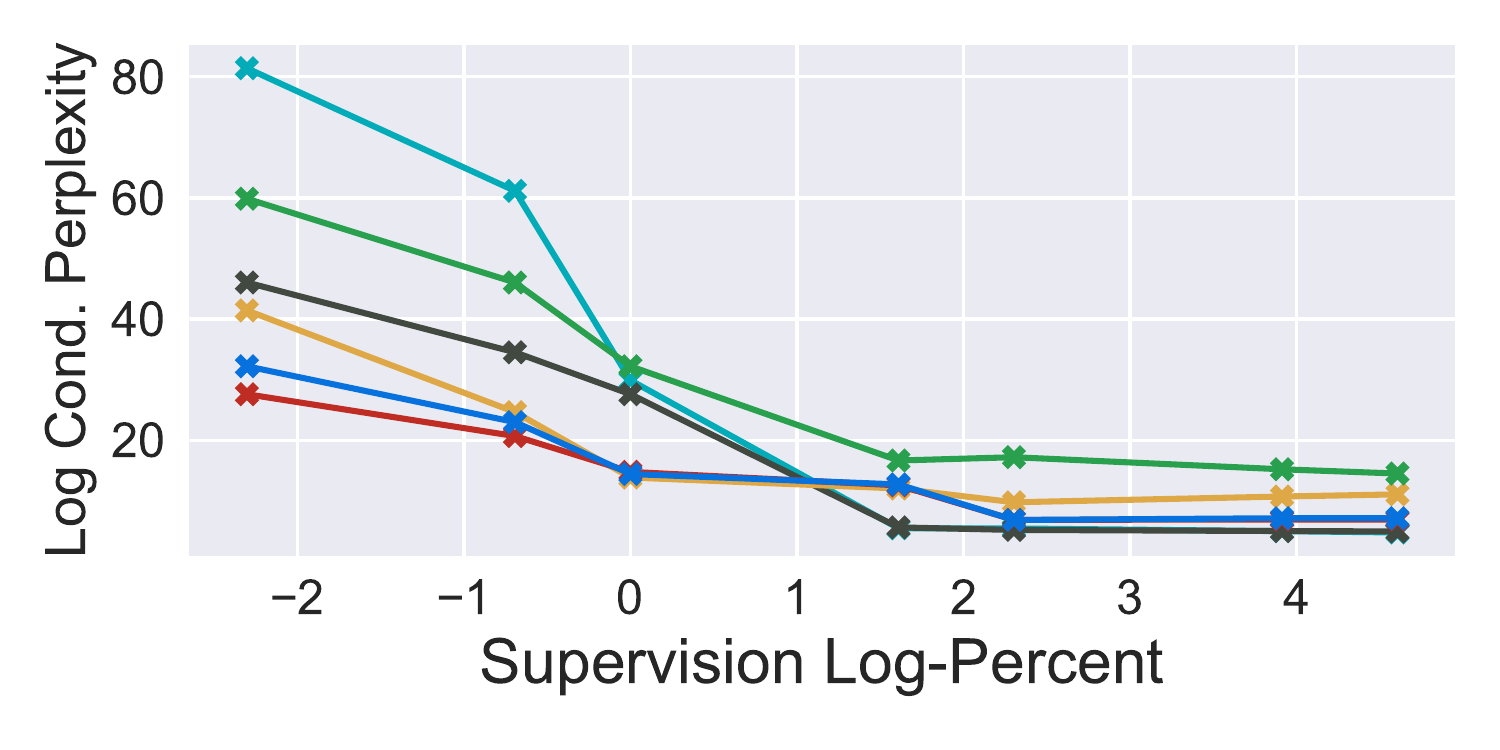}
      \caption{COCO}
  \end{subfigure}
  \centering
  \begin{subfigure}[h]{0.24\textwidth}
      \centering
      \includegraphics[width=\linewidth]{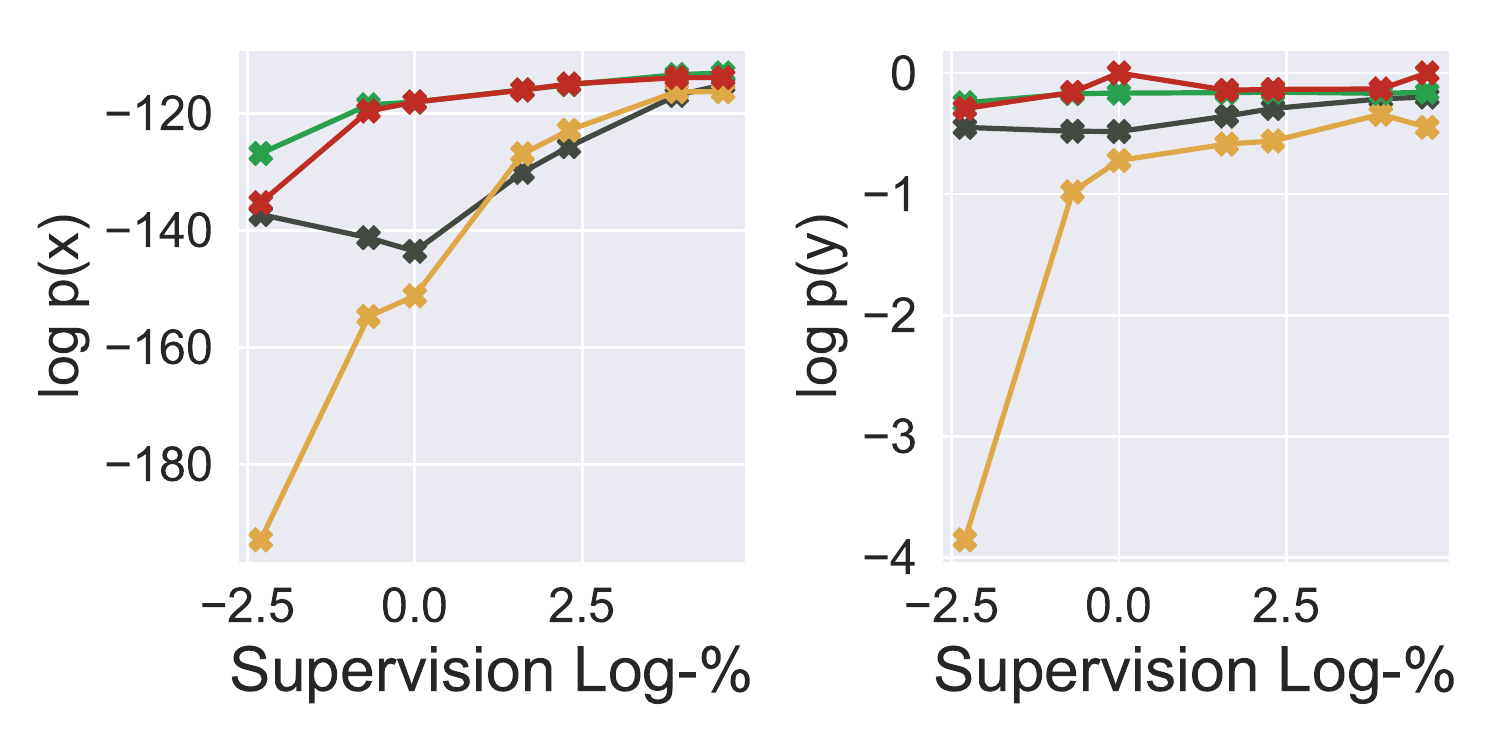}
      \caption{MNIST-Math}
  \end{subfigure}
  \begin{subfigure}[h]{0.24\textwidth}
      \centering
      \includegraphics[width=\linewidth]{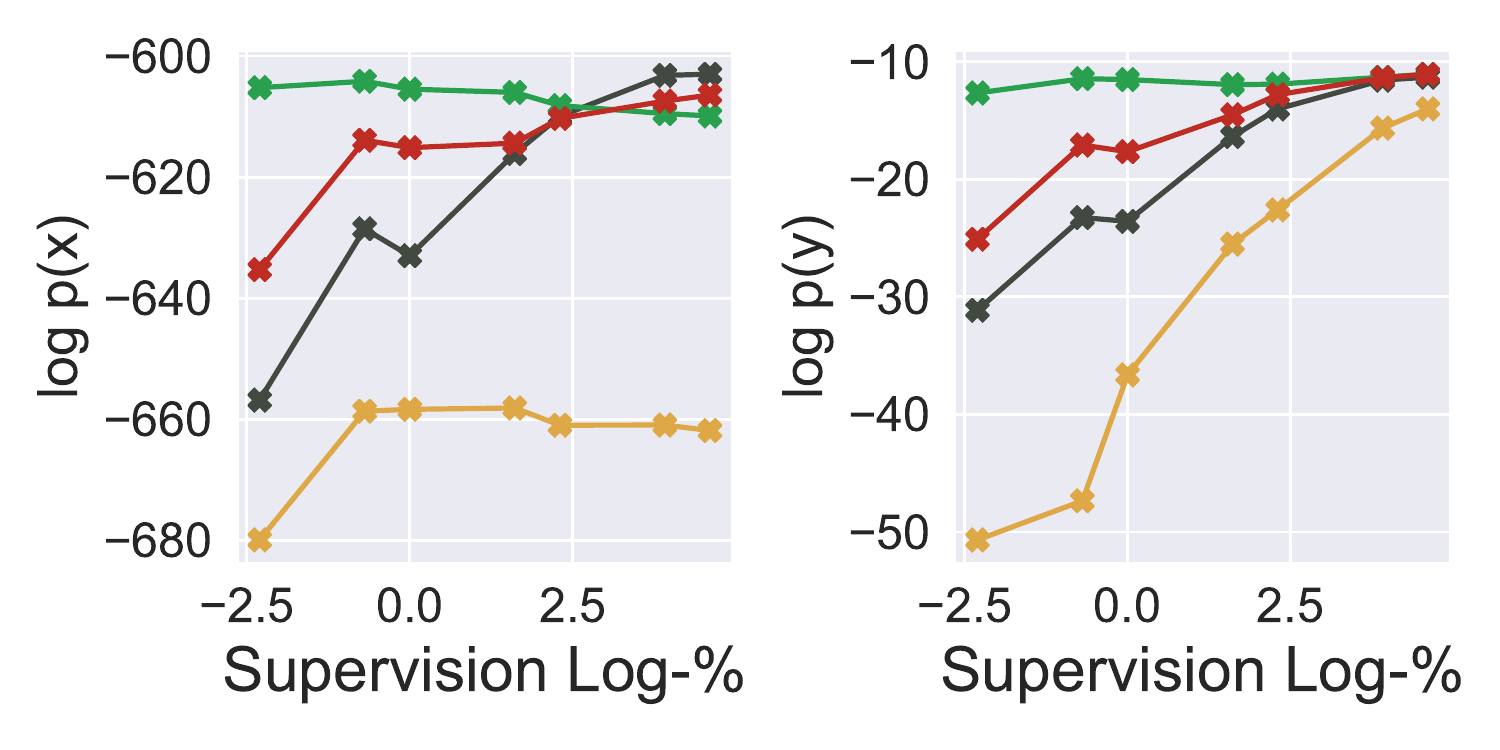}
      \caption{CIC}
  \end{subfigure}
  \begin{subfigure}[h]{0.24\textwidth}
      \centering
      \includegraphics[width=\linewidth]{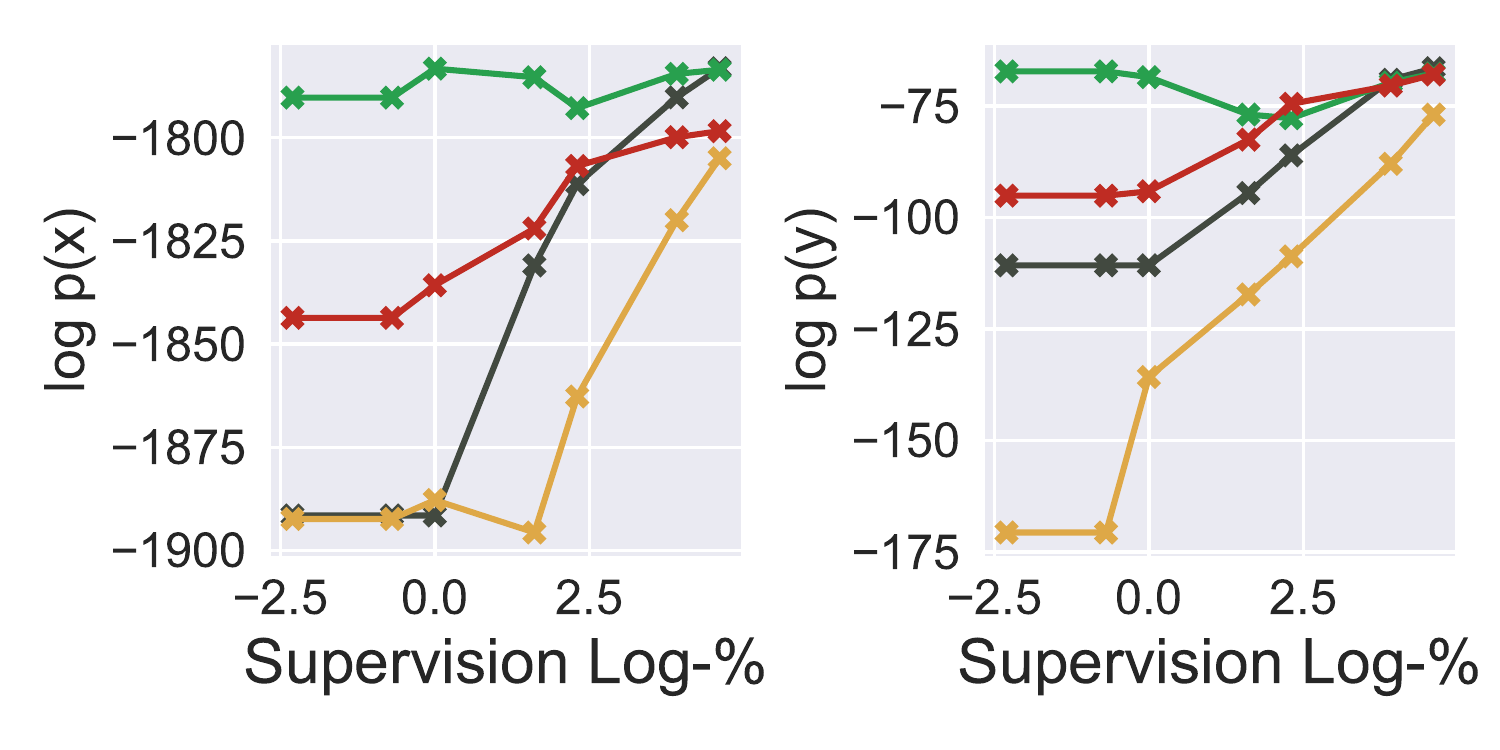}
      \caption{Flickr30k}
  \end{subfigure}
  \begin{subfigure}[h]{0.24\textwidth}
      \centering
      \includegraphics[width=\linewidth]{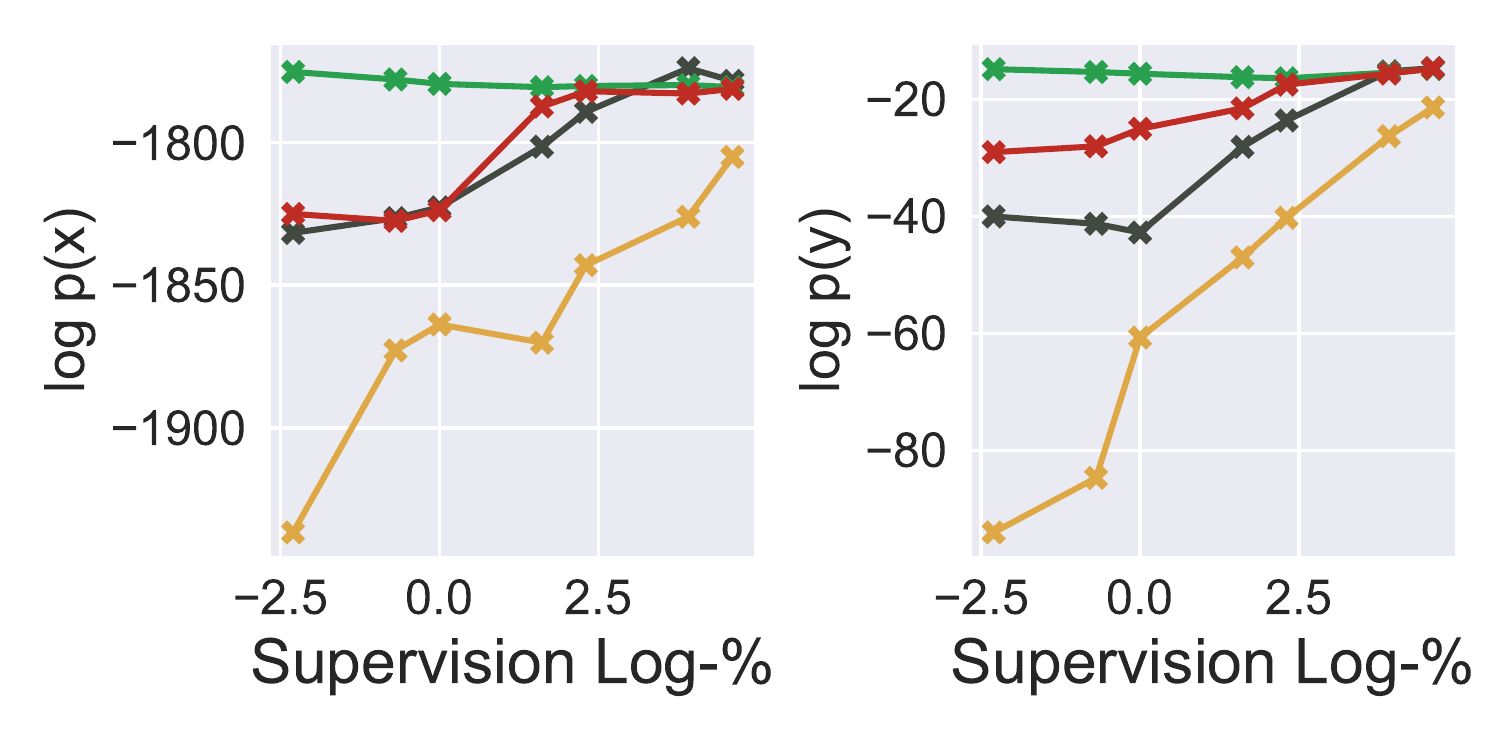}
      \caption{COCO}
  \end{subfigure}
  \caption{Effect of weak supervision on conditional perplexity (a-d) and marginal densities for image and text (e-h). VAEVAE uses unpaired data to learn conditional and marginal distributions, contributing to its strong performance.}
  \label{fig:weaksup2}
\end{figure*}
\begin{figure*}[t]
\centering
  \begin{subfigure}[h]{\textwidth}
      \centering
      \includegraphics[width=0.5\linewidth]{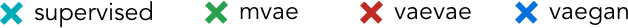}
  \end{subfigure}
  \begin{subfigure}[h]{0.24\textwidth}
      \centering
      \includegraphics[width=\linewidth]{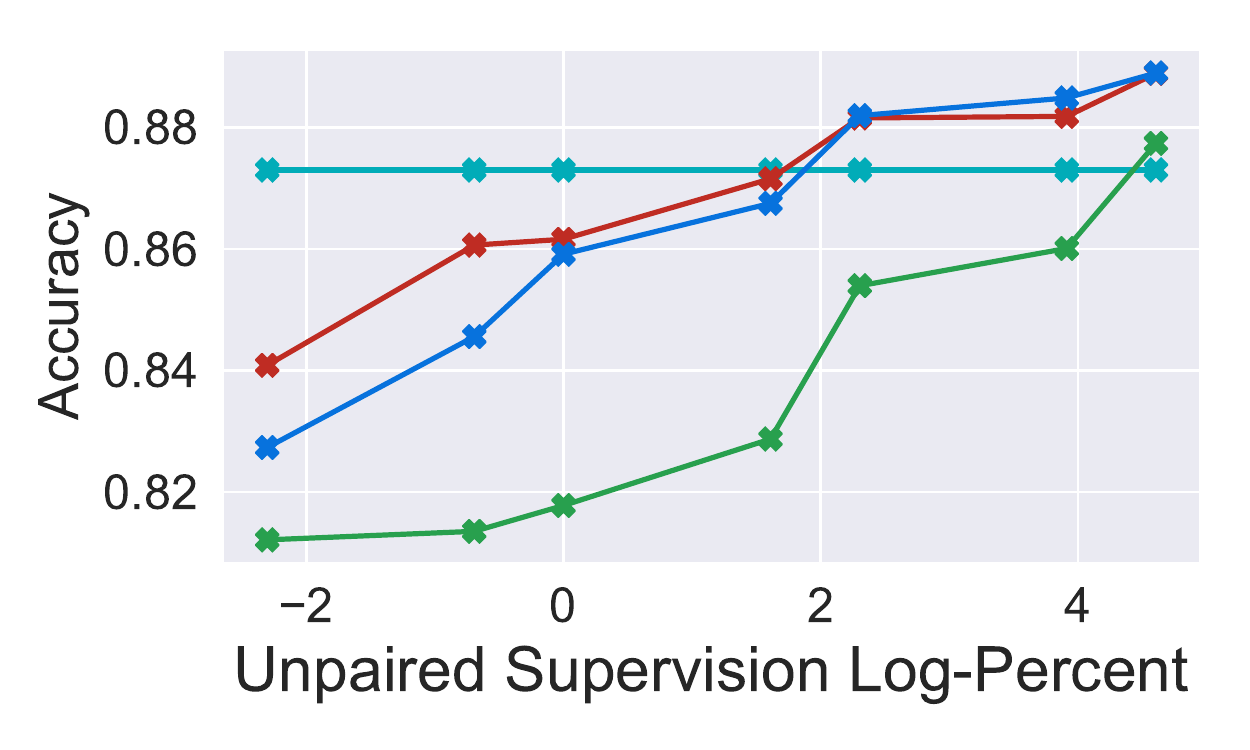}
      \caption{CelebA}
  \end{subfigure}
  \begin{subfigure}[h]{0.24\textwidth}
      \centering
      \includegraphics[width=\linewidth]{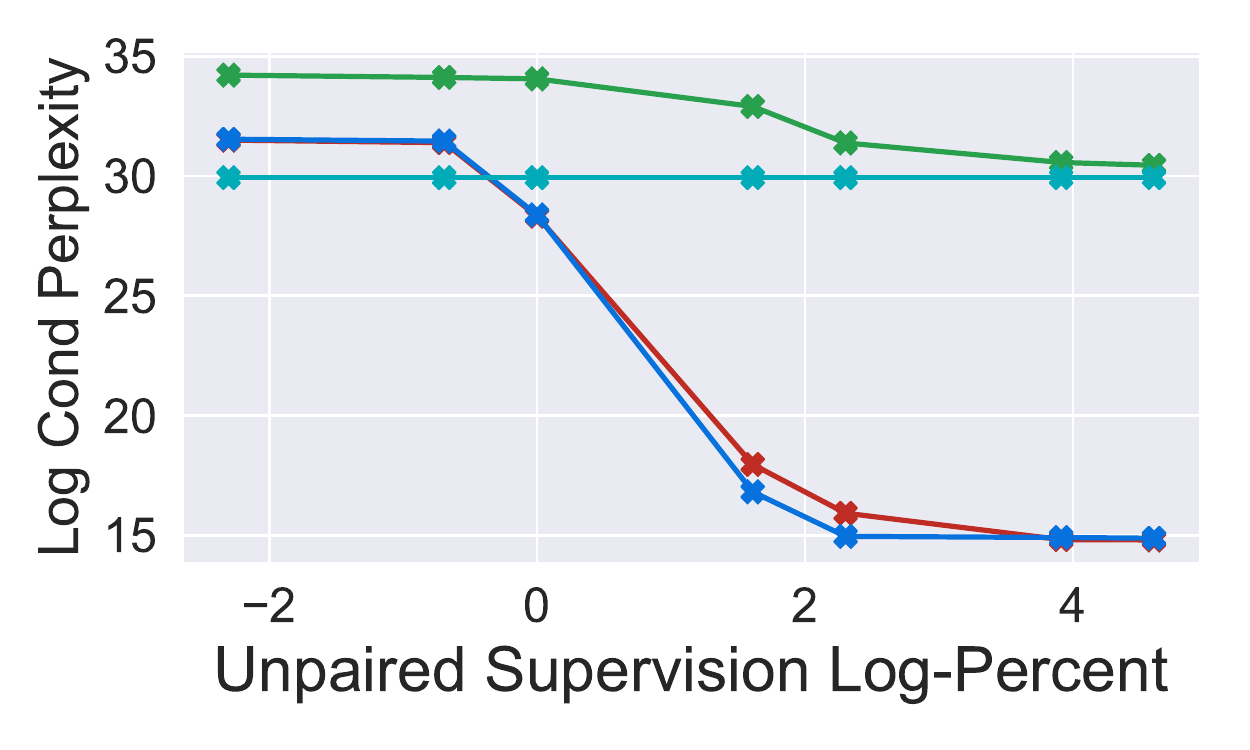}
      \caption{COCO}
  \end{subfigure}
  \begin{subfigure}[h]{0.24\textwidth}
      \centering
      \includegraphics[width=\linewidth]{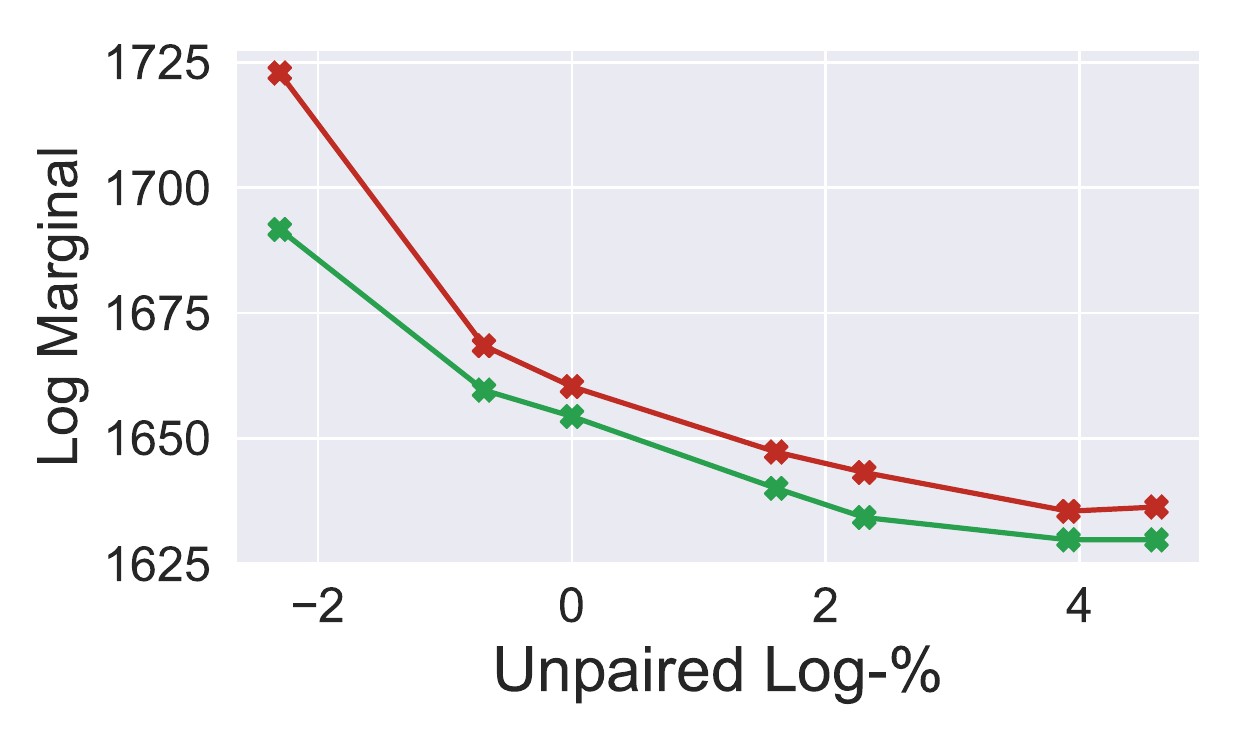}
      \caption{CelebA}
  \end{subfigure}
  \begin{subfigure}[h]{0.24\textwidth}
      \centering
      \includegraphics[width=\linewidth]{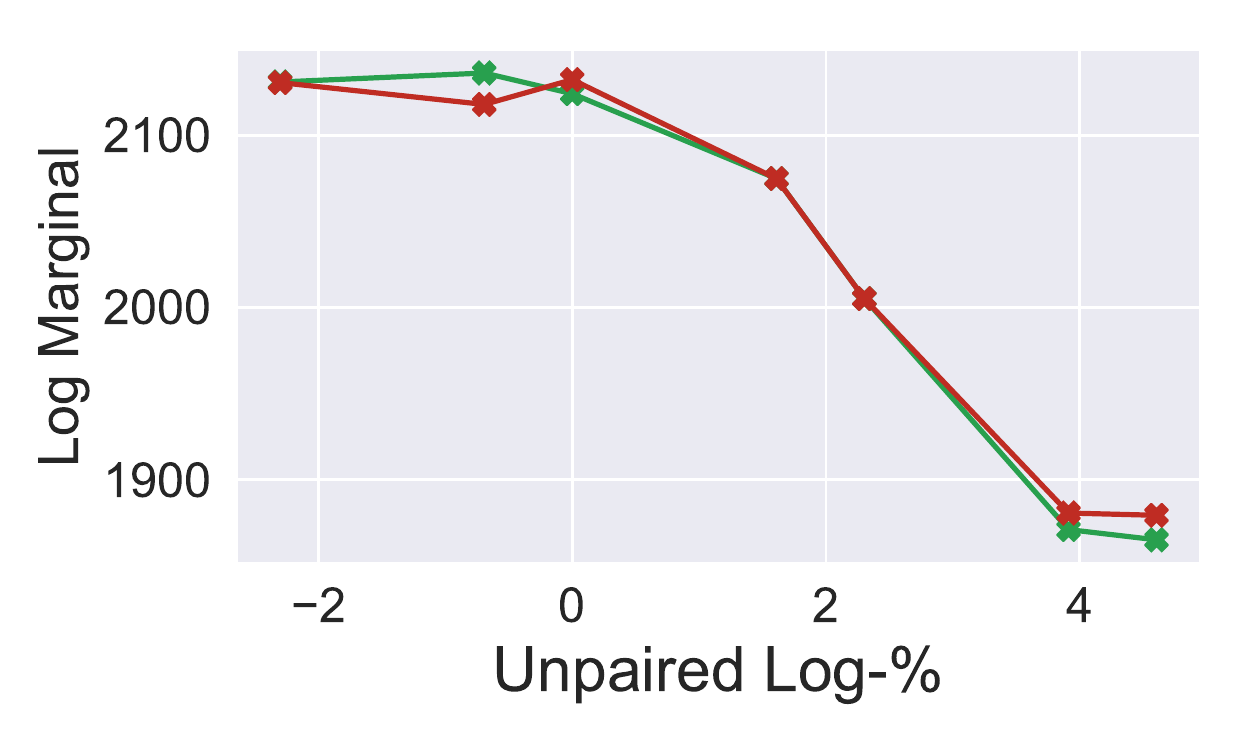}
      \caption{COCO}
  \end{subfigure}
  \caption{Effect of unpaired data on conditional prediction (a,b) and marginal (image) density estimation (c,d). Since supervised baseline is unable to use unpaired data, it is constant. The amount of paired supervision is at 1\% for all experiments here. }
  \label{fig:unsup1}
\end{figure*}

\subsection{Multimodal VAEGANs}
Recall in Sec.~\ref{sec:gan}, we showed that a KL divergence can be rewritten as a supremum over the difference of two expectation terms. As $\mathcal{L}_{\text{M}}$ is composed of four KL divergences, we can replace any of four terms with its Fenchel dual. For example, the divergence between two multimodal distributions can be lower bounded by
\begin{align}
  \KL(q_\phi(\vx,\vy,\vz)||r(\vx,\vy,\vz)) \geq \sup_{S \in \mathcal{S}}\{\mathbb{E}_{q_\phi(\vx,\vy,\vz)}[S(\vx,\vy,\vz)] - \log \mathbb{E}_{r(\vx,\vy,\vz)}[e^{S(\vx,\vy,\vz)-1}]\} \label{eq:vaegan:obj}
\end{align}
where $r(\vx,\vy,\vz)=q_\phi(\vx,\vz)p_\theta(\vy|\vx,\vz)$ and $\mathcal{S} = \{ S: \mathcal{X} \times \mathcal{Y} \times \mathcal{Z} \rightarrow \mathbb{R} \}$ specifies a family of multimodal functions analogous to the discriminator $\mathcal{T}$ from Eq.~\ref{eqn:gan}. We arrive at similar definitions for the other terms in Thm.~\ref{thm:1}. Drawing from \citep{dumoulin2016adversarially}, $q_\phi(\vz|\vx,\vy)$, $q_\phi(\vz|\vx)$, and $q_\phi(\vz|\vy)$ are adversarially learned inference (ALI) networks, commonly Gaussian posteriors. Sampling from the joint $q_\phi(\vx,\vy,\vz)$ amounts to sampling $\vz \sim q_\phi(\vz|\vx,\vy)$ with $\vx,\vy \sim p_d(\vx,\vy)$, the empirical distribution. Similarly, sampling from the product $q_\phi(\vx,\vz)p_\theta(\vy|\vx,\vz)$ amounts to sampling $\vz \sim q_\phi(\vz|\vx)$ with $\vx \sim p_d(\vx)$ and $\vy \sim p_\theta(\vy|\vz)$. As everything is reparameterizable, we can optimize with SGD. This allows us to define a multimodal objective by replacing any of terms in $\mathcal{L}_{\text{M}}$ with its dual. By doing so, we no longer explicitly model the likelihood term $p_\theta(\cdot|\vz)$, but expect improved sample quality.

\paragraph{Lipschitz Constraint}
GANs are known to struggle with mode collapse, where generated samples have low variety, memorizing individual images from the training dataset. In practice, we find this to be even more problematic in the multimodal domain. Wasserstein GANs (WGAN) \citep{arjovsky2017wasserstein,gulrajani2017improved} have been presented as a possible solution where instead of a variational divergence, the model minimizes the earth mover (EM) distance between the generative and empirical distributions. EM distance has its own dual representation that restricts the discriminator to be 1-Lipschitz, a constraint that prevents a discriminator from overpowering the generator, helping deter mode collapse \cite{arjovsky2017wasserstein}. Unfortunately, EM is not directly applicable to our family of objectives. However, using the VAEGAN as presented would indeed face mode collapse issues.

We propose a simple amendment inspired by WGAN: add a 1-Lipschitz constraint on the class of functions, $\mathcal{T}$ and $\mathcal{S}$ in Eq.~\ref{eq:vaegan:obj}. We note that this is still well-defined,
\begin{align}
  \KL(q_\phi(\vx,\vz)||p_\theta(\vx,\vz)) &\geq \sup_{T \in \mathcal{T}}\{\mathbb{E}_{q_\phi(\vx,\vz)}[T(\vx,\vz)] - \log \mathbb{E}_{p_\theta(\vx,\vz)}[e^{T(\vx,\vz)-1}]\}\label{eqn:lip:2}\\
  &\geq \sup_{T \in \mathcal{T}_{1}}\{\mathbb{E}_{q_\phi(\vx,\vz)}[T(\vx,\vz)] - \log \mathbb{E}_{p_\theta(\vx,\vz)}[e^{T(\vx,\vz)-1}]\}
  \label{eqn:lip:3}
\end{align}
where $\mathcal{T}_1$ represents a class of 1-Lipschitz functions. Notably, $\mathcal{T}_1 \subseteq \mathcal{T}$, meaning that the Eq.~\ref{eqn:lip:2} is lower bounded by Eq.~\ref{eqn:lip:3}. Thus, Eq.~\ref{eqn:lip:3} is still a lower bound on the original divergence. We can similarly define $\mathcal{S}_1 \subseteq \mathcal{S}$ for the multimodal discriminator. Fig.~\ref{fig:lipschitz} shows the effect of the Lipschitz constraint on sample quality for VAEGANs. Surprisingly, we find the constraint to reduce mode collapse and improve alignment between modalities.

\begin{figure*}[t]
\centering
  \begin{subfigure}[h]{0.24\textwidth}
      \centering
      \includegraphics[width=\linewidth]{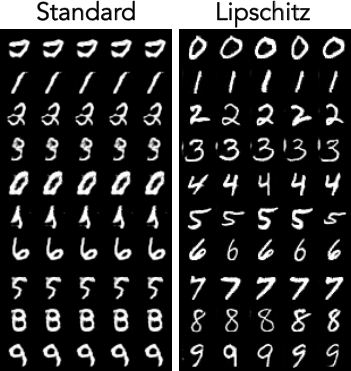}
      \caption{MNIST}
  \end{subfigure}
  \begin{subfigure}[h]{0.24\textwidth}
      \centering
      \includegraphics[width=\linewidth]{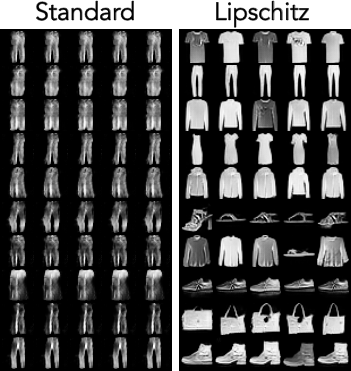}
      \caption{FashionMNIST}
  \end{subfigure}
  \begin{subfigure}[h]{0.24\textwidth}
      \centering
      \includegraphics[width=\linewidth]{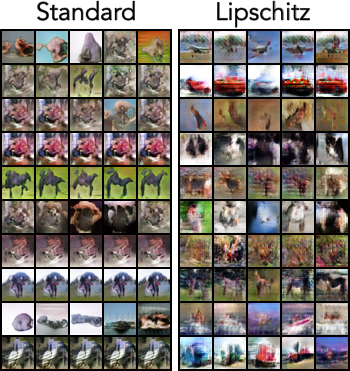}
      \caption{CIFAR10}
  \end{subfigure}
  \begin{subfigure}[h]{0.24\textwidth}
      \centering
      \includegraphics[width=\linewidth]{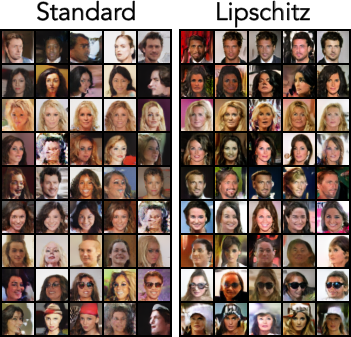}
      \caption{CelebA}
  \end{subfigure}
  \caption{Conditional samples from a VAEGAN for each label with and without the Lipschitz constraint. Without the constraint, we see evidence of mode collapse (repetitive images) and incorrect samples for certain classes (e.g. digits 4 and 5 in MNIST or all of FashionMNIST). }
  \label{fig:lipschitz}
\end{figure*}

\subsection{Multimodal VAE-Flows}
As flow models do exact inference, there is no need to use variational methods. But we can still decompose a multimodal flow model into a marginal and conditional component, namely
\begin{align}
\log p_\theta(\vx,\vy) &= \log p_\theta(\vx) + \log p_\theta(\vy|\vx)\\
&= \log p_\theta(\vy) + \log p_\theta(\vx|\vy)
\end{align}
Sec.~\ref{sec:flow} showed how to parameterize the marginal terms by defining a deterministic transformation $f_\theta: \mathcal{X} \rightarrow \mathcal{Z}$. We can do something similar for the conditional term:
\begin{align}
  \log p_\theta(\vx|\vy) &= \log p_\theta(\vz|\vy) + \log | \det \frac{\partial \vz}{\partial \vx^T} | \\
              &\approx \log q_\phi(\vz|\vy) + \log | \det \frac{\partial \vz}{\partial \vx^T} |\label{eqn:flow:2}
\end{align}
where $\vz = g_\theta(\vx)$ for a new ``flow" function $g:\mathcal{X} \rightarrow \mathcal{Z}$ parameterized by a neural network. Note that $g_\theta(\cdot)$ and $f_\theta(\cdot)$ are two distinct models and do not share parameters. In Eq.~\ref{eqn:flow:2}, we replaced the true (intractable) posterior with a variational approximation. Similarly, we can learn transformations for $\log p_\theta(\vx|\vy)$. In practice, we use real-valued non-volume preserving transforms, or real NVP \citep{dinh2014nice,dinh2016density}. These models design their flow functions using affine coupling layers, which enforces the Jacobian matrix to be triangular, thereby making the determinant tractable. We will denote these multimodal models as VAENVP. Future work could explore using recently proposed flows like Glow \citep{kingma2018glow} or FFJORD \citep{grathwohl2018ffjord}.



\section{VAEGAN and VAENVP Experiments}
\label{sec:hybrid_experiments}
We conduct several experiments using VAEs, GANs, or NVPs to capture image modalities whereas label and text, being discrete, requires VAEs.
With these new hybrid models, we focus on improved sample generation, especially for naturalistic images.

\subsection{Evaluation}
There is a lack of consistent high-quality metrics for sample generation, so we settle for a few heuristics. As in previous work, we compute the Frenet Inception distance (FID) \citep{salimans2016improved} between generated image samples and observations from the training datset. FID is defined as
\begin{equation}
\text{FID} = ||\mu_r - \mu_g||^2 + \text{Tr}(\Sigma_r + \Sigma_g - 2(\Sigma_r\Sigma_g)^{\frac{1}{2}})
\end{equation}
where $\mu_r, \Sigma_r$ are the mean and covariance of activations from an intermediate layer of a deep neural network (we call this an ``oracle" network) given inputs $\vx \sim p_d(\vx)$. The variables $\mu_g, \Sigma_g$ are similarly computed using inputs $\vx \sim p_\theta(\vx|\vz)$. Previous work use Inception \citep{szegedy2015going} trained on ImageNet to compute FID. In practice, we do not find it appropriate to use the same network on all datasets (e.g. Inception on MNIST), especially ones very different in distribution and content than ImageNet. Instead, we train our own supervised network specific to each dataset to compute FID. This implies that while our FID can be compared across models for a single dataset, it cannot be compared across datasets. Specifically, we use a dual path network (DPN) \citep{chen2017dual} as the oracle for image and label datasets. Since nothing restricts FID from being used for image and text datasets, we pull statistics from an intermediate layer from a trained captioner with visual attention \citep{xu2015show}. In all settings, we compute FID using 50k samples.

Given an oracle network, we can separately use it classify or caption \textit{conditionally} sampled images from a multimodal generative model. For instance, given a generated image of a MNIST digit with label 9, we can use the oracle to classify the image. Assuming the oracle is a ``perfect" classifier, a better generative model would produce samples that can be classified correctly. A perfect generator would achieve a score of 1. Likewise, for image and text datasets, a perfect generator should achieve very low perplexity as scored by the oracle. We call this metric the \textit{oracle sampling error} or OSE. In short, FID is a measure of marginal sample quality whereas OSE is a measure of conditional sample quality.

\begin{table*}[t!]
\centering
\footnotesize
\begin{subtable}[t]{\textwidth}
\centering
\begin{tabular}{l|cc|cc|cc|cc}
    \toprule
    & \multicolumn{2}{c}{MNIST} & \multicolumn{2}{c}{FashionMNIST} & \multicolumn{2}{c}{CIFAR10} & \multicolumn{2}{c}{CelebA} \\
    Model & FID & OSE & FID & OSE & FID & OSE & FID & OSE \\
    \midrule
    VAE & $1.31$ & - & $1.35$ & - & $8.48$ & - & $8.86$ & - \\
    GAN & $1.77$ & - & $\mathbf{1.32}$ & - & $\mathbf{2.35}$ & - & $\mathbf{3.39}$ & - \\
    GAN$^\S$ & $1.87$ & - & $2.24$ & - & $3.03$ & - & $4.19$ & - \\
    NVP & $2.17$ & - & $5.01$ & - & $3.86$ & - & $5.05$ & - \\
    CVAE & - & $0.91$ & - & $0.82$ & - & $0.18$ & - & $0.89$ \\
    CGAN & - & $0.95$ & - & $0.71$ & - & $0.39$ & - & $0.89$ \\
    BiVCCA & $4.41$ & $0.90$ & $5.51$ & $0.75$ & $8.88$ & $0.09$ & $18.2$ & $0.90$ \\
    MVAE & $1.60$ & $0.64$ & $1.43$ & $0.53$ & $8.42$ & $0.11$ & $9.09$ & $0.83$ \\
    JMVAE & $2.60$ & $0.80$ & $1.79$ & $0.74$ & $8.61$ & $0.18$ & $8.61$ & $0.87$ \\
    VAEVAE & $1.88$ & $0.89$ & $1.68$ & $0.77$ & $8.48$ & $0.15$ & $8.53$ & $0.89$ \\
    VAEVAE$^\ddagger$ & $1.76$ & $0.89$ & $1.67$ & $0.76$ & $8.51$ & $0.16$ & $8.89$ & $0.90$ \\
    VAEGAN & $12.8$ & $0.50$ & $35.7$ & $0.10$ & $5.92$ & $0.34$ & $18.7$ & $0.89$ \\
    VAEGAN$^\ddagger$ & $11.4$ & $0.59$ & $19.2$ & $0.37$ & $3.13$ & $0.37$ & $13.7$ & $0.88$ \\
    VAEGAN$^\S$ & $\mathbf{0.40}$ & $\mathbf{0.99}$ & $1.44$ & $\mathbf{0.93}$ & $2.80$ & $\mathbf{0.44}$ & $4.07$ & $\mathbf{0.93}$ \\
    VAENVP & $3.14$ & $0.94$ & $4.59$ & $0.76$ & $4.13$ & $0.32$ & $5.71$ & $\mathbf{0.93}$ \\
    \bottomrule
\end{tabular}
\end{subtable}
\newline\vspace{2em}\newline
\begin{subtable}[t]{\textwidth}
\centering
\begin{tabular}{l|cc|cc|cc|cc}
    \toprule
    & \multicolumn{2}{c}{MNIST Math} & \multicolumn{2}{c}{CIC} & \multicolumn{2}{c}{Flickr30k} & \multicolumn{2}{c}{COCO}\\
    Model & FID & OSE & FID & OSE & FID & OSE & FID & OSE \\
    \midrule
    VAE & $4.68$ & - & $20.3$ & - & $2.7e2$ & - & $3.3e2$ & - \\
    GAN & $3.60$ & - & $6.88$ & - & $88.8$ & - & $53.2$ & - \\
    GAN$^\S$ & $3.75$ & - & $7.21$ & - & $1.6e2$ & - & $1.3e2$ & - \\
    NVP & $6.44$ & - & $13.0$ & - & $88.9$ & - & $80.9$ & - \\
    CVAE & - & $4.1e4$ & - & $5e13$ & - & $2.6e7$ & - & $1.8e10$ \\
    CGAN & - & $3.5e4$ & - & $3e13$ & - & $1.2e7$ & - & $4.9e9$ \\
    BiVCCA & $4.62$ & $3.7e4$ & $38.2$ & $3e11$ & $3.0e2$ & $5.8e6$ & $3.9e2$ & $1.5e10$ \\
    MVAE & $4.18$ & $4.0e4$ & $25.0$ & $2e13$ & $3.5e2$ & $4.3e7$ & $4.6e2$ & $1.3e10$ \\
    JMVAE & $4.39$ & $3.9e4$ & $45.1$ & $3e13$ & $3.1e2$ & $5.2e7$ & $3.5e2$ & $1.9e10$ \\
    VAEVAE & $3.79$ & $3.8e4$ & $21.3$ & $3e11$ & $2.6e2$ & $1.2e7$ & $2.9e2$ & $1.2e10$ \\
    VAEVAE$^\ddagger$ & $3.88$ & $3.9e4$ & $15.3$ & $2e11$ & $2.6e2$ & $1.8e7$ & $3.0e2$ & $1.3e10$ \\
    VAEGAN & $\mathbf{1.12}$ & $4.5e4$ & $31.0$ & $4e13$ & $17.2$ & $4.7e6$ & $\mathbf{7.41}$ & $7.7e8$ \\
    VAEGAN$^\ddagger$ & $2.00$ & $4.7e4$ & $34.4$ & $3e13$ & $\mathbf{9.17}$ & $\mathbf{1.1e6}$ & $10.6$ & $\mathbf{5.6e8}$ \\
    VAEGAN$^\S$ & $1.74$ & $\mathbf{3.4e4}$ & $\mathbf{3.01}$ & $\mathbf{1e10}$ & $15.7$ & $4.1e6$ & $10.9$ & $6.2e8$ \\
    VAENVP & $4.98$ & $3.8e4$ & $9.46$ & $2e10$ & $76.9$ & $9.8e6$ & $57.8$ & $4.4e9$ \\
    \bottomrule
\end{tabular}
\end{subtable}
\caption{Sample Evaluation of a suite of Unimodal and Multimodal Models. The superscript $\ddagger$ indicates using  Product-of-Experts in the variational posterior. The superscript $\S$ indicates using the Lipschitz constraint in training. We use the Frenet Inception distance (FID) to measure the quality of samples from the marginal distribution. We define a new evaluation metric : oracle sampling error (OSE) to measure the quality of samples from a conditional distribution. OSE measures the classification error or conditional perplexity of generated images conditioned on labels or text, respectively. }
\label{table:image_label_text:hybrid}
\end{table*}

\subsection{Results}

Table~\ref{table:image_label_text:hybrid} show FID and OSE for a suite of popular unimodal and multimodal generative models. We include VAE, GAN, and NVP as lower bounds for FID and include CVAE and CGAN as upper bounds for OSE. In the table, the superscript $\S$ means the model is trained with a Lipschitz constraint; the superscript $\ddagger$ means the model is trained with PoE. In both tables, VAEGAN achieves one of (if not) the best (lowest) FID and best (highest) OSE. For simpler image domains, the sample quality of VAE, NVP, and GAN models are roughly equal (e.g. MNIST), as expected. For more complex image domains, we can see the blurriness of VAEs represented as high FIDs and low OSEs. Generally, flow models (NVP and VAENVP) are better than VAE but worse than GANs. The experiments also highlight that the Lipschitz constraint is extremely important for VAEGAN models. Without it, VAEGAN suffers greatly from mode collapse, as evidenced by very high FIDs and low OSEs (see MNIST, CIFAR, CelebA, CIC). With the Lipschitz constraint, VAEGAN$^{\S}$ achieves lower FID than the unimodal baselines and higher OSE than CVAE and CGAN. The impact of the Lipschitz constraint is much less noticeable with the unimodal GAN, sometimes even lowering performance.

In the supplement, we include random samples from our generative models (see Fig.~\ref{fig:imagelabel_samples} and \ref{fig:imagetext_samples}).
For CelebA, we start to see the artifacts of each model: VAEVAE samples have blurry background, VAEGAN samples look similar to each other, and VAENVP samples appear distorted by wave-like noise. For naturalistic images like those in CIFAR10, Flickr30k, and COCO, we see a stark difference: VAEGAN samples look sharp whereas VAEVAE and VAENVP samples are dominated by artifacts. Finally, we also include randomly-chosen conditional captions. While not perfect, they do often faithfully describe the scene.

\section{Abstraction of Visual Features}

A key motivation for multimodal generative models is the hypothesis that sharing statistics across diverse modalities will encourage more abstract representations.
Having now explored many examples of learning with images and text, we wanted to investigate the effect of language on the learned visual representations. Natural language is intrinsically compositional, emphasizing the relationships between objects and attributes in a scene. This suggests that visual representations influenced by language should also be more abstract, focusing on objects, their properties, and relations.

In order to coherently ask whether visual representations influenced by language are more abstract than those without language, we first need a way to learn representations comparably from images alone or images and language together. Our family of multimodal models provide this setting.
We evaluate visual representations which are the mean of the posterior specified by the image inference network $q_\phi(\vz|\vx)$, after unimodal or multimodal training.
Concretely, we compare VAEVAE and VAEGAN features against unimodal image representations from VAEs and GANs (trained without language). We do \textit{not} use the joint representation $q_\phi(\vz|\vx,\vy)$ that explicitly embeds language.

Next, we need a way to evaluate relevant abstraction in the learned representations. Measuring abstraction itself is not easy, so we use three  proxies. First, we directly approximate the compositionality of continuous visual representations, using a recently proposed measure.
Second, we measure improvements in bounding box prediction with our ``multimodal" image features, as a way of testing when learned representations are more ``object centric''.
Finally, we do the same for visual relation prediction, as a way of testing when learned representations are more ``relational''.

\subsection{Compositionality Estimation}

Tree Reconstruction Error (TRE) \citep{andreas2019measuring} is a recently proposed metric for compositionality on vector representations, when the underlying compositional structure of the domain is known.
We imagine having an ``oracle" that maps each input image to a composition of primitive objects and relations (a decorated tree).
We then learn vector embeddings for the primitives, and potentially the composition operator, that when composed along the tree best approximate the image representation we are evaluating.
A perfect approximation would indicate that the target representations exactly reflect the structure produced by the oracle. Hence we use the error between the composed representation and the target representation as a measure of (or lack of) compositionality: A smaller distance suggests more compositional structure.

We compute TRE on the ShapeWorld dataset\footnote{We use a simplified Python implementation of ShapeWorld (\url{https://github.com/jayelm/minishapeworld}) that allows us to store colors, positions, rotations, and sizes of rendered shapes.} \citep{kuhnle2017shapeworld} which contains images of geometric shapes with different colors, positions, rotations, and sizes along with captions describing the scene (e.g.~``A red square above a blue circle'').
\begin{figure}[h!]
  \centering
  \includegraphics[width=0.5\linewidth]{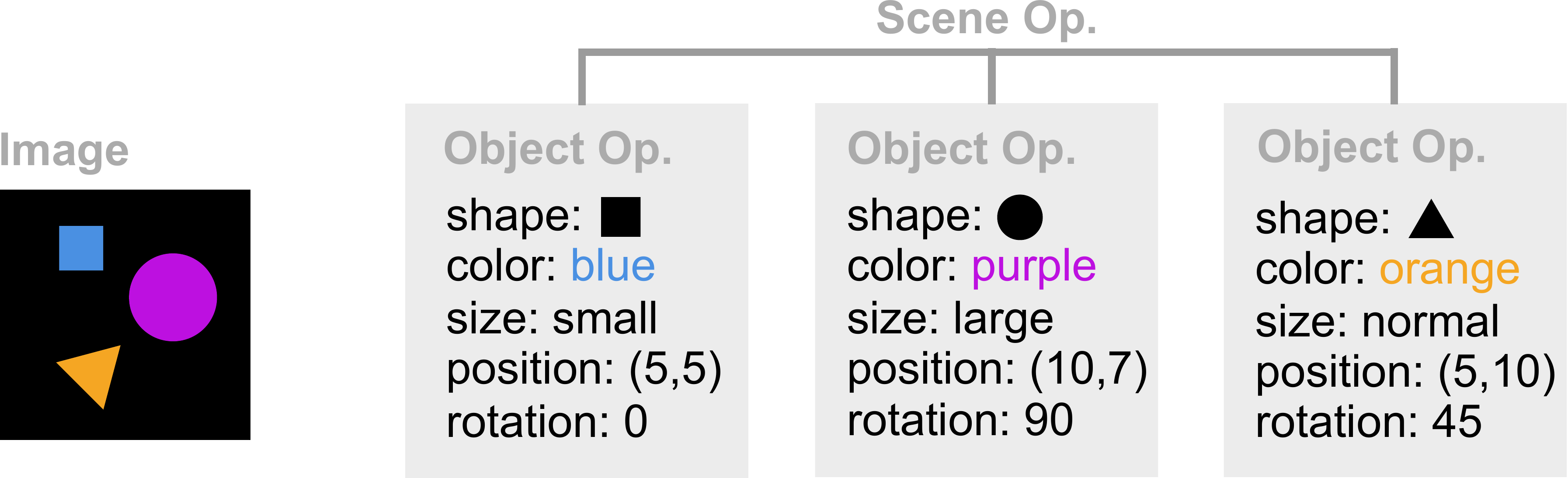}
  \caption{Setup for the oracle representation used to compute TRE. The ``Object" operator is a small neural network over the features of the object. The ``Scene" operator is a deterministic average.}
  \label{fig:trecomp:fig}
\end{figure}

We initialize a vector embedding for each possible color, shape, size, x coordinate, and y coordinate. Then, we define two compositional operators: first, the ``Object" operator ingests the vectorized color, shape, size, and position and returns a single ``Object" vector. We parameterize the ``Object" operator as a MLP of varying depth.
(\citep{andreas2019measuring} assumed fixed composition by averaging, which corresponds to the special case where the ``Object'' MLP is linear.)
Second, the ``Scene" operator averages a set of ``Object" vectors into a ``Scene" vector. See Fig.~\ref{fig:trecomp:fig} for an example. The primitive embeddings and the parameters of the ``Object" operator are optimized with SGD using the distance between our image representation and the corresponding ``Scene" vector as an objective. We experiment with L$_1$ and L$_2$ distance. Constructing the oracle representation in this manner ensures an object-oriented representation. Thus, we can treat the TRE score with respect to the variational image representation as a measure of its abstraction.

Table~\ref{table:tre} measures how well VAEVAE and VAEGAN image representations can be reconstructed from the oracle representation. Compared to unimodal baselines, our model family has consistently lower TRE scores. This indicates that the representations learned mutimodally better reflect the underlying object features of this domain, abstracting away from the lower level visual information. Also, VAEGAN consistently outperforms VAEVAE, which suggests that a stronger visual model is important for capturing compositionality. As we vary the expressivity of the ``Object" MLP from purely linear to three layers with ReLU, the patterns remain consistent. In the supplement Sec.~\ref{sec:sup:compexp}, we explore a weaker version of TRE where we use a bag-of-words feature set to represent language (e.g. color of triangles, number of triangles, color of squares, etc) with summation as the composition operator, as done in \citep{andreas2019measuring}. We find similar patterns to Table~\ref{table:tre}.

\begin{table}[h!]
\centering
\footnotesize
\begin{tabular}{l|l|c|c}
    \toprule
    Model & Oracle & TRE (L$_2$) & TRE (L$_1$) \\
    \midrule
    VAE & Linear & $4.468 \pm 0.011$ & $21.489 \pm 0.058$ \\
    GAN+ALI & Linear & $4.606 \pm 0.011$ & $19.507 \pm 0.051$ \\
    VAEVAE & Linear & $3.686 \pm 0.009$ & $16.188 \pm 0.046$ \\
    VAEGAN & Linear & $\mathbf{2.049 \pm 0.008}$ & $\mathbf{10.387 \pm 0.043}$ \\
    \midrule
    VAE & 2-MLP & $2.806 \pm 0.008$ & $13.776 \pm 0.042$ \\
    GAN+ALI & 2-MLP & $2.397 \pm 0.008$ & $11.099 \pm 0.033$ \\
    VAEVAE & 2-MLP & $1.955 \pm 0.008$ & $10.355 \pm 0.035$ \\
    VAEGAN & 2-MLP & $\mathbf{1.874 \pm 0.007}$ & $\mathbf{9.595 \pm 0.037}$ \\
    \midrule
    VAE & 3-MLP & $2.605 \pm 0.008$ & $12.688 \pm 0.039$ \\
    GAN+ALI & 3-MLP & $2.239 \pm 0.007$ & $10.535 \pm 0.033$ \\
    VAEVAE & 3-MLP & $1.835 \pm 0.008$ & $8.743 \pm 0.031$ \\
    VAEGAN & 3-MLP & $\mathbf{1.617 \pm 0.006}$ & $\mathbf{8.212 \pm 0.029}$ \\
    \bottomrule
\end{tabular}
\caption{Measuring compositionality on image representations $q_\phi(\vz|\vx)$ using the TRE metric and the MiniShapeWorld dataset: a lower number is better. Image features learned using VAEVAE and VAEGAN models report better TRE scores.}
\label{table:tre}
\end{table}

\subsection{Object Detection}

In addition to measuring compositionality directly, we can infer the impact of language on visual representations through transfer tasks that require knowledge of objects, relations, and attributes. Here, we consider bounding box prediction. The hypothesis is that image representations learned in a multimodal setting should contain more information about objects, since language aptly describes the objects in a scene. This should make it easier to identify the locations of objects in an image.

To test this hypothesis, we first take the COCO 2017 \citep{chen2015microsoft} dataset and pretrain a multimodal generative model e.g. VAEVAE or VAEGAN. Next, we train a well-known single-shot detection model, YOLOv3\footnote{We use a public PyTorch implementation of YOLOv3 with a pretrained DarkNet53 backbone: \url{https://github.com/DeNA/PyTorch_YOLOv3}. All hyperparameters are kept as is.} \citep{redmon2018yolov3,redmon2016you} with the following edit: using a linear layer we expand the multimodal embedding to be $(1, S, S)$ where $S$ is the maximum input size at the current YOLO layer. Since it is common to train YOLOv3 with multiscale inputs, we use an adaptive pooling layer to reshape the embedding to be $(1, S_{\text{cur}}, S_{\text{cur}})$ where $S_{\text{cur}}$ is the size of the inputs in the current minibatch and layer. At each residual block with input of shape $(F, S_{\text{cur}}, S_{\text{cur}})$, we append the multimodal filter followed by a 1x1 convolution to project the input back to $F$ filters. Since there are several residual blocks in the YOLOv3 model, the intuition of the design to repeatedly offer information from the multimodal embedding to the detection model. The parameters of the generative model are frozen so training YOLO does not change the multimodal representation. We train for 500k iterations and measure AP[IoU=0.5] and AP[IoU=0.5:0.95] every 5000 iterations on the test set.
\begin{figure}[h]
  \centering
  \begin{subfigure}[h]{0.40\linewidth}
      \centering
      \includegraphics[width=\linewidth]{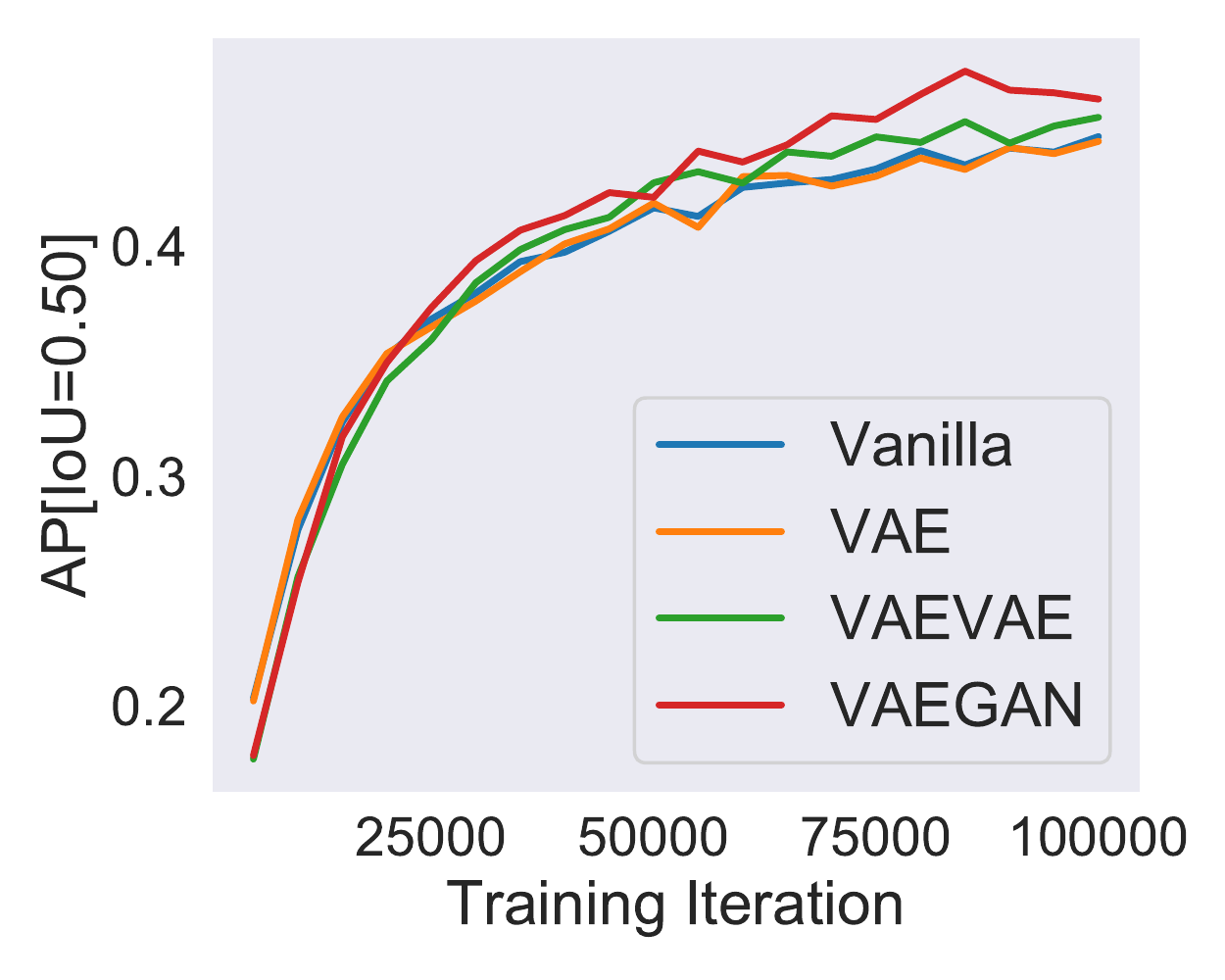}
      \caption{}
  \end{subfigure}
  \begin{subfigure}[h]{0.40\linewidth}
      \centering
      \includegraphics[width=\linewidth]{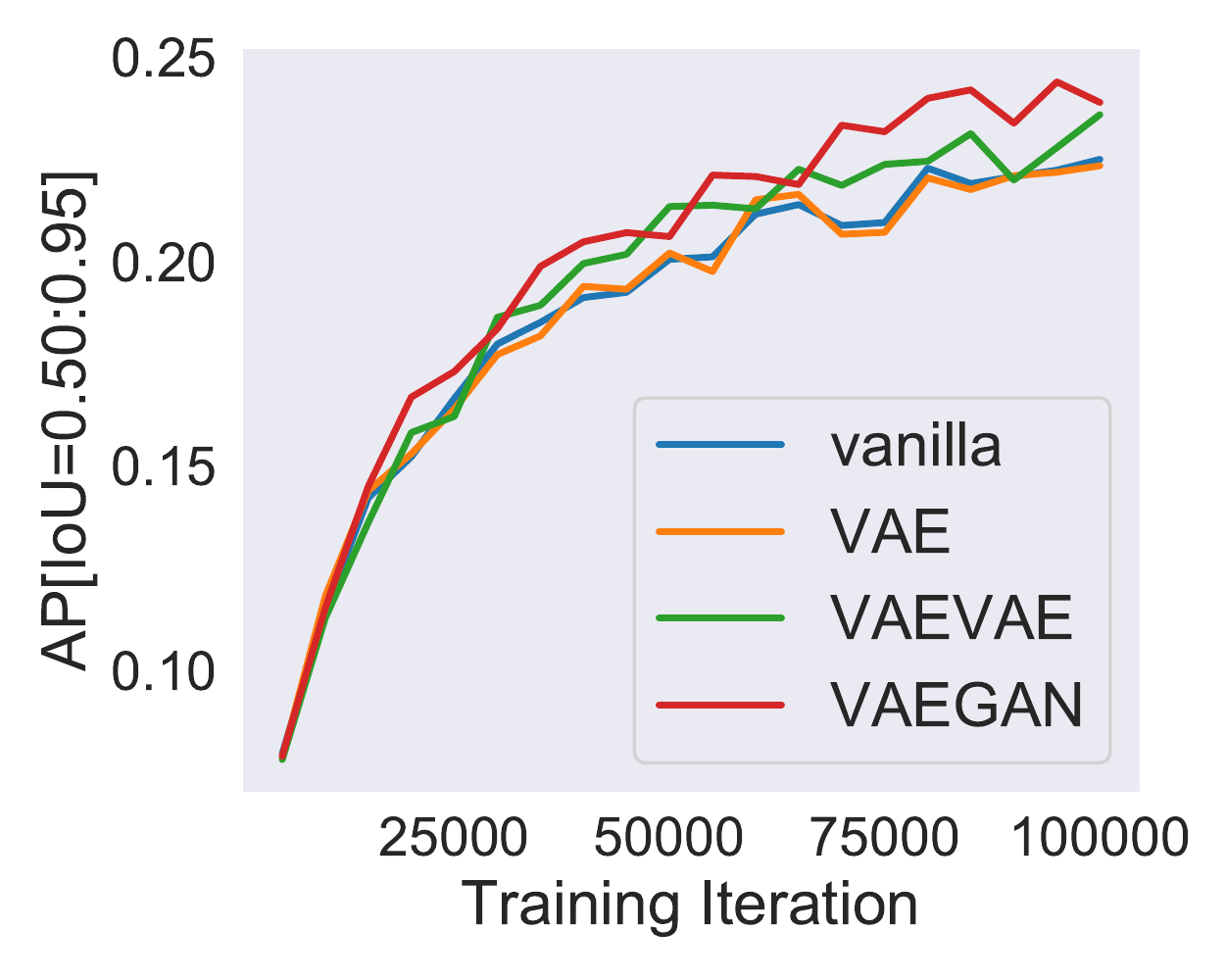}
      \caption{}
  \end{subfigure}
  \caption{Comparison of trajectories showing average precisions measured every 5000 training iterations. Adding VAEVAE and VAEGAN image embeddings from $q(\vz|\vx)$ (note that no language is explicitly given) to the YOLOv3 model improves learning speed, reaching higher evaluation metrics earlier in training.}
  \label{fig:downstream}
\end{figure}

In Fig.~\ref{fig:downstream}, we find YOLOv3 with ``multimodal" image features learns faster, evidenced by the red (VAEGAN) and green (VAEVAE) curves bounding the blue (standard) one from above. Besides the vanilla YOLOv3, we also include a baseline using image representations learned without language, from a VAE. Notably, the plots show that this baseline was almost identical in training trajectory to the vanilla YOLOv3 model, which is sensible since YOLOv3 already includes pretrained Darknet image features. Since DarkNet53 is a much larger model than the VAE, we do not expect the VAE image features do not offer any additional information. These results are encouraging, as they suggest that multimodal features can help us learn to perform complex visual tasks more efficiently. In many ways, language describing an image acts as strong supervision signal for any task involving the objects and relations in the image. Rather than initializing our bounding box detection models from image features alone (which may be less compositional), we  find that we need fewer epochs or fewer parameters by using a multimodal initialization. For large models like YOLOv3 which take significant computation, faster learning can make a large impact.
\begin{table*}[t!]
\centering
\scriptsize
\begin{tabular}{lccccccccc}
    \toprule
    & \multicolumn{3}{c}{Scene Graph Det.} & \multicolumn{3}{c}{Scene Graph Cls.} & \multicolumn{3}{c}{Predicate Cls.}\\
    Recall at & 20 & 50 & 100 & 20 & 50 & 100 & 20 & 50 & 100 \\
    \midrule
    LSVRD \citep{zhang2019large} & 12.5 & 18.5 & 23.3 & 22.8 & 23.2 & 23.2 & 45.7 & 46.5 & 46.5 \\
    LSVRD + VAE & 12.4 & 18.3 & 23.0 & 22.9 & 23.3 & 23.3 & 45.7 & 46.6 & 46.6 \\
    LSVRD + VAEGAN & 12.5 & 18.5 & 23.4 & 33.1 & 33.9 & 33.9 & 55.0 & 56.1 & 56.2 \\
    LSVRD + Word2Vec \citep{zhang2019large} & 20.2 & \textbf{27.4} & 32.1 & 36.2 & 36.9 & 36.9 & 67.4 & 69.0 & 69.0 \\
    LSVRD + VAE + Word2Vec & 20.2 & 27.1 & 32.0 & 36.0 & 36.7 & 36.7 & 67.5 & 68.8 & 68.8 \\
    LSVRD + VAEGAN + Word2Vec & 20.2 & 27.3 & \textbf{32.2} & \textbf{36.7} & \textbf{37.2} & \textbf{37.2} & \textbf{67.9} & \textbf{69.3} & \textbf{69.3} \\
    \bottomrule
\end{tabular}
\caption{Comparison with one of the state-of-the-art visual relation models on the VG200 dataset. In particular, we measure the effect of VAEGAN image representations (trained in a multimodal setting) on the ``Large Scale VRD" architecture.}
\label{table:vrd_results}
\end{table*}

\subsection{Visual Relation Prediction}
The second transfer task we consider is relation prediction (e.g. is the person driving the car?), of which a sub-task is object detection. Since we have shown promising results with predicting bounding boxes, we might hope that language describing objects and relations in an image can have a similar effect here.

We use the Visual Genome dataset \citep{krishna2017visual}, which contains images with annotated object labels, bounding boxes, and captions describing the objects in the scene. We first train the VAEVAE or VAEGAN on the image and captions alone, after which we only keep the image encoder, $q_\phi(\vz|\vx)$ to extract representations. For relation prediction, we use a popular subset of Visual Genome called VG200 \cite{xu2017scene,newell2017pixels,zellers2018neural,yang2018graph,zhang2019large}. VG200 contains 150 object categories and 50 predicate categories.

We pick a state-of-the-art relation predictor built on the MaskRCNN architecture \cite{he2017mask}, denoted LSVRD\footnote{We use the authors' PyTorch implementation of LSVRD: \url{https://github.com/facebookresearch/Large-Scale-VRD}. All hyperparameters are kept as is.} \citep{zhang2018large} and edit the architecture to include the multimodal embedding. The LSVRD architecture contains two RCNN components, whose outputs are fed into the different heads for classification, detection, and relation prediction. For each RCNN, we combine the output of its base convolutional body (call this $a$) with the multimodal embedding in a similar fashion as above: assuming $a$ has $F$ filters, the embedding is resized to a single filter of matching width and height to $a$ (this is done via adaptive pooling to handle multiscale inputs). Then, concatenating the multimodal filter with $a$ for a total of $F+1$ filters, we use a 1x1 convolutional layer to output $F$ filters. The rest of the LSVRD network is untouched. As in \citep{zhang2018large}, during training, the true bounding box and object classes are given to the model. For evaluation, \citeauthor{zhang2018large} compute three metrics: scene graph detection (SGDET) where neither the bounding box nor the object class is given, scene graph classifcation (SGCLS) where only the bounding box is given, and predicate classification (PRDCLS) where both are given.

\citeauthor{zhang2018large} already explores directly adding  language features by embedding the object and predicate names with Word2Vec \citep{mikolov2013efficient}, which we find in Table~\ref{table:vrd_results} to significantly improve performance. Word2vec provides a good prior over which relations are semantically plausible, which is especially important given the sparse labels in VG200. We compare LSVRD with and without Word2vec to LSVRD with multimodal embeddings, the latter of which we expect to provide compositional image features that may positively impact learning.

Table~\ref{table:vrd_results} show results for six variations of LSVRD. First, we were able to reproduce the improvements in performance from using Word2Vec embeddings. Second, we point out that adding VAE embeddings to LSVRD did not impact performance -- since LSVRD uses a pretrained object detection model that captures most of the salient visual features already. However, adding VAEGAN image features trained with the presence of natural language leads to significant improvements in  SGCLS and PRDCLS (around 10\% without Word2Vec). Critically, no explicit language features are given, the VAEGAN features are from the image encoder alone. Despite that, the performance of LSVRD + VAEGAN close the gap to using pre-trained Word2Vec embeddings. Given the previous evidence of compositionality in our multimodal representations, we suspect the VAEGAN image fatures to contain information describing objects and relations in an image imbued by language during pre-training. Finally, we separately train a version of LSVRD with both Word2Vec and VAEGAN representations. Since VAEGAN representations were exposed to complete sentences rather than individual words, we might expect further improvements. The last row of Table~\ref{table:vrd_results} shows a modest but consistent 0.3 to 0.5 percent increase in SGCLS and PRDCLS over a SOTA visual relation model.


\section{Related Work}
\label{sec:related_work}
In recent years, there has been a series of multimodal deep generative models. We provide an overview of a few below.

\paragraph{Conditional Models}
Many of the earlier multimodal generative models focus on conditional distributions. For example, given two modalities $\vx$ and $\vy$, conditional VAEs (CVAE) \citep{sohn2015learning} lower bound $\log p_\theta(\vx|\vy)$ where $\vy$ is often labels \citep{radford2015unsupervised}, attributes \citep{yan2016attribute2image}, text \citep{reed2016generative}, or another image \citep{isola2017image}. Similarly, conditional GANs (CGAN) \citep{mirza2014conditional} facilitate sample generation conditioned on another modality. This family of models are not truly \textit{multimodal} as they are not bi-directional and avoid capturing marginal and joint distributions.

\paragraph{Joint Models} Many of the generative models over two modalities share the same graphical model, yet differ in the objective. First, the bidirectional variational canonical correlation analysis (BiVCCA) \citep{wang2016deep} optimizes a linear combination of ELBOs using the two unimodal variational posteriors:
\begin{align*}
  \mathcal{L}_{\text{BiVCCA}} &= \mathbb{E}_{p_d(\vx,\vy)}\left(\omega \mathbb{E}_{q_\phi(\vz|\vx)}\left[\log \frac{p_\theta(\vx,\vy,\vz)}{q_\phi(\vz|\vx)}\right] + (1-\omega)\mathbb{E}_{q_\phi(\vz|\vy)}\left[\log \frac{p_\theta(\vx,\vy,\vz)}{q_\phi(\vz|\vy)}\right]\right)
\end{align*}
where $\omega \in [0, 1]$ is a weight. In particular, this model allows for bi-directional translation between modalities. However, it avoids specifying a joint variational posterior, which places a lot of the burden to do good inference on unimodal posteriors, which is difficult for very distinct modalities.

Variations on multimodal VAEs (MVAE) \citep{vedantam2017generative,wu2018multimodal} do include a multimodal variational posterior, $q_\phi(\vz|\vx,\vy)$. For example, \citep{wu2018multimodal} optimize a sum of three lower bounds:
\begin{align*}
\mathcal{L}_{\text{MVAE}} &= \text{ELBO}_{\phi,\theta}(\vx,\vy) + \text{ELBO}_{\phi,\theta}(\vx) + \text{ELBO}_{\phi,\theta}(\vy) \\
&\leq \log p_\theta(\vx,\vy) + \log p_\theta(\vx) + \log p_\theta(\vy)
\label{eqn:mvae}
\end{align*}
where the unimodal ELBOs require unimodal  posteriors: $q_\phi(\vz|\vx)$ and $q_\phi(\vz|\vy)$. Specifically, \citep{wu2018multimodal} define the $q_\phi(\vz|\vx,\vy)$ as the product of the unimodal posteriors.
  \begin{align*}
    q_\phi(\vz|\vx,\vy) \propto q_\phi(\vz|\vx)q_\phi(\vz|\vy)p(\vz)
  \end{align*}
assuming $q_\phi(\vz|\cdot)$ specifies a Gaussian distribution and a product of Gaussians is itself Gaussian. While this is an elegant formulation, we note that $\mathcal{L}_{\text{MVAE}}$ double counts the marginal distributions.
\begin{align*}
  \mathcal{L}_{\text{MVAE}} &\leq 2\log p_\theta(\vx) + \log p_\theta(\vy) +\log p_\theta(\vy|\vx) \\
  &= \log p_\theta(\vx) + 2\log p_\theta(\vy) +\log p_\theta(\vx|\vy)
\end{align*}
This could have undesirable consequences in prioritizing the marginals over the conditionals, leading to poor cross-modality sampling. Finally, another similar model is the joint multimodal VAE (JMVAE) \citep{suzuki2016joint}, which optimizes:
\begin{align*}
  \mathcal{L}_{\text{JMVAE}} &= \mathbb{E}_{p_d(\vx,\vy)}\left[\text{ELBO}_{\phi,\theta}(\vx,\vy) - \omega\KL(q_\phi(\vz|\vx,\vy)||q_\phi(\vz|\vx)) - \omega\KL(q_\phi(\vz|\vx,\vy)||q_\phi(\vz|\vy))\right]
\end{align*}
where $\omega \in \mathbb{R}^+$ is a hyperparameter.
Critically, this is very similar to the two conditional divergence terms in our multimodal objective, $\mathcal{L}_{\text{M}}$. In \citep{suzuki2016joint}, the authors have a similar derivation to Lemma~\ref{lem:3} that shows that the JMVAE is bounded:
\begin{equation*}
   \mathcal{L}_{\text{JMVAE}} \leq \log p_\theta(\vx|\vy) +\log p_\theta(\vy|\vx)
\end{equation*}
Almost opposite to the MVAEs, the JMVAE prioritizes the conditionals over the marginals. Further, it does not faithfully lower bound the joint probability, $\log p_\theta(\vx,\vy)$.

Our class of generative models neither over-prioritizes the marginals or conditionals and forms a valid lower bound on the joint density, $\log p_\theta(\vx,\vy)$. Rather than an ad-hoc solution to missing modalities, we formally motivate the KL divergence between the joint and unimodal variational posteriors by decomposing the joint variational divergence into a marginal and conditional term. Lastly, our class of models is  able to extend to likelihood-free models, which proved useful in realistic settings.

\paragraph{Hybrid Models}
A number of models combine VAEs with GANs and Flows in the unimodal setting. \cite{larsen2015autoencoding} share parameters between the GAN generator and VAE decoder to learn more robust feature representations. \cite{edupuganti2019vae} use a similar model to generate crisp images of MRI images. \cite{wan2017crossing} train a VAE and GAN (with ALI) on a bimodal task of reconstructing hand poses and depth images with a learned mapping between the VAE and GAN latent spaces. On the other hand, VAEs and flows have been combined (in the unimodal setting) to transform the variational posterior to a more complex family of distributions. \cite{rezende2015variational,tomczak2016improving,kingma2016improved} apply a series of $k$ invertible transformations to $z \sim q_\phi(z|x)$ such that $f_{\theta}^{k}(z)$ is distributed to more expressive distribution, leading to better density estimation and sample quality. To the best of our knowledge, our class of models is the first to combine VAEs, GANs, and flow models in the multimodal setting. Further, applying such models to discrete and continuous modalities, especially images and text, is novel.

\section{Conclusion}
In this paper we introduced an objective for training a family of multimodal generative models. We performed a series of experiments measuring performance on marginal density estimation and translation between modalities. We found appealing results across a set of multimodal domains, including many datasets with images and text which has not been explored before in the generative setting. In particular, we found that combining different generative families like VAEs and GANs led to strong multimodal results. Finally, we conducted a thorough analysis of the effects of missing data and weak supervision on multimodal learning. We found the proposed VAEVAE and VAEGAN to outperform other generative models under high levels of missing data. Moreover, latent features learned from image distributions under the presence of language capture a notion of compositionality that led to better performance in downstream tasks. Future work could investigate applying these ideas to modelling video \citep{tan2019factorized}, where each frame contains visual, audio, and textual information.

\bibliography{multimodal}
\bibliographystyle{icml2018}

\newpage
\appendix
\onecolumn

\section{Datasets and Hyperparameters}

For all experiments in Sec.~\ref{sec:vae_experiments}, \ref{sec:weak_sup}, we resized images to be 32 by 32 (note: this is different than the usual setup for MNIST, which uses 28 by 28). During training, we use 40 latent dimensions, a batch size of 64, and the Adam optimizer \citep{kingma2014adam}. We train all models for 200 epochs. For GAN models we take the parameters from the last epoch; for all others, we take the epoch with the highest test objective as measured on the validation set. For VAE and NVP models we use a learning rate of 2e-4; for GAN models we use 2e-4 for the generator and 1e-5 for the discriminator. We found this to be important to help fight modal collapse. For grayscale and color images, we parameterize $p(\vx|\vz)$ with a Bernoulli likelihood. For labels, we use a Bernoulli likelihood for CelebA and a Categorical likelihood otherwise For text, we again use a Categorical likelihood over all words in each sentence.

\paragraph{MNIST}
We use the MNIST hand-written digits (numbers 0 through 9) dataset \citep{lecun1998gradient} with 50,000 examples for training, 10,000 validation, 10,000 testing.

\paragraph{FashionMNIST}
This is an MNIST-like fashion dataset containing grayscale images of clothing from 10 classes---skirts, shoes, t-shirts, etc \cite{xiao2017fashion}. This is intended to be more difficult visually than MNIST. We use the same data splits as MNIST.

\paragraph{CIFAR10}
The CIFAR10 dataset contains 60,000 color images of ten classes (e.g. airplane, dog, ship). We use the same data splits as MNIST. Images in CIFAR10 contain much richer content than MNIST and FashionMNIST. As such, VAEs are known to struggle with sample quality, often generating very blurry images (as evidenced in this paper among many).

\paragraph{CelebA}
The CelebFaces and Attributes (CelebA) dataset \citep{liu2015faceattributes} contains over 200k images of celebrities. Each image is tagged with 40 attributes i.e. wears glasses, or has bangs. We use the aligned and cropped version with a selected 18 visually distinctive (and balanced) attributes, as done in \citep{perarnau2016invertible}.

\paragraph{MNIST Math} Using MNIST, we build an image-captioning dataset using arithmetic expressions that evaluate to a digit between 0 and 9 inclusive. To build a math equation, we sample a number of ``+" operators (call this $k$) between 1 and 5 uniformly. Then we randomly sample $k+1$ operands (digits between 0 and 9). Following this, we rejection sample, throwing away equations that evaluate to greater than 9. Finally, we can randomly match an image in MNIST with an equation that evaluates to its corresponding label. Training and testing splits are kept as in the original MNIST.

\paragraph{Chairs In Context} The Chairs in Context (CIC) dataset contains 78,789 utterances, each produced by a human in conversation with another human. Specifically, this data was collected using a \textit{reference game}: two participants use an online web environment where every trial, participant A is given 3 images of chairs and tasked to describe a target chair to participant B, who apriori does not know which chair is being described. In total, there are 4,511 unique chair renderings, taken from ShapeNet. In their work \citep{achlioptas2018learning}, the authors focus on investigating the varying contexts (the two chairs that were not chosen) on the utterance produced. Here, we ignore context completely, taking only the target chair image and the corresponding utterance.


\paragraph{Flickr30k} The Flickr30k dataset is a standard benchmark dataset for image captioning, containing 158k captions generated from query text used in image search (on Flickr) for 30k images. Multiple captions are used for each image. We treat each instance as an independent observation. Each caption tends to only describe one or two of the subjects (among many) in the scene.

\paragraph{COCO Captions} The Microsoft COCO \citep{lin2014microsoft} dataset contains images of complex everyday scenes of common objects in natural contexts. In total there are 328k images, each with 5 captions describing a subset of objects in the scene. Given the already large size of the dataset, we only use the first caption for each image.

\section{Evaluation}

For image and label datasets, we measure the image marginal and joint log likelihoods:
\begin{align*}
  \mathbb{E}_{p_d^{\text{test}}(\vx)}[\log p_\theta(\vx)] &= \mathbb{E}_{p_d^{\text{test}}(\vx)}[\log \mathbb{E}_{q_\phi(\vz|\vx)}[\frac{p_\theta(\vx|\vz)p(\vz)}{q_\phi(\vz|\vx)}]] \\
  \mathbb{E}_{p_d^{\text{test}}(\vx,\vy)}[\log p_\theta(\vx,\vy)] &= \mathbb{E}_{p_d^{\text{test}}(\vx,\vy)}[\log \mathbb{E}_{q_\phi(\vz|\vx,\vy)}[\frac{p_\theta(\vx|\vz)p_\theta(\vy|\vz)p(\vz)}{q_\phi(\vz|\vx,\vy)}]]
\end{align*}
where the superscript $\text{test}$ indicates an empirical distribution not seen in training.
In addition, for image and text datasets, we measure the text marginal log likelihood by:
\begin{equation}
\mathbb{E}_{p_d^{\text{test}}(\vy)}[\log p_\theta(\vy)] = \mathbb{E}_{p_d^{\text{test}}(\vy)}[\log \mathbb{E}_{q_\phi(\vz|\vy)}[\frac{p(\vz)\prod_{\vw_i \in \vy} p_\theta(\vw_i|\vz,\vec{\vw}_{-i})}{q_\phi(\vz|\vy)}]]
\label{eqn:marginaly}
\end{equation}
where an utterance is made up of a sequence of words $y = (\vw_1, \vw_2, ...)$ and $\vec{\vw}_{-i} = (\vw_1, ..., \vw_{i-1})$ represent the words up to index $i$.
In replacement of classification error, we compute conditional perplexity:
\begin{equation}
\text{perp} = \exp\{-\frac{1}{N^{\text{test}}}\sum_{j=1}^{N^{\text{test}}}\frac{1}{|y_j|}\sum_{w_i \in \vy_j} \log p_\theta(\vw_i|\vz,\vec{\vw}_{-i})\}
\label{eqn:perp}
\end{equation}
where $\vz \sim q_\phi(\vz|\vy)$ and $N^{\text{test}}$ is the size of the test set.

\section{Model Architectures}
\label{sec:arch}
We discuss a few choices for model architectures. Exact implementations will be available online. In choosing these, we prioritized quick training times and relevant designs from previous literature. Future work can explore more complex architectures and greater hyperparameter tuning. For all models, we use ReLU or LeakyReLU. For all image generative networks (in VAEs and GANs) we use a sequence of 3 to 4 de-convolutional layers followed by a convolutional layer and fully connected units. For all image inference networks, we use 3 to 4 convolutional layers. GAN discriminators are similarly designed. GAN models additionally have batch normalization after each convolutional layers (as in DCGAN). Encoding and Decoding networks for labels have 3 linear layers given that this modality is very simple. For all text based networks, we use RNNVAE style encoders and decoders \citep{bowman2015generating}; in this context, word dropout is extremely important to discourage ignoring the latent variable in this autoregressive model. For flow based models, we use the realNVP design \citep{dinh2016density} with ResNets \citep{he2016deep} in each coupling layer.

\section{Training with Unpaired Data}

As referenced by the main text, Alg.~\ref{alg:weaksup} shows how to train a multimodal generative model with unpaired data. In short, sample unimodal data to compute the ``unimodal" terms in the objective. The same algorithm can be used for VAEVAE and VAEGAN models.

\begin{algorithm}[H]
    \caption{{\footnotesize Forward Pass with Unpaired Data}}
    \label{alg:weaksup}

    \begin{algorithmic}
      \STATE {\bfseries Input:} Supervision example $(\vx, \vy)$, Unsupervised example $\vx'$, Unsupervised example $\vy'$
      \STATE $\vz', \vz_{\vx}, \vz_{\vy} = q_\phi(\vz|\vx,\vy), q_\phi(\vz|\vx), q_\phi(\vz|\vy)$
      \STATE $d_1 = D(q_\phi(\vz'|\vx,\vy) || q_\phi(\vz'|\vy))$
      \STATE $d_2 = D(q_\phi(\vz'|\vx,\vy) || q_\phi(\vz'|\vx))$
      \STATE $\mathcal{L}' = \log p_\theta(\vx|\vz') + \log p_\theta(\vy|\vz') + d_1 + d_2$
      \STATE $\mathcal{L}_\vx = \log p_\theta(\vx'|\vz_\vx) + D(q_\phi(\vz_\vx|\vx') || p(\vz))$
      \STATE $\mathcal{L}_\vy = \log p_\theta(\vy'|\vz_\vy) + D(q_\phi(\vz_\vy|\vy') || p(\vz))$
      \STATE $\mathcal{L}_{\text{WS}} = \mathcal{L}' + \mathcal{L}_\vx + \mathcal{L}_\vy$
    \end{algorithmic}
\end{algorithm}

\section{Additional Compositionality Results}
\label{sec:sup:compexp}

In the main text, we used an elaborate procedure to define the oracle representation with the goal being a representation that identifies each individual object in the image. In their original work, \citep{andreas2019measuring} suggested a much simpler framework: Define the vector embeddings as a bag-of-word features from the caption (e.g. there is a blue square and a red triangle) of a ShapeWorld image and sum over them as the composition operator. While straightforward, we found this setup lacked the information to identify individual objects. Critically, these features did not include the size, rotation, and location of objects. Because these features were missing, the bag-of-words could not distinguish between multiple red triangle in image, for example. However, to stay consistent to prior work, we also include results in Table.~\ref{table:tre:supp} of TRE as presented in \citep{andreas2019measuring}.

\begin{table}[h!]
\centering
\begin{tabular}{l|c|c}
    \toprule
    Model & TRE (L$_2$) & TRE (L$_1$)  \\
    \midrule
    VAE & $7.523 \pm 0.034$ & $81.128 \pm 0.396$ \\
    GAN+ALI & $7.513 \pm 0.031$ & $80.910 \pm 0.358$ \\
    VAEVAE & $5.953 \pm 0.033$ & $68.287 \pm 0.346$ \\
    VAEGAN & $\mathbf{2.304 \pm 0.026}$ & $26.382 \pm 0.362$ \\
    VAEGAN$^\S$ & $3.431 \pm 0.021$ & $\mathbf{14.416 \pm 0.104}$ \\
    \bottomrule
\end{tabular}
\caption{Estimating compositionality with TRE score as computed in \citep{andreas2019measuring}.}
\label{table:tre:supp}
\end{table}

\section{Sample Zoo}

Fig.~\ref{fig:imagelabel_samples},~\ref{fig:imagetext_samples} show randomly chosen samples for VAEVAE, VAEGAN, and VAENVP models on a suite of multimodal datasets. For those involving images and text, we show generated captions as well.

\begin{figure}[h!]
  \centering
  \includegraphics[width=\textwidth]{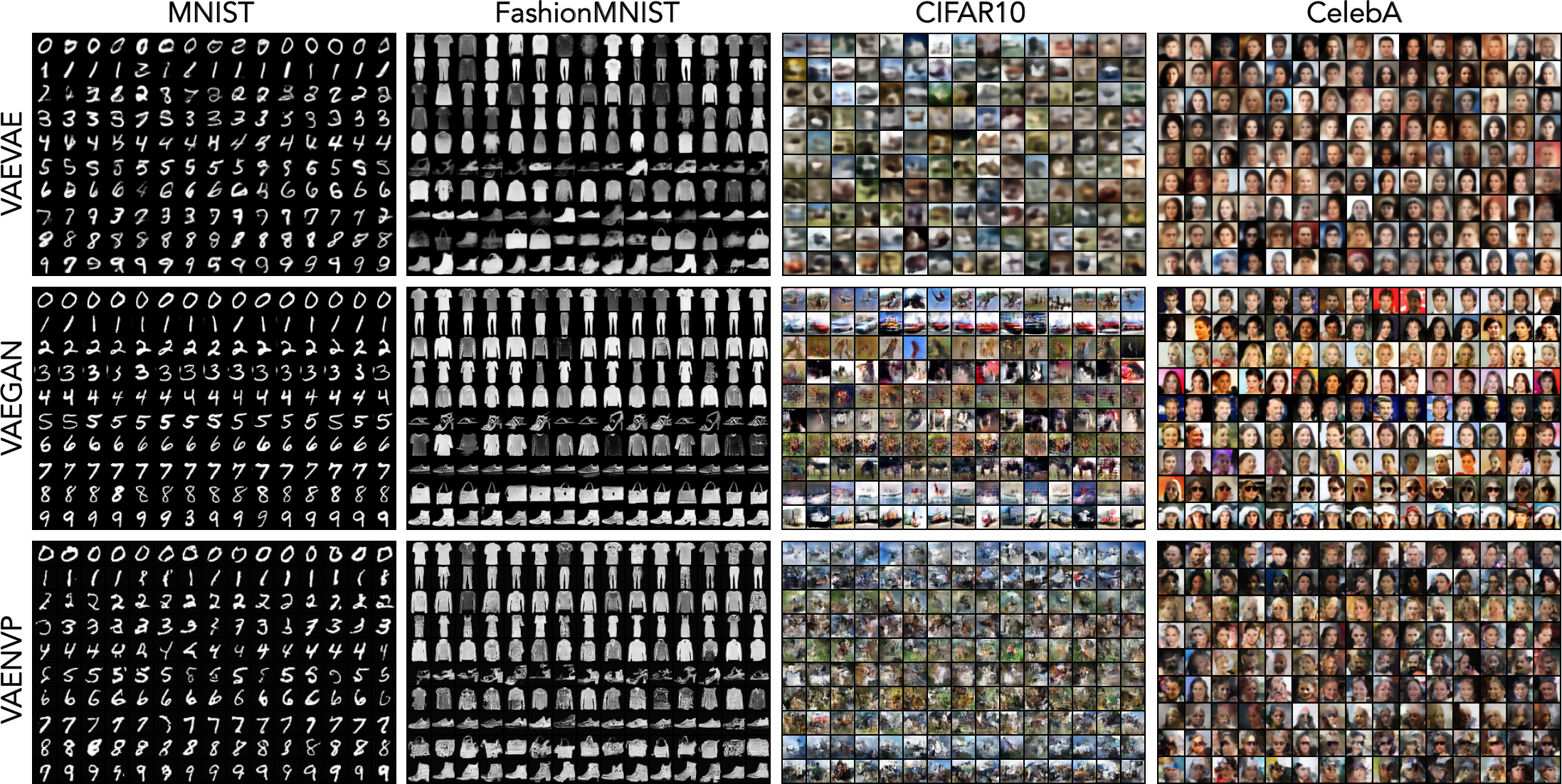}
  \caption{\textit{Image and Label Sample Zoo} -- Samples generated by VAEVAE, VAEGAN, and VAENVP models. Each row contains samples of the same label (e.g. digit number for MNIST or attribute in CelebA). VAEVAE models struggle with blurriness for richer images.}
  \label{fig:imagelabel_samples}
\end{figure}

\begin{figure}[h!]
  \centering
  \begin{subfigure}[h]{\textwidth}
      \centering
      \caption{Marginal Samples}
      \includegraphics[width=\linewidth]{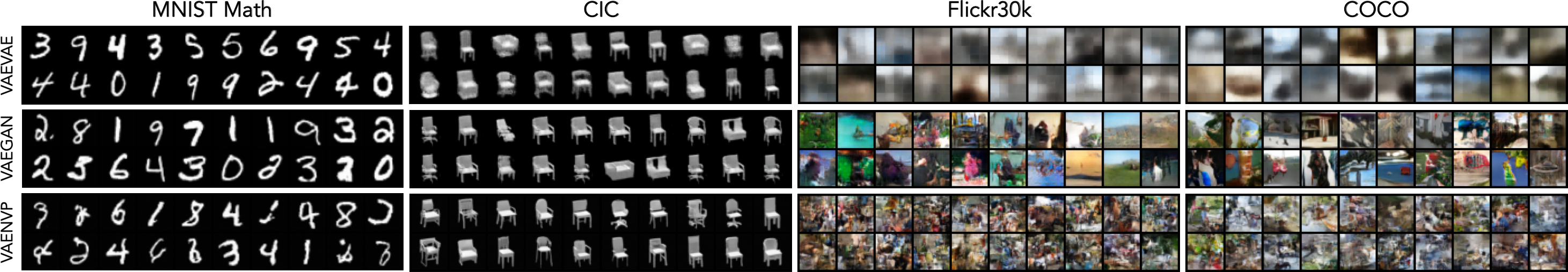}
  \end{subfigure}
  \begin{subfigure}[h]{\textwidth}
      \centering
      \caption{Conditional Samples}
      \includegraphics[width=\linewidth]{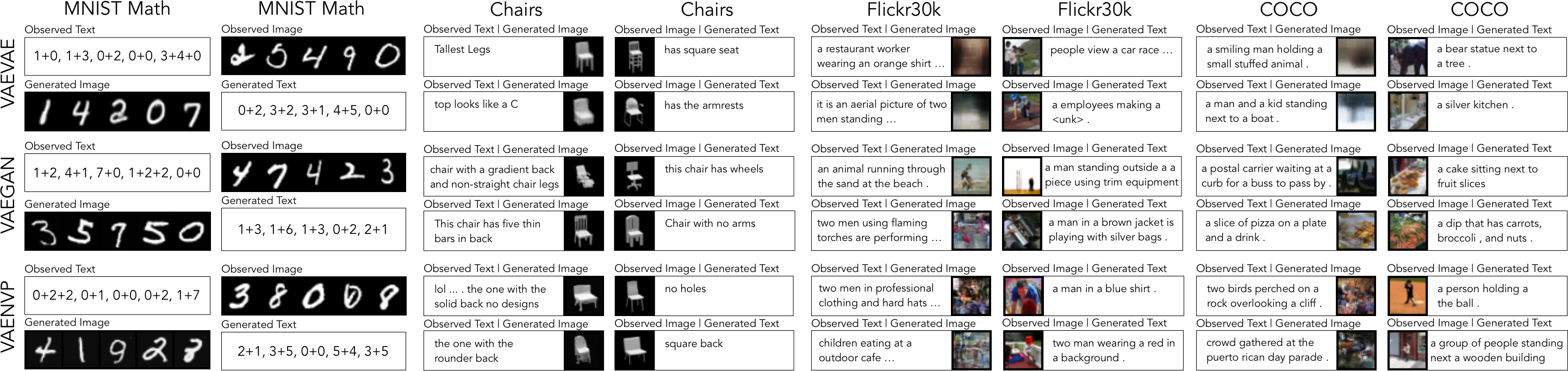}
  \end{subfigure}
  \caption{\textit{Image and Text Sample Zoo} -- Marginal samples and conditional samples generated by VAEVAE, VAEGAN, and VAENVP models. VAEGAN significantly outperforms other models in sample quality for more complex image domains.}
  \label{fig:imagetext_samples}
\end{figure}

\section{Details on Downstream Image Experiments}

\paragraph{Object Detection} When training VAEVAE or VAEGAN, we use 100 latent dimensions and reshape images to be a fixed 64 by 64 pixel size. As in previous experiments, we use a learning rate of 2e-4 and train for 200 epochs with a batch size of 64 and the Adam optimizer. We use an RNNVAE for the text generative model and a convolutional network for the image generative model. These are the same as describe in Sec.~\ref{sec:arch}.

\paragraph{Visual Relation Detection} Because the images in Visual Genome are in general more complex, we use 256 latent dimensions and reshape images to be 64 by 64 pixels. All other choices are as in object detection.

\section{Using Representations in Reference Games}
In this section of the Appendix, we investigate a possible use case of multimodal representations in grounded language learning (not presented in the main text). Future work could explore this direction in more detail.

\begin{figure}[h!]
  \centering
  \includegraphics[width=\linewidth]{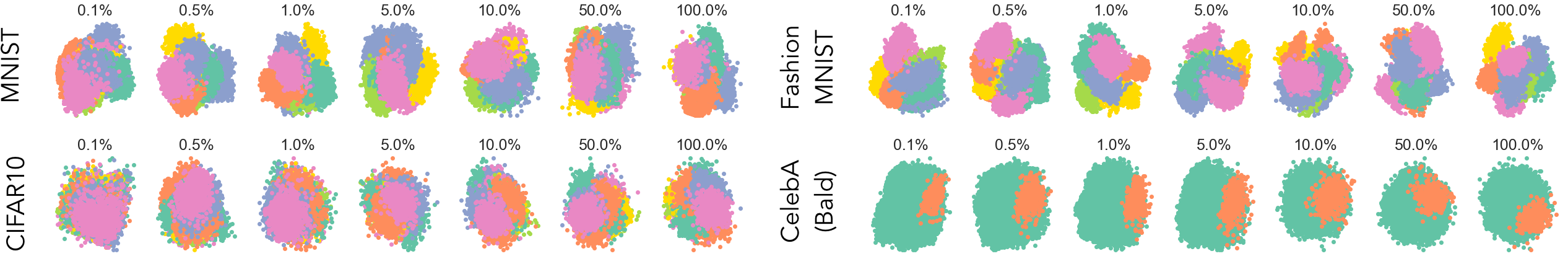}
  \caption{PCA projections of VAEVAE (joint) embeddings for varying levels of supervision across four image and label datasets. Increased supervision tends towards more well defined clusters.}
  \label{fig:rep24}
\end{figure}

One of the attractive properties of a generative model is a learned (latent) representation for a given image. Likewise, we can obtain a joint representation using a multimodal generative model. Specifically, we take the mean of the variational posterior $q_\phi(z|x,y)$ as a ``joint" encoding of $x$ and $y$. For instance, Fig.~\ref{fig:rep24} shows a 2-dimensional PCA projection of embeddings from the test set of four labelled image datasets as the amount of supervision is varied from low (0.1\%) to high (100\%). We note that the separation between classes increases (more defined clusters) as supervision increases. In particular, even with 5\% supervision (lower for simpler decision boundaries), we start to see distinct clusters form, suggesting that we can efficiently compress the underlying structure of the data domain without over-dependence on supervision. These embeddings may be useful for downstream predictive tasks. We find similar behavior for VAEGAN models.

\begin{figure}
  \centering
  \begin{subfigure}[h]{0.40\textwidth}
      \caption{}
      \includegraphics[width=0.8\textwidth]{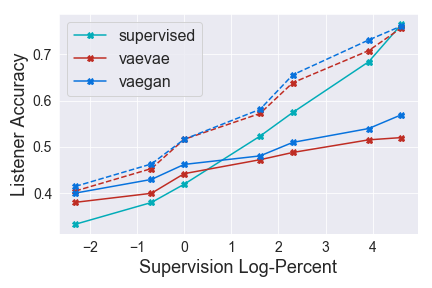}
  \end{subfigure}
  \begin{subfigure}[h]{0.51\textwidth}
      \caption{}
      \includegraphics[width=\linewidth]{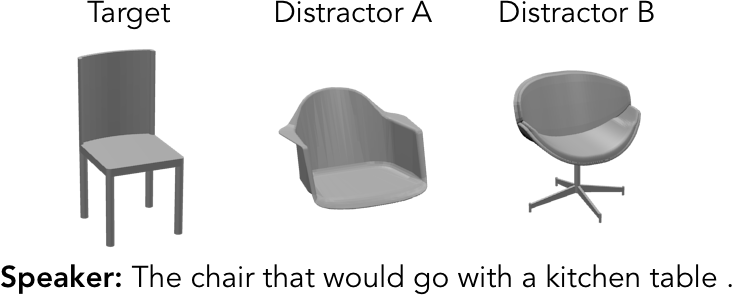}
  \end{subfigure}
  \caption{(a) Weak supervision experiment for  listeners in a Chairs reference game. (b) Example of a target chair (chosen by listener), two distractor chairs, and a speaker utterance.}
  \label{fig:refgame}
\end{figure}

One of the attractive properties of our generative model is a multimodal representation that may be used for downstream tasks. For example, we can apply
the learned representation to reference games \citep{achlioptas2019shapeglot,cohn2018pragmatically,hawkins2017convention} as used in computational cognitive science. Reference games are an experiment tool for studying human pragmatic reasoning in a controlled setting: there are two agents: a listener and a speaker. Given a context of 3 possible images, the speaker's task is to describe one of them. The listener's role is to guess the speaker's intended referent from the set of three. A large branch of research studies how to model the listener and speaker as Bayesian computational models \citep{goodman2016pragmatic}. More recent work \citep{cohn2018pragmatically} has instead parameterized the listener and speaker as deep neural networks. Within reference games, paired data is collected by human agents using Mechanical Turk, meaning that common dataset sizes are limited to a few thousand at best. This is not enough to train many deep models and therefore, a perfect use case for weakly supervised learning.

Here, we begin to study applying multimodal generative models to model listener behavior. To gauge performance, we measure how well a computational model can choose the ``correct" image out of three given a speaker utterance as compared to human listener choices. In particular, the Chairs in Context (CIC) dataset was released as a reference game dataset. In our experiments above, we only used the utterance and target image but the full dataset contains a speaker utterance, a set of three chair images, and the (``target") chair chosen by a listener. We only consider set of images in the \textit{far} context, meaning the three chairs have visually distinctive features. Future work should explore the more difficult \textit{close} context. In Fig.~\ref{fig:refgame}, we compare the accuracy of a supervised neural model as used in reference game literature (trained specifically for the reference game task) and a VAEVAE and VAEGAN as we vary the amount of paired data. With full supervision, the dataset contains 67k examples. To make a prediction for which chair the speaker is referencing using a generative model, we take $\arg\max_{i \in \{1,2,3\}} p_\theta(\vx=\vx_i|\vz=\mathbf{\mu})$ where $\mathbf{\mu}$ is the mean for posterior $q_\phi(\vz|\vy)$. The set $\{\vx_1, \vx_2, \vx_3\}$ represents the images in context and $\vy$ represents the speaker utterance. Fig.~\ref{fig:refgame} shows that a VAEVAE and VAEGAN outperform a supervised model in the low data regime (around 100 to 1000 examples). Beyond 1000 examples, supervised networks specifically trained as listeners in a reference game out-perform the generative models. In some sense, it is not fair to compare these generative models and the supervised model since only the latter was trained with knowledge of the reference game setup. Perhaps a more fair comparison would be to use the unsupervised embeddings learned by VAEVAE or VAEGAN and fit a small network to play the reference game.
The dotted lines in Fig.~\ref{fig:refgame} show exactly this; we can see the performance is much higher than the generative models alone and higher than the supervised baseline as well. Future work should investigate other reference games \citep{monroe2017colors} as well as possibility of using unpaired text and images for more efficient learning. The ability to learn listeners with less data could save time and resources for many future cognitive studies.

\end{document}